\documentclass[]{style}
\usepackage{float}
\usepackage{graphicx}
\usepackage{titletoc}

\newcommand{\Phisst}{\Phi_{s,t}}
\newcommand{\Xst}{X_{s,t}}
\newcommand{\Estt}{\mathcal{E}_{s,t}}
\newcommand{\psist}{\psi_{s,t}}

\newcommand{\xseps}{\boldsymbol{x}_{s}^{\eps}}
\newcommand{\peps}{p_{\eps| s,t}}
\newcommand{\bx}{\boldsymbol{x}}
\newcommand{\by}{\boldsymbol{y}}
\newcommand{\bz}{\boldsymbol{z}}
\newcommand{\be}{\boldsymbol{e}}
\newcommand{\eps}{\epsilon}
\newcommand{\beps}{\boldsymbol{\epsilon}}

\newcommand{\E}{\mathcal{E}}
\newcommand{\Real}{\mathbb{R}}

\usepackage[english]{babel}
\usepackage{xcolor}
\usepackage{color}
\usepackage{wrapfig}
\usepackage[table]{xcolor}

\usepackage{amsmath}
\usepackage{amssymb}
\usepackage{amsthm}
\usepackage{graphicx}
\usepackage{booktabs}      
\usepackage{array}
\usepackage{multirow}      
\usepackage{caption}  
\usepackage{amsmath}
\usepackage[table]{xcolor}  
\usepackage{xcolor}
\usepackage{microtype}
\usepackage{parskip}

\definecolor{bg}{HTML}{E7E7FE}
\definecolor{accent}{HTML}{6B67EE}
\definecolor{accent-bright}{HTML}{6B67EE}
\definecolor{mist}{HTML}{F0F0F8}
\definecolor{panelborder}{HTML}{9998B3}
\definecolor{muted}{HTML}{4a4a6a}

\newtcolorbox{figurepanel}{
    enhanced,
    colback=mist,
    colframe=panelborder,
    boxrule=0.8pt,
    arc=6pt,
    left=10pt,right=10pt,top=10pt,bottom=10pt
}

\newtcolorbox{samplebox}{
    enhanced,
    colback=accent!4!bg!60!white,
    colframe=accent!55!black,
    boxrule=0.6pt,
    arc=4pt,
    left=8pt,right=8pt,top=7pt,bottom=7pt
}
\definecolor{mybg}{HTML}{EAE9FF} 
\definecolor{outline}{HTML}{9793F8} 
\newtcolorbox{borderbox}[1][]{
    colback=mybg,
    colframe=outline,
    top=2pt, left=2pt, right=2pt, bottom=2pt,
    #1
}

\newcommand{\nfeheader}[1]{%
    {\sffamily\bfseries\large #1}
}
\newcommand{\methodtitle}[1]{%
    {\bfseries #1}
}
\newcommand{\metrics}[2]{%
    {\color{muted} Gen.\ PPL: \textbf{#1} \\
    \hfill Entropy: \textbf{#2}}
}

\usepackage{algorithm}
\usepackage{algpseudocode}

\usepackage[noEnd=false]{algpseudocodex}

\include{math_commands}
\usepackage{thmtools}
\usepackage{thm-restate}

\newtcolorbox{propbox}{
    enhanced,
    colback=accent!4!bg!60!white,
    colframe=accent, 
    boxrule=0pt,
    leftrule=0pt,      
    arc=0pt,
    left=8pt, right=8pt, top=8pt, bottom=8pt,
    fonttitle=\bfseries\sffamily,
    coltitle=accent-bright
}
\tcolorboxenvironment{proposition}{enhanced,
    colback=accent!4!bg!60!white,colframe=accent,boxrule=0pt,leftrule=0pt,arc=0pt,
    left=8pt, right=8pt, top=8pt,bottom=8pt,fonttitle=\bfseries\sffamily,coltitle=accent-bright
}
\tcolorboxenvironment{remark}{enhanced,
    colback=accent!4!bg!60!white,colframe=accent,boxrule=0pt,leftrule=0pt,arc=0pt,
    left=8pt, right=8pt, top=8pt,bottom=8pt,fonttitle=\bfseries\sffamily,coltitle=accent-bright
}
\tcolorboxenvironment{definition}{enhanced,
    colback=accent!4!bg!60!white,colframe=accent,boxrule=0pt,leftrule=0pt,arc=0pt,
    left=8pt, right=8pt, top=8pt,bottom=8pt,fonttitle=\bfseries\sffamily,coltitle=accent-bright
}
\tcolorboxenvironment{box*}{enhanced,
    colback=accent!4!bg!60!white,colframe=accent,boxrule=0pt,leftrule=0pt,arc=0pt,
    left=8pt, right=8pt, top=8pt,bottom=8pt,fonttitle=\bfseries\sffamily,coltitle=accent-bright
}
\tcolorboxenvironment{lemma}{enhanced,
    colback=accent!4!bg!60!white,colframe=accent,boxrule=0pt,leftrule=0pt,arc=0pt,
    left=8pt, right=8pt, top=8pt,bottom=8pt,fonttitle=\bfseries\sffamily,coltitle=accent-bright
}
\tcolorboxenvironment{restatable}{enhanced,
    colback=accent!4!bg!60!white,colframe=accent,boxrule=0pt,leftrule=0pt,arc=0pt,
    left=8pt, right=8pt, top=8pt,bottom=8pt,fonttitle=\bfseries\sffamily,coltitle=accent-bright
}
\tcolorboxenvironment{theorem}{enhanced,
    colback=accent!4!bg!60!white,colframe=accent,boxrule=0pt,leftrule=0pt,arc=0pt,
    left=8pt, right=8pt, top=8pt,bottom=8pt,fonttitle=\bfseries\sffamily,coltitle=accent-bright
}
\tcolorboxenvironment{corollary}{enhanced,
    colback=accent!4!bg!60!white,colframe=accent,boxrule=0pt,leftrule=0pt,arc=0pt,
    left=8pt, right=8pt, top=8pt,bottom=8pt,fonttitle=\bfseries\sffamily,coltitle=accent-bright
}

\declaretheorem[
  name=Definition,
  numberwithin=section,
  refname={Definition,Definition},
  Refname={Definition,Definition}
]{definition}

\declaretheorem[]{box*}

\declaretheorem[
  name=Remark,
  numberwithin=section,
  refname={Remark,Remarks},
  Refname={Remark,Remarks}
]{remark}
\declaretheorem[
  name=Assumption,
  numberwithin=section,
  refname={Assumption,Assumptions},
  Refname={Assumption,Assumptions}
]{assumption}

\title{\fontsize{0.6cm}{0.72cm}\selectfont Expanding Flow Maps}

\author[1]{Sophia Tang}
\author[1,2]{\, Pranam Chatterjee}

\affiliation[1]{Department of Computer and Information Science, University of Pennsylvania}
\affiliation[2]{Department of Bioengineering, University of Pennsylvania}

\abstract{Flow-based generative models have enabled remarkable progress in fast and controllable generation across continuous and discrete state spaces, yet existing parameterizations are constrained to fixed dimensions or fixed sequence lengths. Here, we introduce \textbf{Expanding Generative Flows} (EFlows), which define flows between distributions of \textit{increasing dimensionality} along an expanding interpolant that grows the state by augmenting it with conditional noise. Building on this construction, we propose \textbf{Expanding Flow Maps} (EFMs), a new class of flow maps that distill the expanding interpolant into efficient few-step generative models. Each EFM factors the map between any two timesteps into two learnable operations: an \textit{expand operator}, which augments the state space with new coordinates or tokens conditioned on the current state, and a \textit{transport map}, which pushes the expanded state forward along the interpolant. Composing these operators yields a single map that jointly \textit{expands} and \textit{denoises} the state, recovering existing fixed-canvas flows and flow maps as the special case in which the expand operator is the identity. We further extend the framework to the discrete simplex, enabling variable-size graph generation and variable-length sequence generation. Across both continuous and discrete modalities, we establish EFlows and EFMs as a principled framework for settings in which output size is itself a learned, controllable degree of freedom.

\vspace{0.5em}%
\textbf{\sffamily\bfseries Correspondance: }\href{sophtang@engineering.upenn.edu}{\texttt{sophtang@engineering.upenn.edu}}, \href{pranam@upenn.edu}{\texttt{pranam@upenn.edu}}}
\begin{document}
\maketitle

\renewcommand{\footnoterule}{%
  \kern -3pt
  \hrule width \linewidth
  \kern 2.6pt
}
\vspace{-0.2cm}


\section{Introduction}
Flow maps have recently emerged as a unifying framework for few-step generative modeling, supporting both continuous \citep{boffi2025build} and discrete \citep{roos2026categorical, lee2026flow, potaptchik2026discrete} state spaces. Building on the success of score \citep{song2020score, ho2020denoising} and flow-based \citep{lipman2022flow, tong2023improving, albergo2025stochastic} models, flow maps cast generation as a continuous-time interpolant between noise and data and learn to parameterize discrete jumps along the ODE trajectory between any pair of timesteps \citep{boffi2025build, geng2025mean, song2023consistency, kim2023consistency}. They are typically trained either by \textit{distillation from a teacher flow} \citep{boffi2024flow, frans2024one, zhou2025inductive, salimans2024multistep} or by \textit{self-distillation from data} \citep{boffi2025build}, enabling sampling in as few as one or two function evaluations while retaining the modeling flexibility of their multi-step counterparts.

Despite this progress, existing flow maps are fundamentally restricted to generation on a \textit{fixed canvas}: a continuous state space of fixed dimensionality $d$, or a discrete state space of fixed sequence length $L$. This rigid constraint conflicts with the structure of many real-world generative tasks, where the size of the output is itself a quantity to be modeled -- for example, variable-resolution images and 3D shapes, audio and video of arbitrary duration, language with unknown sequence length, and multi-modal data that interleaves modalities of differing intrinsic dimensionality. Addressing these settings requires a generative framework whose state space can \textit{grow} during inference, rather than being committed to at initialization. This raises the central question motivating this work:

\begin{center}
    \textit{How can we enable few-step generation of both continuous and discrete data with adaptive dimensionality at inference?}
\end{center}

We address this question by introducing \textbf{Expanding Flow Maps} (EFMs), a general framework for few-step generative modeling over both continuous and discrete data with variable dimensionality and sequence length. The core idea is to factor the flow map between any two timesteps into two learnable operations: an \textit{expand operator}, which expands the state space by inserting new coordinates or tokens, and a \textit{transport map}, which moves the expanded state forward along the interpolant. Composing these operators yields a single map that jointly \textit{expands} and \textit{denoises} the state, positioning standard fixed-dimensional flow maps \citep{boffi2025build, roos2026categorical, lee2026flow, potaptchik2026discrete} as the special case in which the expand operator is the identity.

Our \textbf{main contributions} can be summarized in the following points:
\begin{enumerate}
    \item \textbf{Expanding Generative Flows:} We introduce \textit{Expanding Generative Flows} (EFlows), a class of generative models defined on an \textit{expanding interpolant} between distributions of increasing dimensionality. We formulate it as a piecewise-deterministic Markov process consisting of a transport term that describes smooth denoising and a jump kernel that describes jumping to higher dimensionalities.
    \item \textbf{Expanding Flow Maps:} We derive the \textit{Expanding Flow Map} (EFM), which enables jumping between arbitrary points along the expanding interpolant by composing an \textit{expand operator}, which augments a state with conditional noise, and a \textit{transport map}, which pushes the augmented state forward over the time interval toward the target distribution.
    \item \textbf{Discrete Expanding Flows and Maps:} We extend expanding generative flows and flow maps to the discrete simplex space, formulating expansion as insertions along a sequence or graph and unlocking few-step variable-length discrete generation in regimes where standard flow-based methods do not apply.
\end{enumerate}

Through experiments on molecular conformer generation, discrete molecular graph generation, and language modeling, we establish EFlow and EFMs as a \textit{single, unified recipe} for variable-dimensional generation across domains and at any sampling budget. Further discussion of related works is provided in App \ref{app:related-works}.

\begin{figure*}
    \centering
    \includegraphics[width=\linewidth]{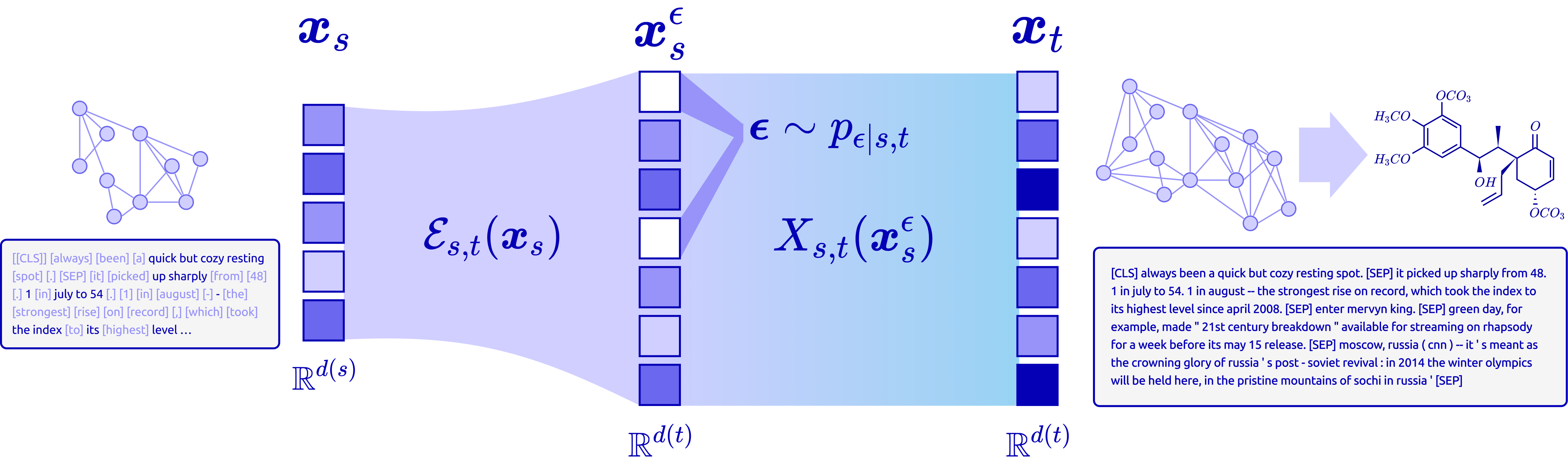}
    \caption{\textbf{Expanding Flow Maps.} Step from lower-dimensional space on $\mathbb{R}^{d(s)}$ at time $s$ to a higher dimensional space $\mathbb{R}^{d(t)}$ at time $t$ by \textbf{(i)} applying the expand operator $\mathcal{E}_{s,t}$ with augmented noise $\boldsymbol{\epsilon}\sim p_{\epsilon|s,t}$ and \textbf{(ii)} applying the transport map $X_{s,t}$ in the augmented space.}
    \label{fig:efm}
\end{figure*}

\section{Preliminaries}
\paragraph{Generative Flows and Stochastic Interpolants}
A unifying perspective on flow and diffusion models is provided by the \textit{stochastic interpolants} framework \citep{albergo2025stochastic, albergo2022building}, which constructs a continuous-time path between a prior $p_0$ and the data distribution $p_1 = p_{\text{data}}$. Given endpoints $\boldsymbol{x}_0 \sim p_0$ and $\boldsymbol{x}_1 \sim p_1$, an interpolant is a stochastic process:
\begin{align}
    \boldsymbol{x}_t = I_t(\boldsymbol{x}_0, \boldsymbol{x}_1),\quad \dot{\boldsymbol{x}}_t=b_t(\boldsymbol{x}_t), \quad t \in [0,1],
\end{align}
where $\boldsymbol{x}_{t=0}=\boldsymbol{x}_0\sim p_0$ and $\boldsymbol{x}_{t=1}=\boldsymbol{x}_1\sim p_1$ with intermediate marginals $\boldsymbol{x}_t\sim p_t=\text{Law}(I_t)$. The standard \textit{linear interpolant} is defined as $I_t(\boldsymbol{x}_0, \boldsymbol{x}_1)=(1-t)\boldsymbol{x}_0+t\boldsymbol{x}_1$. The velocity field $b_t(\boldsymbol{x})$ given a point $\boldsymbol{x}$ can be obtained as the conditional expectation of the time derivative of the interpolant at $\boldsymbol{x}_t=\boldsymbol{x}$:
\begin{align}
    b_t(\boldsymbol{x})=\mathbb{E}[\dot{\boldsymbol{x}}_t|\boldsymbol{x}_t=\boldsymbol{x}]
\end{align}
which transports samples from $p_0$ to $p_1$ along the ODE $d\boldsymbol{x}_t=b_t(\boldsymbol{x}_t)dt$ or SDE $d\boldsymbol{X}_t=b_t(\boldsymbol{X}_t)dt+\sigma_td\boldsymbol{B}_t$ with diffusion coefficient $\sigma_t$. To parameterize the velocity field given two empirical distributions, we minimize the \textit{conditional flow-matching objective} \citep{lipman2022flow, albergo2022building}:
\begin{align}
    \mathcal{L}_{\text{CFM}}(\hat{b}):=\int_0^1\mathbb{E}_{\bx_0,\bx_1}\left[\big\|\hat{b}_t(I_t(\boldsymbol{x}_0, \boldsymbol{x}_1))-\partial_tI_t(\boldsymbol{x}_0, \boldsymbol{x}_1)\big\|^2\right]dt\label{eq:cfm-loss}
\end{align}
where the expectation is over $(\boldsymbol{x}_0, \boldsymbol{x}_1)\sim p_0\otimes p_1$.

\paragraph{Flow Maps for Trajectory Distillation}
Numerically integrating the ODE or SDE above typically requires many small steps to accurately approximate the continuous-time dynamics, so a natural goal is to minimize the number of function evaluations needed to generate a target sample. This motivates consistency models \citep{song2023consistency, song2023improved} and flow maps \citep{boffi2025build, sabour2025align, geng2025mean, frans2024one, guo2025splitmeanflow}, which aim to distill the many-step integration process into a few-step process. Concretely, a flow map is a deterministic map $X_{s,t}:\mathbb{R}^d\to \mathbb{R}^d$ that directly outputs the state of the ODE at time $t$ given its state at an earlier time $s$:
\begin{align}
    \quad X_{s,t}(\boldsymbol{x}_s)=\boldsymbol{x}_t=\boldsymbol{x}_s+(t-s)v_{s,t}(\boldsymbol{x}_s), \quad \forall s,t \in [0,1]\label{eq:continuous-fm}
\end{align}
which can be defined as taking a step of size $(t-s)$ using the \textit{mean velocity} $v_{s,t}(\boldsymbol{x}_s)$ from $\boldsymbol{x}_s$. When $t=s$, the mean velocity reduces to the instantaneous velocity $v_{s,s}(\boldsymbol{x}_s)=b_s(\boldsymbol{x}_s)$ which can be trained with the standard CFM loss in (\ref{eq:cfm-loss}), referred to as the \textit{diagonal loss} $\mathcal{L}_{\text{diag}}$ as it trains the mean velocity along the diagonal where $t=s$. To parameterize $v_{s,t}(\boldsymbol{x}_s)$ for when $t\neq s$, we consider three equivalent conditions of the flow map $X_{s,t}$:
\begin{subequations}
\label{eq:consistency-constraint}
\begin{align}
    \textit{Lagrangian Condition:}\quad &\partial_tX_{s,t}(\boldsymbol{x}_s)=v_{t,t}(X_{s,t}(\boldsymbol{x}))\\
    \textit{Eulerian Condition:}\quad &\partial_sX_{s,t}(\boldsymbol{x})+v_{s,s}(\boldsymbol{x})\cdot\nabla_{\boldsymbol{x}_s} X_{s,t}(\boldsymbol{x})=0\\
    \textit{Semigroup Condition:}\quad &X_{u,t}(X_{s,u}(\boldsymbol{x}))=X_{s,t}(\boldsymbol{x})
\end{align}
\end{subequations}
for all $\boldsymbol{x}\in \mathbb{R}^d$ and $s,u,t\in [0,1]$. These conditions each translate into \textit{consistency objectives} $\mathcal{L}_{\text{cons}}$ by setting the condition equal to 0 and taking the squared residual with respect to the parameterized velocity $\hat{v}_{s,t}:\mathbb{R}^d\to \mathbb{R}^d$. Then, the full training objective becomes the sum of the \textit{diagonal} and \textit{consistency} objectives $\mathcal{L}_{\text{FM}}(\hat{v})=\mathcal{L}_{\text{diag}}(\hat{v})+\mathcal{L}_{\text{cons}}(\hat{v})$.

\paragraph{Flow Maps for Discrete Data}
Recently, flow maps have been extended to the discrete state space \citep{roos2026categorical, lee2026flow, potaptchik2026discrete}, where the states are defined as vectors on the simplex space $\Delta^{V-1}$ with vocabulary $|\mathcal{V}|=V$. A sequence of categorical distributions over the simplex is denoted $\boldsymbol{x}_t\in \mathcal{V}^L$. Representing discrete data as continuous distributions over the simplex enables the repurposing of standard flow-map objectives in the discrete setting. The equivalent parameterization of the mean velocity on the discrete state space is the \textit{mean denoiser} $\psi_{s,t}:\mathbb{R}^{L\times V}\to \mathbb{R}^{L\times V}$ which reduces to the standard denoiser on the diagonal $t=s$ and is defined by taking a full $(1-s)$ step in the direction of the mean velocity $v_{s,t}$ to the denoised state:
\begin{align}
    \psi_{s,t}(\boldsymbol{x})=\boldsymbol{x}+(1-s)v_{s,t}(\boldsymbol{x}), \quad v_{s,t}(\boldsymbol{x})=\frac{\psi_{s,t}(\boldsymbol{x})-\boldsymbol{x}}{1-s}\label{eq:mean-denoiser}
\end{align}
which yields the discrete analog of the flow map by substituting $v_{s,t}$ into (\ref{eq:continuous-fm}):
\begin{align}
    X_{s,t}(\boldsymbol{x})=\frac{1-t}{1-s}\boldsymbol{x}+\frac{t-s}{1-s}\psi_{s,t}(\boldsymbol{x}), \quad \forall s,t\in [0,1]
\end{align}
Crucially, the mean denoiser can be written as a weighted sum over the conditional expectation of the data distribution and lies on the simplex $\psi_{s,t}\in \Delta^{V-1}$, allowing it to be trained via the \textit{cross-entropy} analogs of the diagonal and consistency objectives:
\begin{align}
    \mathcal{L}_{\text{diag}}(\hat{\psi})=-\sum_{i=1}^LD_t(\boldsymbol{x}_t^i)\cdot\log \hat{\psi}_{t,t}(\boldsymbol{x}_t^i), \quad \mathcal{L}_{\text{cons}}(\hat{\psi})=-\sum_{i=1}^L\bar{\psi}_{s,t}\cdot\log \hat{\psi}_{s,t}(\boldsymbol{x}_s^i)
\end{align}
where $\bar{\psi}_{s,t}$ is defined via the discrete analogs of the \textit{lagrangian}, \textit{Eulerian}, and \textit{semigroup} conditions \citep{lee2026flow, potaptchik2026discrete}. 

\section{Expanding Generative Flows}
\label{sec:expanding-flows}
The \textbf{key limitation} of standard continuous flows is that the state space is fixed at initialization: $\boldsymbol{x}_0\in \mathbb{R}^d$ with fixed $d$ in the continuous state space and $\boldsymbol{x}_0\in \mathbb{R}^{L\times V}$ with fixed length $L$ in the discrete sequence space. We overcome this limitation by presenting a rigorous treatment of generative flows between distributions of \textit{different dimensionality}, specifically the setting where the source distribution has \textit{lower dimensionality} than the target distribution. We construct this in three pieces: an \textit{expanding interpolant} that grows the state along a dimension schedule, an \textit{expand operator} that lifts the source into the target space, and a \textit{piecewise deterministic Markov process (PDMP)} characterization that unifies expansion and transport into a single well-defined generative process.

\paragraph{Expanding Interpolant}
To accommodate generative tasks in which the size of the output is itself variable, we generalize the standard flow interpolant construction to allow the dimension to increase along the trajectory. Concretely, we replace the fixed dimension $d$ with a non-decreasing dimension schedule $d(t):[0,1]\to \mathbb{N}$ and let $\boldsymbol{x}_t\in \mathbb{R}^{d(t)}$, so that the marginal $p_t\in \mathcal{P}(\mathbb{R}^{d(t)})$ is defined on the state space whose dimensionality depends on $t$.

\begin{borderbox}
\begin{center}
\textbf{Generative Flow with Expanding Dimensionality}
\begin{align}
    \mathcal{P}(\mathbb{R}^{d(0)})\ni p_0\to \dots \to p_s\to\dots \to  p_t\to\dots \to  p_1\in \mathcal{P}(\mathbb{R}^{d(1)})
\end{align}
where $d(s)\leq d(t)$ for all $ 0\leq s < t\leq 1$.
\end{center}
\end{borderbox}

The central challenge of defining this type of flow is that, for $s<t$ with $d(s)<d(t)$, the source state $\boldsymbol{x}_s\in \mathbb{R}^{d(s)}$ and the target state $\boldsymbol{x}_t\in \mathbb{R}^{d(t)}$ live in spaces of different dimension. Consequently, there is no diffeomorphism $\mathbb{R}^{d(s)}\to \mathbb{R}^{d(t)}$ and no fixed-dimension velocity field that can transport $p_s$ to $p_t$. To resolve this, we \textit{augment} the source state with additional coordinates of noise drawn from a tractable conditional distribution $p_{\epsilon|s,t}\in \mathcal{P}(\mathbb{R}^{d(t)-d(s)})$, lifting the source to the target dimensionality before applying a transport map.

\begin{remark}[Expanding Interpolants Generalize Standard Generative Interpolants]
    The standard fixed-dimensional flow ODE is the special case of the expanding generative flow in which the dimension $\mathbb{R}^{d(t)}$ is constant over $t\in [0,1]$.
\end{remark}

\paragraph{Expand Operator}
To parameterize the expanding generative flow, we define a \textit{two-stage} process that, at each discrete time step, expands the state space with augmented noise coordinates and then integrates the interpolant from the augmented state to the target time step. The first stage is carried out by the \textit{expand operator} $\mathcal{E}_{s,t}$.
\begin{borderbox}
    \begin{center}
        \textbf{Expand Operator}
        \begin{align}
            \mathcal{E}_{s,t}(\boldsymbol{x}_s,\boldsymbol{\epsilon}): \mathbb{R}^{d(s)}\times\mathbb{R}^{d(t)-d(s)}\to\mathbb{R}^{d(t)}, \quad \boldsymbol{\epsilon}\sim p_{\epsilon|s,t}\in \mathbb{R}^{d(t)-d(s)}\label{eq:expand-operator}
        \end{align}
        which defines the placement scheme determining where the coordinates of $\boldsymbol{\epsilon}$ are inserted relative to $\bx_s$.
    \end{center}
\end{borderbox}

The augmented noise variable $\boldsymbol{\epsilon}$ adds new coordinates to the growing state space, which can then be mapped to a sample from the target distribution $p_t$. The expand operator $\mathcal{E}_{s,t}$ combines the existing coordinates of $\boldsymbol{x}_s$ with the latent coordinates of $\boldsymbol{\epsilon}$ according to a placement scheme specifying where in the augmented state the new coordinates appear. $\mathcal{E}_{s,t}$ can take several instantiations, including:
\begin{enumerate}
    \item [(i)] \textbf{Concatenation}: $\mathcal{E}_{s,t}(\boldsymbol{x}_s, \boldsymbol{\epsilon})=\text{Cat}(\boldsymbol{x}_s, \boldsymbol{\epsilon})$ appends the new coordinates to the end of the state. This is the natural instantiation for autoregressive sequence generation or for appending points to a point cloud.
    \item[(ii)] \textbf{Positional Insertion}: $\mathcal{E}_{s,t}(\boldsymbol{x}_s, \boldsymbol{\epsilon})$ interleaves the entries of $\boldsymbol{\epsilon}$ at a subset of positions within $\{1, \dots, d(t)\}$ in the augmented state. This instantiation is useful for any-length sequence generation or infilling.
    \item[(iii)] \textbf{Child Expansion}: $\mathcal{E}_{s,t}(\boldsymbol{x}_s, \boldsymbol{\epsilon})$ attaches each new coordinate to a parent entry $\mathrm{pa}(i)$ of $\boldsymbol{x}_s$ and initializes it relative to that parent, $\boldsymbol{x}^\epsilon_s[i]=\boldsymbol{x}_s[\mathrm{pa}(i)]+\sigma\boldsymbol{\epsilon}_i$. This instantiation is suitable for coarse-to-fine generation on structured states.
\end{enumerate}
The augmented noise and its placement may be deterministic (e.g., always concatenating i.i.d.\ Gaussian noise to the end) or predicted via a parameterized network $\hat{\mathcal{E}}$ (e.g., predicting the number of insertions at each gap). In all cases, the expand operator lifts the state to $\mathbb{R}^{d(t)}$ so that transport along the interpolant from $s$ to $t$ can proceed via a $d(t)$-dimensional velocity field. For simplicity, we drop the explicit noise argument and write $\boldsymbol{x}^\epsilon_s:=\mathcal{E}_{s,t}(\boldsymbol{x}_s)\equiv\mathcal{E}_{s,t}(\boldsymbol{x}_s, \boldsymbol{\epsilon})$.

\begin{restatable}[Expand Operator is Pushforward of Noise Law]{proposition}{pushforward}\label{prop:pushforward}
    Fix an expand operator $\mathcal{E}_{s,t}:\mathbb{R}^{d(s)}\times\mathbb{R}^{d(t)-d(s)}\to \mathbb{R}^{d(t)}$ that places source and noise coordinates according to one of the instantiations. Then, the resulting conditional law is:
    \begin{align}
        p_{t|s}(\cdot\mid\bx_s)=(\mathcal{E}_{s,t}(\bx_s, \cdot))_{\#}p_{\epsilon|s,t}(\cdot\mid\bx_s)\label{eq:expanding-cond-law}
    \end{align}
\end{restatable}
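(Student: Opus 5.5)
The proposition is essentially a change-of-variables statement, so the plan is to read off the law of $\bx_s^\epsilon=\mathcal{E}_{s,t}(\bx_s,\beps)$ conditional on $\bx_s$. Here $p_{t|s}(\cdot\mid\bx_s)$ denotes the conditional law of the augmented state produced by the expand stage. I will fix $\bx_s\in\mathbb{R}^{d(s)}$ and let $\beps\sim p_{\epsilon|s,t}(\cdot\mid\bx_s)$ be drawn independently of everything except through its conditioning on $\bx_s$. The random element of $\mathbb{R}^{d(t)}$ is then the image of $\beps$ under the map $\phi_{\bx_s}:=\mathcal{E}_{s,t}(\bx_s,\cdot):\mathbb{R}^{d(t)-d(s)}\to\mathbb{R}^{d(t)}$.

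The first step is to check that $\phi_{\bx_s}$ is Borel measurable, so that the pushforward is well defined. I will do this separately for the three instantiations, and in each case $\phi_{\bx_s}$ is affine in $\beps$.
\begin{enumerate}
\item[(i)] For concatenation, $\phi_{\bx_s}(\beps)=(\bx_s,\beps)$.
\item[(ii)] For positional insertion with a fixed set of insertion positions $S\subset\{1,\dots,d(t)\}$, $\phi_{\bx_s}(\beps)=P_S(\bx_s,\beps)$, where $P_S$ is a permutation matrix.
\item[(iii)] For child expansion, the new coordinates are $\bx_s[\mathrm{pa}(i)]+\sigma\beps_i$ and the old coordinates are $\bx_s$ placed in fixed positions, which is again affine.
\end{enumerate}
Affine maps are continuous, hence measurable.

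The second step is the core identity. For any Borel set $A\subseteq\mathbb{R}^{d(t)}$,
\begin{align*}
\mathbb{P}(\bx_s^\epsilon\in A\mid\bx_s)=\mathbb{P}(\beps\in\phi_{\bx_s}^{-1}(A)\mid\bx_s)=p_{\epsilon|s,t}(\phi_{\bx_s}^{-1}(A)\mid\bx_s)=\big((\phi_{\bx_s})_{\#}p_{\epsilon|s,t}(\cdot\mid\bx_s)\big)(A).
\end{align*}
This is exactly (\ref{eq:expanding-cond-law}). Equivalently, for bounded measurable test functions $f$ one has $\mathbb{E}[f(\bx_s^\epsilon)\mid\bx_s]=\int f(\mathcal{E}_{s,t}(\bx_s,\beps))\,p_{\epsilon|s,t}(d\beps\mid\bx_s)$. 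I will also note that joint measurability of $(\bx_s,\beps)\mapsto\mathcal{E}_{s,t}(\bx_s,\beps)$ makes $\bx_s\mapsto p_{t|s}(A\mid\bx_s)$ measurable, so the family is a genuine Markov kernel. As an optional addition, in cases (i) to (iii) with injective $\phi_{\bx_s}$ I can describe the pushforward explicitly as a measure supported on the affine slice $\{\bx:\text{source coordinates}=\bx_s\}$, with density given by the change-of-variables formula with Jacobian factor $\sigma^{-(d(t)-d(s))}$ in case (iii).

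The main subtlety I expect is the case where the placement itself is random or predicted, for example by $\hat{\mathcal{E}}$ choosing the number or location of insertions. The plan there is to absorb the placement variable into the noise: set $\tilde\beps=(S,\beps)$ with law $p_{\epsilon|s,t}(\cdot\mid\bx_s)$ on the product of a discrete placement set and the Euclidean noise space. The map $(S,\beps)\mapsto P_S(\bx_s,\beps)$ is measurable because it is piecewise affine over finitely many $S$. The same computation then goes through, and the conditional law becomes a finite mixture over placements of the slice-supported pushforwards.
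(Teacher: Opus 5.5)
Your proposal is correct and follows essentially the same route as the paper. Fix $\bx_s$, observe that $\mathcal{E}_{s,t}(\bx_s,\cdot)$ is Borel measurable for each instantiation, and identify $p_{t|s}(A\mid\bx_s)$ with the $p_{\epsilon|s,t}(\cdot\mid\bx_s)$-measure of the preimage of $A$, which is by definition the pushforward. Your extras go slightly beyond the paper's argument but are consistent with it: the affine description of child expansion (more accurate than the paper's "embeddings and permutations"), measurability in $\bx_s$, and absorbing a learned placement into the noise variable.
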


The proof is given in App \ref{app-prop:pushforward}. This proposition shows that the expand operator alone fixes the conditional law of the augmented state, but it does not yet describe how that state is transported to the target time. To complete the construction, we next specify how each newly inserted coordinate is denoised along the interpolant, which requires tracking the time elapsed since its insertion.

\paragraph{Local Time Coordinate} 
Since newly inserted coordinates are instantiated at different points along the interpolant, we assign each inserted component $i$ a \textit{local time coordinate} $t_i$, with the set of local times denoted $\boldsymbol{t}_{\text{local}}$. At the insertion time $t^{\text{ins}}_i$ of component $i$, the local time is initialized at $t_i=0$. At global time $t> t^{\text{ins}}_i$, it is defined as:
\begin{align}
    t_i=\frac{t-t^{\text{ins}}_i}{1-t^{\text{ins}}_i}\in [0,1]
\end{align}
This is a bijection that projects the global time onto a $[0,1]$ local time interval for each dimension $i$, so that every inserted dimension is denoised on a common local clock.

\paragraph{Transport Along the Expanding Interpolant}
After applying the expand operator, we integrate the standard velocity field of the continuous-time interpolant over $\tau\in [s,t]$ from the augmented state $\boldsymbol{x}_s^\epsilon$ to the target state $\boldsymbol{x}_t$:
\begin{align}
    \boldsymbol{x}_t=\boldsymbol{x}^\epsilon_s+\int_s^tb_\tau(\boldsymbol{x}_\tau^\epsilon)d\tau, \quad \boldsymbol{x}_t[i]=\boldsymbol{x}^\epsilon_s[i]+\int_{s_i}^{t_i}b_{\tau_i}(\boldsymbol{x}_{\tau_i}^\epsilon[i])d\tau_i, \quad b_t(\boldsymbol{x}_t^\epsilon)=\begin{cases}
        b_{t_i}(\boldsymbol{x}_{t_i}^\epsilon [i])& t_i^{\text{ins}}\leq t\\
        \boldsymbol{0}& t_i^{\text{ins}}> t
    \end{cases}\label{eq:flow-velocity-field}
\end{align}
where each coordinate $i$ is integrated along its own local time $\tau_i\in [s_i, t_i]$. Since $b_t: \mathbb{R}^{d(t)}\to \mathbb{R}^{d(t)}$ is a standard flow velocity field on the active coordinates ($t_i^{\text{ins}}\leq t$), we parameterize it as $\hat{b}_t$ by minimizing the standard CFM objective in (\ref{eq:cfm-loss}) and set it to zero otherwise, leaving the inactive coordinates unchanged. Given the expand operator and the transport ODE, we can define the \textit{Expanding Generative Flow} (EFlow).

\begin{definition}[Expanding Generative Flow]\label{def:expanding-flow}
    Let $0\leq s<t\leq 1$ with marginals $p_s\in \mathcal{P}(\mathbb{R}^{d(s)})$, $p_t\in \mathcal{P}(\mathbb{R}^{d(t)})$, expand operator $\mathcal{E}_{s,t}$ (\ref{eq:expand-operator}), and conditional noise $p_{\epsilon|s,t}(\cdot\mid\bx_s)\in \mathcal{P}(\mathbb{R}^{d(t)-d(s)})$. Write $\bx_s^\epsilon:=\mathcal{E}_{s,t}(\bx_s,\boldsymbol{\epsilon})$ and $p^{\epsilon}_{s,t}:=(\mathcal{E}_{s,t})_{\#}(p_s\otimes p_{\epsilon|s,t})$. An \textbf{expanding generative flow} from $p_s$ to $p_t$ is the composition
    \begin{align}
        \Phi_{s,t}:=X_{s,t}\circ\mathcal{E}_{s,t}:\mathbb{R}^{d(s)}\to\mathbb{R}^{d(t)},
    \end{align}
    where $X_{s,t}:\mathbb{R}^{d(t)}\to\mathbb{R}^{d(t)}$ is the time-$t$ flow map of the augmented interpolant ODE $\dot{\bx}^\epsilon_\tau=b_\tau(\bx^\epsilon_\tau)$ (\ref{eq:flow-velocity-field}) initialized at $\bx^\epsilon_s$ at time $s$. This map satisfies the marginal constraint $(X_{s,t})_{\#} p^\epsilon_{s,t}=p_t$, and being the deterministic flow from $\bx^\epsilon_s$, it fixes the full coupling between $\bx_s$ and $\bx_t$ rather than only the marginal $p_t$. The induced conditional law is
    \begin{align}
        p_{t|s}(\cdot\mid\bx_s)=\big(\Phi_{s,t}(\bx_s,\cdot)\big)_{\#} p_{\epsilon|s,t}(\cdot\mid\bx_s).\label{eq:cond-law}
    \end{align}
\end{definition}

In practice, we parameterize both the expand operator and the velocity field with a neural network. While $\Phi_{s,t}$ is a well-defined flow on the intervals between insertions, the jump discontinuity introduced at each insertion breaks the flow construction at those times. We therefore provide an alternative characterization of the expanding flow that is valid over the full interval $t\in [0,1]$.

\paragraph{Piecewise-Deterministic Markov Processes}
The expand operator and interpolant transport together specify a two-stage process, but not yet a single stochastic process with well-defined dynamics. We now show that these two stages can be unified as a \textit{piecewise-deterministic Markov process} (PDMP) \citep{davis1984piecewise} on the disjoint union of different-dimensional spaces $\mathcal{M}:=\bigsqcup_{t}\mathbb{R}^{d(t)}$. The PDMP is characterized by an extended generator, from which both the change-of-variables formula and the reduction to standard fixed-dimensional flows follow directly.

\begin{restatable}[Expanding Flows are Piecewise-Deterministic Markov Processes]{proposition}{pdmp}\label{prop:pdmp}
    Let $\mathcal{M}:=\bigsqcup_{t\in[0,1]}\mathbb{R}^{d(t)}$ be the state space and $(\boldsymbol{x}_{t})_{t\in [0,1]}$ the expanding generative flow of Def.~\ref{def:expanding-flow}, with velocity $b_t$ in the augmented space, expand operator $\mathcal{E}_t$, and noise law $p_{\epsilon \mid t}$. Writing $\mathcal{E}_{t}:=\lim_{s\to t}\mathcal{E}_{s,t}$ and $p_{\epsilon\mid t}:=\lim_{s\to t}p_{\epsilon\mid s,t}$ for the instantaneous insertion map and noise law at time $t$, the augmented process $\boldsymbol{S}_t:=(\boldsymbol{x}_t,\boldsymbol{t}_{\text{local}})$, carrying the local-time coordinates $\boldsymbol{t}_{\text{local}}$, is a piecewise-deterministic Markov process (PDMP) on $\mathcal{M}$ with extended generator acting on $\bx$-functions given by:
    \begin{align}
        \mathcal{L}_t f(\boldsymbol{x}) = \underbrace{b_t(\boldsymbol{x})\cdot\nabla f(\boldsymbol{x})}_{\text{transport}} + \underbrace{\lambda_t(\boldsymbol{x})\int\big(f(\mathcal{E}_t(\boldsymbol{x},\boldsymbol{\epsilon}))-f(\boldsymbol{x})\big)p_{\epsilon\mid t}(d\boldsymbol{\epsilon})}_{\text{jumps, kernel}=(\mathcal{E}_t(\boldsymbol{x},\cdot))_{\#}p_{\epsilon\mid t}}\label{eq:pdmp-generator},
    \end{align}
    where $\lambda_t$ is the insertion intensity and the jump kernel is the pushforward $(\mathcal{E}_t(\boldsymbol{x},\cdot))_{\#}p_{\epsilon\mid t}$, which equals the instantaneous conditional law $p_{t|s}(\cdot\mid\boldsymbol{x}_s)$ of (\ref{eq:expanding-cond-law}) as $s\uparrow t$.
\end{restatable}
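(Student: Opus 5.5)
The plan is to verify directly that $\boldsymbol{S}_t$ fits Davis's three-ingredient definition of a PDMP, and then read off the extended generator from the resulting Dynkin formula. The three ingredients are a deterministic flow on each component of $\mathcal{M}$, a jump intensity, and a jump kernel.

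\emph{Flow.} On each component $\mathbb{R}^{d}$ of $\mathcal{M}$, we take the ODE $\dot{\bx}_\tau=b_\tau(\bx_\tau)$ of (\ref{eq:flow-velocity-field}). We augment it with local clocks $\dot t_i = 1/(1-t_i^{\text{ins}})$ for active coordinates and $\dot t_i=0$ for inactive ones. Under the standing regularity (local Lipschitz $b$), this flow is well-defined between insertions. By Def.~\ref{def:expanding-flow}, it coincides with $X_{s,t}$ restricted to any insertion-free interval $[s,t]$.

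\emph{Jumps.} Insertion times are taken to be the points of a counting process with intensity $\lambda_t(\bx)$. In the deterministic-schedule case, $\lambda_t$ is a sum of Dirac masses at the jump times of $d(\cdot)$, and one can treat it via the Davis boundary construction. At a jump, the post-jump state is drawn from $Q_t(\bx,\cdot):=(\mathcal{E}_t(\bx,\cdot))_{\#}p_{\epsilon\mid t}$, and the new local clocks are reset to $0$. Proposition~\ref{prop:pushforward} identifies $(\mathcal{E}_{s,t}(\bx_s,\cdot))_{\#}p_{\epsilon|s,t}$ with $p_{t|s}$. Passing $s\uparrow t$ using the defining limits of $\mathcal{E}_t$ and $p_{\epsilon\mid t}$, together with weak continuity of pushforwards under continuous maps, gives the stated identification of the kernel with the instantaneous conditional law. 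The Markov property then follows because the future evolution depends only on $(\bx_t,\boldsymbol{t}_{\text{local}})$. Indeed, the flow is autonomous in the augmented state once time is appended, and both the jump mechanism and the noise are drawn conditionally on the current state alone.

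\emph{Generator.} For $f$ bounded, $C^1$ on each component, and depending only on $\bx$, I would compute
\[
\lim_{h\downarrow0}\tfrac1h\,\mathbb{E}[f(\bx_{t+h})-f(\bx_t)\mid\bx_t=\bx]
\]
by splitting on whether a jump occurs in $[t,t+h]$.
\begin{itemize}
\item With probability $1-\lambda_t h+o(h)$ there is no jump, and a Taylor expansion along the flow gives $h\,b_t\cdot\nabla f+o(h)$.
\item With probability $\lambda_t h+o(h)$ there is one jump, contributing $h\lambda_t\int (f(\mathcal{E}_t(\bx,\epsilon))-f(\bx))\,p_{\epsilon\mid t}(d\epsilon)+o(h)$.
\item Two or more jumps have probability $o(h)$.
\end{itemize}
Summing the three cases yields (\ref{eq:pdmp-generator}). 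Rigorously, one invokes Davis's theorem characterizing the extended generator domain: functions that are absolutely continuous along flow lines and satisfy the integrability condition on jumps. Boundedness of $f$ and local finiteness of the expected number of jumps, which holds since $d(t)\le d(1)<\infty$ so there are at most $d(1)-d(0)$ insertions, give that condition.

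The main obstacle is the jump-intensity step when $d(t)$ is a deterministic step function. There $\lambda_t$ is not a genuine rate, so the second case of the generator computation fails as written. I would first handle this by reading the insertions as forced jumps at the boundaries of a time-augmented space, which is Davis's active-boundary mechanism. That construction keeps the PDMP structure, and the jump term is then interpreted in the distributional sense. Random insertion schedules, such as predicted insertion counts, instead give an honest rate, and there the computation above applies verbatim.
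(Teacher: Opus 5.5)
Your proposal is correct and follows essentially the same route as the paper's proof. Both arguments take the transport term from the chain rule along the ODE between insertions, and both identify the jump kernel $(\mathcal{E}_t(\boldsymbol{x},\cdot))_{\#}p_{\epsilon\mid t}$ with $\lim_{s\uparrow t}p_{t|s}$ via Proposition~\ref{prop:pushforward}. Both also obtain the Markov property by carrying $\boldsymbol{t}_{\text{local}}$ alongside $\boldsymbol{x}_t$, and both rule out explosion because at most $d(1)-d(0)$ insertions can occur.

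Two parts of your argument are more careful than the paper: the short-time expansion backed by Davis's extended-generator characterization, and the active-boundary treatment of deterministic schedules, where $\lambda_t$ is not a genuine rate. The paper simply adds the transport term to the jump-generator term, and it computes $\lambda_t$ explicitly only in the discrete case, as the hazard $\tfrac{\dot\alpha_t}{1-\alpha_t}\sum_i\mathcal{I}_t(\boldsymbol{x})[i]$.
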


The proof is provided in App \ref{app-prop:pdmp}. In the discrete setting, the insertion intensity takes an explicit form in terms of the per-gap insertion expectation. We defer the full characterization of the discrete case to Section \ref{sec:discrete-efm}.

\begin{restatable}[Change-of-Variables Formula]{corollary}{corollary:covformula}\label{corollary:cov}
    Given that $X_{s,t}$ is considered the time-$t$ map of an ODE in the augmented space $\frac{d}{d\tau}\boldsymbol{x}_\tau^\epsilon=b_\tau(\boldsymbol{x}_\tau^\epsilon)$, the marginal density can be defined with the instantaneous change-of-variables formula:
    \begin{align}
        \forall s\leq t,\quad \log p_t(X_{s,t}(\boldsymbol{x}_s^\epsilon))=\log p_s(\boldsymbol{x}^\epsilon_s)-\int_s^t\nabla\cdot b_\tau(X_{s,\tau}(\boldsymbol{x}^\epsilon_s))d\tau, \quad \nabla \cdot b_\tau =\sum_{i:t_i^{\text{ins}}\leq \tau}\partial_{\bx^i}b^i_\tau
    \end{align}
    where $b_\tau:\mathbb{R}^{d(t)}\to \mathbb{R}^{d(t)}$ is the instantaneous velocity field in the augmented space.
\end{restatable}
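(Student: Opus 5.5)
The plan is to reduce the statement to the classical instantaneous change-of-variables formula for a fixed-dimensional ODE. The key observation is that once the expand operator $\mathcal{E}_{s,t}$ has been applied, the augmented state $\bx_s^\epsilon$ lives in the fixed space $\mathbb{R}^{d(t)}$. On $[s,t]$ the dynamics $\frac{d}{d\tau}\bx^\epsilon_\tau=b_\tau(\bx^\epsilon_\tau)$ are then an ordinary ODE on that fixed space. Here $p_s$ in the statement should be read as the density $p^\epsilon_{s,t}$ of the augmented state from Def.~\ref{def:expanding-flow}. I would first make this identification explicit. By Proposition~\ref{prop:pushforward}, the law of $\bx_s^\epsilon$ is $(\mathcal{E}_{s,t})_\#(p_s\otimes p_{\epsilon|s,t})$. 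For concatenation or insertion, $\mathcal{E}_{s,t}$ is a coordinate permutation with unit Jacobian, so this density is simply $p_s(\bx_s)\,p_{\epsilon|s,t}(\boldsymbol{\epsilon}\mid\bx_s)$ evaluated at the permuted coordinates. This is what "$\log p_s(\bx^\epsilon_s)$" denotes.

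Next I would write $\rho_\tau$ for the law of $\bx^\epsilon_\tau=X_{s,\tau}(\bx^\epsilon_s)$ for $\tau\in[s,t]$. Under the standard regularity assumptions on $b_\tau$ (locally Lipschitz in $\bx$, integrable in $\tau$), $\rho_\tau$ satisfies the continuity equation $\partial_\tau\rho_\tau+\nabla\cdot(\rho_\tau b_\tau)=0$. Along a characteristic, the chain rule gives
$\frac{d}{d\tau}\log\rho_\tau(X_{s,\tau}(\bx))=\partial_\tau\log\rho_\tau+b_\tau\cdot\nabla\log\rho_\tau=-\nabla\cdot b_\tau(X_{s,\tau}(\bx))$.
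Integrating from $s$ to $t$ gives the formula, with $\rho_t=p_t$ by the marginal constraint $(X_{s,t})_\#p^\epsilon_{s,t}=p_t$ in Def.~\ref{def:expanding-flow}. As an alternative route I would cite Liouville's formula, $\partial_\tau\log\det\nabla X_{s,\tau}=\nabla\cdot b_\tau\circ X_{s,\tau}$, together with the change-of-variables formula for the diffeomorphism $X_{s,t}$.

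The final step is to compute the divergence using the structure (\ref{eq:flow-velocity-field}). The component $b^i_\tau$ vanishes identically when $t_i^{\text{ins}}>\tau$, so $\partial_{\bx^i}b^i_\tau=0$ for those $i$. Hence only the active coordinates contribute, which gives $\nabla\cdot b_\tau=\sum_{i:t_i^{\text{ins}}\le\tau}\partial_{\bx^i}b^i_\tau$. The inactive coordinates are held fixed, so they contribute a constant density factor that is transported unchanged.

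I expect the main obstacle to be regularity across insertion times inside $(s,t)$. At those times $b_\tau$ switches a coordinate from frozen to active, so it is discontinuous in $\tau$. My approach is to split $[s,t]$ at the finitely many insertion times. On each subinterval the field is smooth, and the flow map is continuous across the breakpoints, because the state itself does not jump: the expansion was already performed at time $s$. The log-density identity therefore holds on each piece and chains additively. Carathéodory ODE theory, which requires only measurability in $\tau$, covers this cleanly.
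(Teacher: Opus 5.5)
Your proposal is correct and follows the reasoning the paper indicates. The paper gives no separate proof and presents the formula as a direct consequence of the PDMP construction, whose transport part is the fixed-dimensional augmented-space ODE with the velocity mask of (\ref{eq:flow-velocity-field}); the intended argument is therefore the standard instantaneous change of variables in $\mathbb{R}^{d(t)}$, with $b^i_\tau \equiv 0$ on inactive coordinates. Your reading of $p_s(\bx^\epsilon_s)$ as the augmented density $p^\epsilon_{s,t}$ and your piecewise treatment of the $\tau$-discontinuities at insertion times supply details the paper leaves implicit (for child expansion the augmented density just acquires a constant Jacobian factor $\sigma^{-(d(t)-d(s))}$, which does not affect the argument).
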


The divergence sums only over active coordinates ($t_i^{\text{ins}}\leq\tau$), so the density evolves as if it were already at its final dimension throughout, with non-inserted coordinates contributing nothing until their insertion time.

\section{Expanding Flow Maps}
\label{sec:expanding-flow-maps}
Given the definition of \textit{expanding generative flows}, we now define the \textbf{Expanding Flow Map} (EFM), which transports between two states along the expanding interpolant in a \textit{single} learned step, jointly increasing dimensionality and pushing the flow forward in time.

\paragraph{Flow Map in Augmented State Space}
We define the expanding flow map (EFM) $\Phi_{s,t}:\mathbb{R}^{d(s)}\to \mathbb{R}^{d(t)}$ as the composition of the expand operator $\mathcal{E}_{s,t}$ (\ref{eq:expand-operator}) which raises dimension from $d(s)$ to $d(t)$ and a transport map $X_{s,t}$ in $d(t)$-dimensional space that pushes the state forward along the \textit{local time coordinate} of each dimension. In this section, we define the transport map as the mean velocity $v_{s,t}$ for continuous state spaces, and in Section \ref{sec:discrete-efm}, we define it as the mean denoiser $\psi_{s,t}$ for discrete state spaces.

\begin{borderbox}
\begin{center}
\textbf{Expanding Flow Map}
\begin{align}
 \Phi_{s,t}=X_{s,t}\circ\mathcal{E}_{s,t}, \quad \Phi_{s,t}(\boldsymbol{x}_s):=X_{s,t}(\mathcal{E}_{s,t}(\boldsymbol{x}_s))=:\mathcal{E}_{s,t}(\boldsymbol{x}_s)+(t-s)v_{s,t}(\mathcal{E}_{s,t}(\boldsymbol{x}_s)), \quad \forall s,t\in [0,1]\label{eq:expanding-fm}
\end{align}
where $\mathcal{E}_{s,t}:\mathbb{R}^{d(s)}\to \mathbb{R}^{d(t)}$ is the expand operator and $v_{s,t}:\mathbb{R}^{d(t)}\to \mathbb{R}^{d(t)}$ is the mean velocity that parameterizes the transport map.
\end{center}
\end{borderbox}

Leveraging the \textit{expand operator} $\mathcal{E}_{s,t}$ from (\ref{eq:expand-operator}) which lifts the state $\boldsymbol{x}_s\in \mathbb{R}^{d(s)}$ to the dimensionality of the target state along the interpolant $\boldsymbol{x}_t\in \mathbb{R}^{d(t)}$, we can apply the standard residual form of the flow map via the average velocity $v_{s,t}$, which satisfies the \textit{tangent condition} \citep{kim2023consistency}:
\begin{align}
    \lim_{s\to t}\partial_t\Phi_{s,t}(\boldsymbol{x})=v_{t,t}(\boldsymbol{x})=b_t(\boldsymbol{x})
\end{align}
where $b_t$ is the standard flow velocity field defined in (\ref{eq:flow-velocity-field}). This condition can be enforced via the diagonal objective:
\begin{align}
    \mathcal{L}_{\text{diag}}(\hat{v}):=\int_0^1\mathbb{E}\big\|\hat{v}_{t,t}(\boldsymbol{x}_t)-\dot{\boldsymbol{x}}_t\big\|^2 dt,\quad \dot{\boldsymbol{x}}_t=
    \begin{cases}
        \boldsymbol{x}_1^{(t)}-\mathcal{E}_{0,t}(\boldsymbol{x}_0) & \text{(self-distillation from data)}\\[2pt]
        \hat{b}_t(\boldsymbol{x}_t) & \text{(distillation from a teacher flow)}
    \end{cases}
\end{align}
The intermediate state follows the expanding interpolant $\boldsymbol{x}_t=(1-t)\mathcal{E}_{0,t}(\boldsymbol{x}_0)+t\boldsymbol{x}_1^{(t)}$, where $\boldsymbol{x}_1^{(t)}$ is the clean sequence $\boldsymbol{x}_1$ truncated to the dimensions at time $t$ and the regression target is the time-derivative $\dot{\boldsymbol{x}}_t$.

\paragraph{Expanding Flow Map Identities}
In addition to the tangent condition, the expanding flow map $\Phi_{s,t}$ must also satisfy analogous \textit{consistency constraints}, which characterize a valid flow map as introduced in (\ref{eq:consistency-constraint}). Given our parameterization, we define these constraints with respect to the \textit{expand operator} $\hat{\mathcal{E}}_{s,t}$ and the average velocity $\hat{v}_{s,t}$ in the continuous-state setting.

\begin{restatable}[Expanding Flow Map Identities]{proposition}{expandingfm}\label{prop:expanding-fm-identities}
    Let $\boldsymbol{x}^\epsilon_s:= \mathcal{E}_{s,t}(\boldsymbol{x}_s)$. Then, the expanding flow map $\Phi_{s,t}(\boldsymbol{x}_s)$ satisfy the following consistency conditions:
    \begin{subequations}\label{eq:expanding-fm-identities}
    \begin{align}
        \text{Lagrangian:}\quad &\partial_t\Phi_{s,t}(\boldsymbol{x})=v_{t,t}(\Phi_{s,t}(\boldsymbol{x}))\\
        \text{Eulerian:}\quad &\partial_s\Phi_{s,t}(\boldsymbol{x})+ \nabla_{\boldsymbol{x}}\Phi_{s,t}(\boldsymbol{x})\tilde v_{s,s}(\boldsymbol{x})=0\\
        \text{Semigroup:}\quad &\Phi_{u,t}(\Phi_{s,u}(\boldsymbol{x}))=\Phi_{s,t}(\boldsymbol{x})
    \end{align}
    \end{subequations}
    where $\tilde{v}_{s,s}:\Real^{d(s)}\to \Real^{d(s)}$ is the unlifted diagonal velocity of \ref{ass:A4}, satisfying $v_{s,s}(\mathcal{E}_{s,t}(\bx))=E\tilde{v}_{s,s}(\bx)$. $\Phi_{s,t}(\boldsymbol{x}):=X_{s,t}(\mathcal{E}_{s,t}(\boldsymbol{x}))=\mathcal{E}_{s,t}(\boldsymbol{x})+(t-s)v_{s,t}(\mathcal{E}_{s,t}(\boldsymbol{x}))$ is the residual form of the expanding flow map in the continuous state space. 
\end{restatable}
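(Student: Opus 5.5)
We would prove the three identities by reducing each to its fixed-dimensional counterpart in (\ref{eq:consistency-constraint}), applied to the transport map $X_{s,t}$ on $\Real^{d(t)}$. The factorization $\Phi_{s,t}=X_{s,t}\circ\mathcal{E}_{s,t}$ of Def.~\ref{def:expanding-flow} does the work. The standing assumptions (referenced as \ref{ass:A4}) give us two things. First, the expand operator is a fixed linear placement $\mathcal{E}_{s,t}(\bx,\beps)=E\bx+F\beps$, with $E$ a coordinate-injection matrix; this holds for concatenation and positional insertion, and for child expansion with fixed parents. Second, the lifted diagonal velocity restricts consistently, $v_{s,s}(\mathcal{E}_{s,t}(\bx))=E\tilde v_{s,s}(\bx)$, since the augmented velocity (\ref{eq:flow-velocity-field}) acts coordinatewise on local clocks. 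Throughout, we work on an interval on which the placement pattern is fixed, so $\beps$ is held fixed as a parameter and the pushforward law of Prop.~\ref{prop:pushforward} is unaffected.

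\textbf{Lagrangian.} For fixed $s$ and $\bx$, the expanded point $\bx^\epsilon_s=\mathcal{E}_{s,t}(\bx)$ does not depend on $t$ within a fixed placement pattern. Hence $\partial_t\Phi_{s,t}(\bx)=\partial_tX_{s,t}(\bx^\epsilon_s)$. Since $X_{s,t}$ is the flow of $\dot\bx^\epsilon_\tau=b_\tau(\bx^\epsilon_\tau)$ and $b_t=v_{t,t}$ by the tangent condition, this derivative equals $v_{t,t}(X_{s,t}(\bx^\epsilon_s))=v_{t,t}(\Phi_{s,t}(\bx))$.

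\textbf{Semigroup.} Let $s\le u\le t$. We will show that expansion commutes with transport: $\mathcal{E}_{u,t}\circ X_{s,u}=X_{s,u}\circ\mathcal{E}_{s,t}$ restricted appropriately, and $\mathcal{E}_{u,t}\circ\mathcal{E}_{s,u}=\mathcal{E}_{s,t}$ with matching noise. The reason is that coordinates not yet inserted carry zero velocity by (\ref{eq:flow-velocity-field}), and already-active coordinates evolve on their local clocks independently of where new entries are placed. Then
\begin{align}
\Phi_{u,t}\circ\Phi_{s,u}=X_{u,t}\circ\mathcal{E}_{u,t}\circ X_{s,u}\circ\mathcal{E}_{s,u}=X_{u,t}\circ X_{s,u}\circ\mathcal{E}_{s,t}=X_{s,t}\circ\mathcal{E}_{s,t}=\Phi_{s,t},
\end{align}
where the middle step uses the commutation just described and the next step is the fixed-dimensional semigroup property of $X$ on $\Real^{d(t)}$.

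\textbf{Eulerian.} Differentiate the semigroup identity $\Phi_{s,t}(\bx)=\Phi_{s+h,t}(\Phi_{s,s+h}(\bx))$ at $h=0$. We have $\Phi_{s,s+h}(\bx)=\bx+h\,\tilde v_{s,s}(\bx)+o(h)$ in $\Real^{d(s)}$, using the consistency $v_{s,s}\circ\mathcal{E}=E\tilde v_{s,s}$ and choosing $h$ small enough that no insertion occurs in $(s,s+h)$. The chain rule then gives $0=\partial_s\Phi_{s,t}(\bx)+\nabla_\bx\Phi_{s,t}(\bx)\tilde v_{s,s}(\bx)$. Alternatively, differentiate $X_{s,t}(E\bx+F\beps)$ directly: use the fixed-dimensional Eulerian identity for $X$ together with $\nabla_\bx\Phi_{s,t}=\nabla X_{s,t}\,E$.

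\textbf{Main obstacle.} The hard step is the commutation of expansion and transport, and the handling of $s$ or $u$ crossing an insertion time. At such times $\mathcal{E}_{s,t}$ depends on $s$ discontinuously and $\partial_s$ is undefined. We would therefore state the Lagrangian and Eulerian identities for $s,t$ away from the insertion times, so that $d(\cdot)$ is locally constant in the differentiated variable. The semigroup identity would be proved by induction over the finitely many insertion times in $[s,t]$, applying the single-step commutation lemma at each one and the fixed-dimensional semigroup property in between.
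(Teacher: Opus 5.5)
Your proposal is correct and follows the paper's proof. The Lagrangian identity comes from $\partial_t\mathcal{E}_{s,t}=0$ and the augmented-space Lagrangian condition. The Eulerian identity is exactly your ``alternative'' route: the chain rule with $\nabla_{\bx}\Phi_{s,t}=\nabla X_{s,t}\,E$, the augmented Eulerian condition, and Assumption~\ref{ass:A4}. The semigroup identity comes from composability (Assumption~\ref{ass:A2}) and the commutation of Assumption~\ref{ass:A3}, followed by the fixed-dimensional semigroup of $X$ on $\Real^{d(t)}$.

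Two small fixes. First, the commutation should read $\mathcal{E}_{u,t}\circ X_{s,u}=X^{(t)}_{s,u}\circ\mathcal{E}_{u,t}$ on the image of $\mathcal{E}_{s,u}$; as you wrote it, the two sides have different domains. Second, no induction over insertion times is needed. The masked velocity makes $X_{s,t}$ a single flow on the fixed space $\Real^{d(t)}$, so one application of the fixed-dimensional semigroup already covers any number of insertions in $[s,t]$.
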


These identities coincide with the fixed-dimensional flow map identities (\ref{eq:consistency-constraint}) with proof in \citet{boffi2025build}. The only difference is in the Eulerian condition, where the Jacobian $\nabla_{\bx}\Phi_{s,t}\in \Real^{d(t)\times d(s)}$ is rectangular, so multiplying with the unlifted velocity $\tilde{v}_{s,s}(\bx)\in \Real^{d(s)}$ of Assumption \ref{ass:A4} gives the diagonal velocity $v_{s,s}$ on the expanded state $\mathcal{E}_{s,t}(\boldsymbol{x})\in \mathbb{R}^{d(t)}$.  We provide proof in App \ref{app-prop:expanding-fm-identities}. To enforce these constraints, we define the analogous consistency objectives:
\begin{subequations}
\begin{small}
\begin{align}
    \mathcal{L}_{\text{LSD}}(\hat{\mathcal{E}}, \hat{v})&:=\int_0^1\int_0^t\mathbb{E}\left\|\partial_t\hat{\Phi}_{s,t}(\boldsymbol{x}_s)-\text{sg}\left(\hat{b}_t(\hat{\Phi}_{s,t}(\boldsymbol{x}_s))\right)\right\|^2dsdt+\mathcal{L}_{\text{diag}}(\hat{v})\\
    \mathcal{L}_{\text{ESD}}(\hat{\mathcal{E}}, \hat{v})&:=\int_0^1\int_0^t\mathbb{E}\left\|\partial_s\hat{\Phi}_{s,t}(\boldsymbol{x}_s)+\text{sg}\left(\nabla_{\boldsymbol{x}}\hat{\Phi}_{s,t}(\boldsymbol{x}_s) \hat{b}_s(\boldsymbol{x}_s)\right)\right\|^2dsdt+\mathcal{L}_{\text{diag}}(\hat{v})\\
    \mathcal{L}_{\text{PSD}}(\hat{\mathcal{E}}, \hat{v})&:=\int_0^1\int_0^t\int_s^t\mathbb{E}\big\|\hat{\Phi}_{s,t}(\boldsymbol{x}_s)-\text{sg}(\hat{\Phi}_{u,t}(\hat{\Phi}_{s,u}(\boldsymbol{x}_s)))\big\|^2dudsdt+\mathcal{L}_{\text{diag}}(\hat{v})
\end{align}
\end{small}
\end{subequations}
where unlike fixed-dimensional flow maps, $\hat{\Phi}_{s,t}$ is parameterized via both $\hat{\mathcal{E}}_{s,t}$ and $\hat{v}_{s,t}$. 

\paragraph{Stochastic Flow Map Interpretation}
A notable property of the expanding flow map is that it defines a \textit{stochastic flow} whose randomness arises from an explicit conditional noise injection. Unlike deterministic flow maps, this construction induces a \textit{distribution} over trajectories by augmenting the state with latent noise and transporting it through the flow. We formalize this idea in the following proposition.

\begin{restatable}[Expanding Flow Maps are Stochastic Flow Maps]{proposition}{stochasticfm}\label{prop:stochasticfm}
    Let $\Phi_{s,t}(\bx_s,\boldsymbol{\epsilon})$ be an expanding flow map (\ref{eq:expanding-fm}) with conditional noise $\boldsymbol{\epsilon }\sim p_{\epsilon|s,t}$ satisfying the consistency identities (\ref{eq:expanding-fm-identities}). Then for all $t\in [s,1]$, the pushforward satisfies $p_{t|s}(\cdot |\boldsymbol{x}_s)=(\Phi_{s,t}(\boldsymbol{x}_s, \cdot))_{\#}p_{\epsilon|s,t}$, and recovers the clean posterior $p_{1|s}(\cdot|\boldsymbol{x}_s)$ at $t=1$.
\end{restatable}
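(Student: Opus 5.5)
The plan is to reduce the statement to Proposition~\ref{prop:pushforward} and the marginal-preservation property of the transport map in Definition~\ref{def:expanding-flow}, and then to take $t=1$ as a special case.

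First, fix $s$ and $\bx_s$. By Proposition~\ref{prop:pushforward}, the augmented state $\bx_s^\epsilon=\mathcal{E}_{s,t}(\bx_s,\beps)$ with $\beps\sim p_{\epsilon|s,t}(\cdot\mid\bx_s)$ has law $(\mathcal{E}_{s,t}(\bx_s,\cdot))_{\#}p_{\epsilon|s,t}$. This is the conditional law of the expanded state given $\bx_s$.

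Second, I will identify $\Phi_{s,t}=X_{s,t}\circ\mathcal{E}_{s,t}$ with the exact expanding flow of Definition~\ref{def:expanding-flow}. The identities (\ref{eq:expanding-fm-identities}) are assumed to hold, and the tangent condition $v_{t,t}=b_t$ holds. The Lagrangian identity says that $\tau\mapsto \Phi_{s,\tau}(\bx)$ solves $\partial_\tau \Phi_{s,\tau}=b_\tau(\Phi_{s,\tau})$, with initial condition $\Phi_{s,s}=\mathcal{E}_{s,s}$, so that at $\tau=s$ the state is the expanded state. By uniqueness of ODE solutions, which requires Lipschitz $b_\tau$ on the active coordinates, the transport part of $\Phi_{s,t}$ coincides with the time-$t$ flow map $X_{s,t}$ of the augmented ODE (\ref{eq:flow-velocity-field}). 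Inactive coordinates have zero velocity and stay fixed.

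The pushforward then composes:
\begin{align*}
(\Phi_{s,t}(\bx_s,\cdot))_{\#}p_{\epsilon|s,t}=(X_{s,t})_{\#}\big((\mathcal{E}_{s,t}(\bx_s,\cdot))_{\#}p_{\epsilon|s,t}\big).
\end{align*}
By (\ref{eq:cond-law}) in Definition~\ref{def:expanding-flow}, the right-hand side is exactly $p_{t|s}(\cdot\mid\bx_s)$. The step I would check carefully here is that the conditional law is preserved. Deterministic transport of the conditional law is valid because $X_{s,t}$ is a measurable deterministic map that does not depend on $\beps$ beyond its input. For this reason, conditioning on $\bx_s$ commutes with applying $X_{s,t}$.

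As a consistency check, integrating over $\bx_s\sim p_s$ gives $(X_{s,t})_{\#}p^\epsilon_{s,t}=p_t$, which matches the marginal constraint of Definition~\ref{def:expanding-flow}. Alternatively, one can invoke the semigroup identity to show that the family $\{p_{t|s}\}$ is Chapman--Kolmogorov consistent across an intermediate $u$. The inserted noise at the two stages combines into $p_{\epsilon|s,t}$ by the placement structure of $\mathcal{E}$.

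Finally, I will set $t=1$. Here $d(1)$ is the full dimension, and $p_{1|s}(\cdot\mid\bx_s)$ is the conditional law of $\bx_1$ given $\bx_s$ along the expanding interpolant, i.e.\ the clean posterior.

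The main obstacle is the second step: justifying that a map satisfying the identities actually equals the ODE flow of the augmented interpolant rather than merely a self-consistent map. In particular, at insertion times where $d(\tau)$ jumps, the Lagrangian ODE is only piecewise defined. I would handle this by splitting $[s,t]$ at the insertion times $t^{\text{ins}}_i$ and applying the semigroup identity to glue the pieces. On each subinterval, ODE uniqueness applies in fixed dimension. At each junction, the intermediate expand operators compose into $\mathcal{E}_{s,t}$, with noise distributed as $p_{\epsilon|s,t}$.
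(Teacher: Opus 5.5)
Your core argument is correct in the sense in which the paper proves the proposition, namely with $p_{t|s}$ understood as the flow-induced conditional law (\ref{eq:cond-law}). It follows a different route, however.

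The paper uses neither Proposition~\ref{prop:pushforward} nor the consistency identities. It takes $\Phi_{s,t}$ to be the exact flow and starts from the marginal constraint $(X_{s,t})_{\#}(p_s\otimes p_{\epsilon|s,t})=p_t$. It then disintegrates both sides with respect to $p_s$ and concludes by uniqueness of disintegration.

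You instead argue pointwise in $\bx_s$. Proposition~\ref{prop:pushforward} gives the law of the expanded state, functoriality of pushforward carries it through the deterministic $X_{s,t}$, and (\ref{eq:cond-law}) names the result. You use the hypothesis on the identities (Lagrangian plus tangent condition plus ODE uniqueness) to certify that the given $\Phi_{s,t}$ really is the exact flow of Definition~\ref{def:expanding-flow}.

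Your route has three advantages. It gives the identity for every $\bx_s$ rather than $p_s$-a.e. It actually uses the stated hypothesis. And it sidesteps the weak point of the paper's argument: two kernels with the same $p_s$-mixture $p_t$ need not coincide. Uniqueness of disintegration therefore only applies once the joint law of $(\bx_s,\bx_t)$ is known to be the flow-induced coupling, which is precisely the fact (\ref{eq:cond-law}) you cite directly. The paper's route is shorter and ties the conditional statement to marginal preservation, but that tie alone cannot determine the kernel.

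Two corrections are needed.

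\emph{The initial condition in your identification step.} The curve $\tau\mapsto\Phi_{s,\tau}$ changes dimension and noise with $\tau$, and $\mathcal{E}_{s,s}$ inserts nothing. So ``$\Phi_{s,s}=\mathcal{E}_{s,s}$'' does not supply the expanded state as initial condition. The curve that solves the augmented ODE is $\tau\mapsto X^{(t)}_{s,\tau}(\by)$ on the fixed space $\Real^{d(t)}$. Here $\by=\mathcal{E}_{s,t}(\bx_s,\beps)$ is held frozen (Assumption~\ref{ass:A1}), and $X_{s,s}=\mathrm{id}$ comes from the residual form. Relating this curve to $\Phi_{s,\tau}$ at the junctions is exactly Assumptions~\ref{ass:A2}--\ref{ass:A3}. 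Splitting at insertion times is needed only because $b_\tau$ jumps in $\tau$ when a coordinate activates.

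\emph{The closing claim about the clean posterior.} Drop the claim that $p_{1|s}$ is the conditional law of $\bx_1$ given $\bx_s$ under the expanding interpolant. The flow-induced kernel is random only through the inserted noise. If $d(s)=d(1)$ (e.g.\ $s\ge t_{\text{ins-end}}$), then $p_{\epsilon|s,1}$ is trivial and the kernel is $\delta_{X_{s,1}(\bx_s)}$. By contrast, the interpolant's posterior of $\bx_1$ given $\bx_s$ is non-degenerate for $s<1$ unless $p_1$ is a point mass. A deterministic transport reproduces marginals, not the interpolant's conditionals. So ``clean posterior'' can only mean the $t=1$ case of (\ref{eq:cond-law}), and that is also all the paper's proof establishes.
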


The proof is provided in App \ref{app-prop:stochasticfm}. This property follows the construction of the expanding flow map as the composition of learned expanding and denoising operators that reconstruct the interpolant at a future time step $t> s$, capturing the full posterior over future states rather than committing to a single deterministic trajectory. 

\section{Discrete Expanding Flow Maps}
\label{sec:discrete-efm}
The expanding generative flows and expanding flow maps framework is naturally positioned for \textit{variable-length discrete generation}, where a sequence or graph dynamically expands from a single token or node into a full sequence or graph. We instantiate the objects of Sections~\ref{sec:expanding-flows} and \ref{sec:expanding-flow-maps} on the discrete simplex: the expand operator $\mathcal{E}_{s,t}$ becomes \textit{token insertions} and the transport $X_{s,t}$ becomes \textit{token denoising}.

\paragraph{Discrete Expanding Interpolant for Growing Sequence Lengths}
We first define the \textbf{discrete expanding interpolant}, in which the number of tokens $d(t):[0,1]\to\mathbb{N}$ grows over intermediate times. At training time, we fix an \textbf{insertion schedule} $\alpha_t$ interpolating between $\alpha_0=0$ and $\alpha_1=1$. An intermediate sequence $\boldsymbol{x}_t$ is constructed by first sampling a clean sequence from the data distribution, $\boldsymbol{x}_1\sim p_1$, and then independently sampling an insertion time $t^{\text{ins}}_i$ for each position $i$ according to the schedule. Each position is either not yet inserted, denoted (empty), or in an intermediate latent state $\boldsymbol{x}_{t_i}^i$ along the continuous-time flow at its rescaled local time $t_i$:
\begin{align}
    \Pr[t^{\text{ins}}_i\leq t]=\alpha_t, \quad
    \boldsymbol{x}_t^i=\begin{cases}
        \text{(empty)}, & t< t^{\text{ins}}_i\\
        (1-t_i)\boldsymbol{x}_0^i+t_i\boldsymbol{x}_1^i, & t\geq t^{\text{ins}}_i
    \end{cases},
    \quad t_i=\frac{t-t^{\text{ins}}_i}{1-t^{\text{ins}}_i}
\end{align}
A sequence along the interpolant at global time $t$ is therefore the subsequence of currently inserted tokens:
\begin{align}
    \boldsymbol{x}_t=(\boldsymbol{x}^1_{t_1},\dots,\boldsymbol{x}^{d(t)}_{t_{d(t)}})\in\mathbb{R}^{d(t)\times V},
\end{align}
where each token $\boldsymbol{x}^i_{t_i}\in\mathbb{R}^V$ is a latent vector denoised to its local time $t_i$.

\paragraph{Parameterizing Token Insertions}
Given an intermediate subsequence $\bx_s$ and a target time $t>s$, we decide where and how many $V$-dimensional latent tokens to insert on the jump via the (two-time) \textbf{insertion expectation} $\mathcal{I}_{s,t}(\bx_s)[i]$, the expected number of tokens inserted into gap $i$ (the slot between active tokens $\bx_s^{i-1}$ and $\bx_s^i$) over $(s,t]$:
\begin{align}
    \mathcal{I}_{s,t}(\boldsymbol{x}_s)[i]=\tfrac{\alpha_t-\alpha_s}{1-\alpha_s}\mathcal{I}_s(\boldsymbol{x}_s)[i],\qquad \mathcal{I}_t(\boldsymbol{x}_t)[i]:=\mathbb{E}\big[g_i(t)\mid\boldsymbol{x}_t\big],\quad g_i(t):=\text{ind}_i (t) -\text{ind}_{i- 1}(t)-1\label{eq:insert-expectation}
\end{align}
where $g_i(t)$ is the count still missing from gap $i$ at $t$, $\text{ind}_i(t)$ the index of active position $i$ in $\bx_1$, and $\tfrac{\alpha_t-\alpha_s}{1-\alpha_s}$ is the fraction of the remaining tokens inserted over $(s,t]$ with the insertion hazard $\lambda_t:=\tfrac{\dot{\alpha_t}}{1-\alpha_t}$ defining the instantaneous insertion rate of an uninserted token at time $t$. The insertion head $\hat{\mathcal{I}}_{s,t}(\bx_s)$ is conditioned on both endpoints and trained on diagonal and off-diagonal time pairs with the insertion loss:
\begin{align}
    \mathcal{L}_{\text{insert}}(\hat{\mathcal{I}})=\mathbb{E}_{t}\mathbb{E}_{\bx_0,\bx_1}\Big[\tfrac{\dot{\alpha}_t}{1-\alpha_t}\textstyle\sum_{i}\phi\big(g_i(t), \hat{\mathcal{I}}_t(\bx_t)[i]\big)\Big]+\mathbb{E}_{s< t}\mathbb{E}_{\bx_0,\bx_1}\Big[\textstyle\sum_{i}\phi\big(\mu_i(s,t), \hat{\mathcal{I}}_{s,t}(\bx_s)[i]\big)\Big]\label{eq:insert-loss}
\end{align}
where $\phi(a,b):=b-a+a\log\tfrac{a}{b}$ is the Poisson NLL of count $a$ under mean $b$ up to a $b$-independent constant, so $\arg\min_b\mathbb{E}[\phi(A,b)]=\mathbb{E}[A]$. Both terms supervise against a realized count and recover its conditional mean: the diagonal term supervises against the gap count $g_i(t)$, giving $\hat{\mathcal{I}}_t=\mathcal{I}_t$ and the off-diagonal term supervises against the interval count $\mu_i(s,t):=\#\{j:t^{\text{ins}}_j\in(s,t],j\in\text{gap } i\}$, whose mean is $\mathcal{I}_{s,t}(\bx_s)[i]$, so the insertion head learns the two-time count directly.

\paragraph{Discrete Expand Operator}
Given the learned insertion expectation, the \textit{discrete expand operator} $\mathcal{E}_{s,t}$ inserts latent tokens $\boldsymbol{x}_0\sim \mathcal{N}(\boldsymbol{0}, \sigma^2\boldsymbol{I}_V)$ with prior scale $\sigma$ and local time $t_i=0$ per gap according to the gap-lengths $\ell_i$:
\begin{align}
    \mathcal{E}_{s,t}(\bx_s):=\bx^\epsilon_s=\left[\bigoplus_{i=1}^{d(s)}\big(\bx_0^{\ell_i}\oplus\bx_{s_i}^i\big)\right]\oplus\bx_0^{\ell_{d(s)+1}}
\end{align}
Since the head predicts the interval count $\hat{\mathcal{I}}_{s,t}(\bx_s)[i]$ directly, we draw $\ell_i$ from the exact conditional law of the interpolant. Since each of the $L-n_s$ uninserted tokens lands in gap $i$ over $(s,t]$ independently, we have:
\begin{align}
    \ell_i \sim \text{Binomial}\left(L - n_s,\ \tfrac{\hat{\mathcal I}_{s,t}(\bx_s)[i]}{L - n_s}\right)\label{eq:insertion-dist}
\end{align}
which matches the predicted mean, $\mathbb{E}[\ell_i]=\hat{\mathcal{I}}_{s,t}(\bx_s)[i]$, while capping each gap at the remaining budget; we truncate the joint draw left-to-right so that $\sum_i\ell_i\leq L-n_s$. Writing $\rho_{s,t}:=\tfrac{\alpha_t-\alpha_s}{1-\alpha_s}$ for the fraction of tokens inserted over $s\to t$, the size of the jump determines how much this cap matters. 

\begin{wrapfigure}{r}{0.4\textwidth}
    \includegraphics[width=\linewidth]{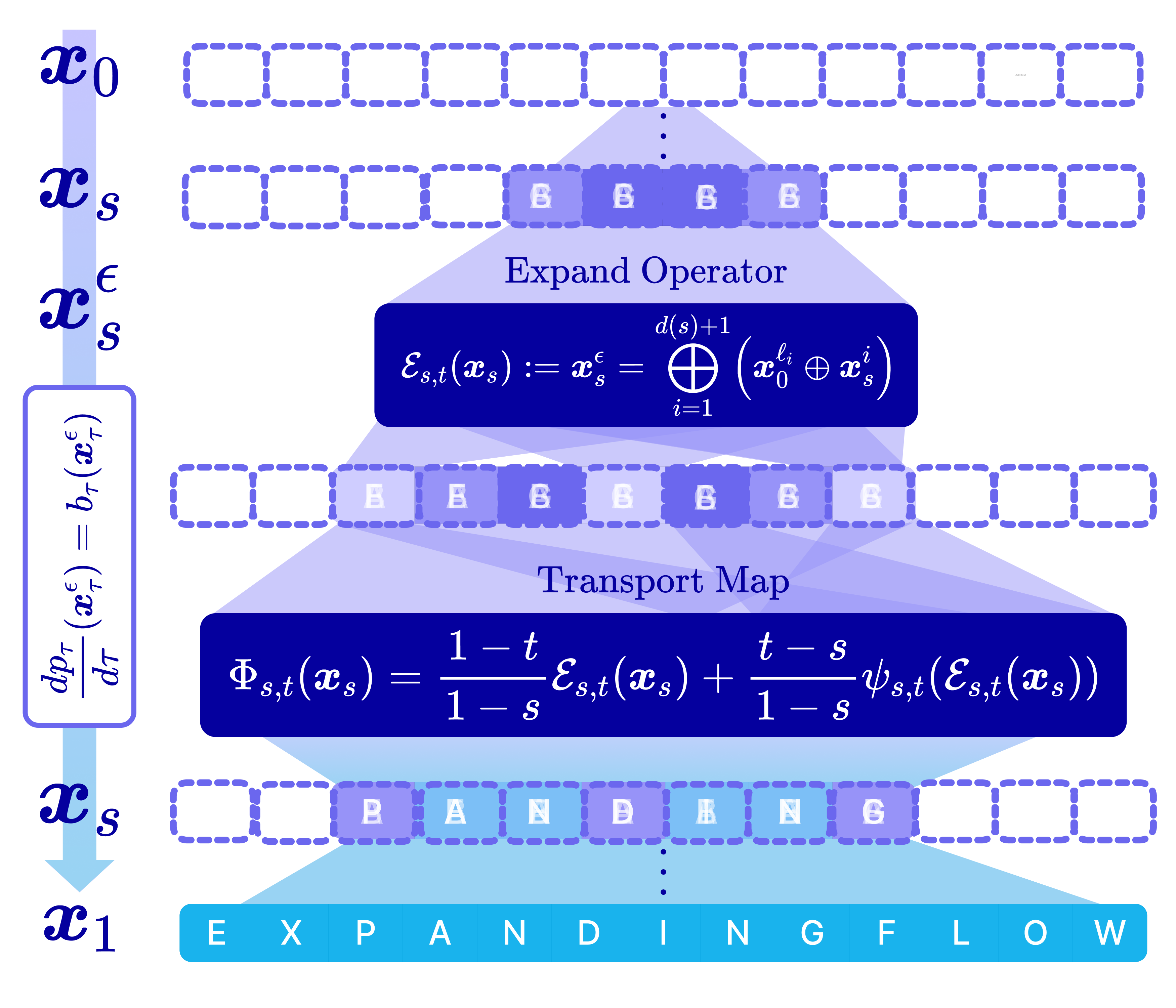}
    \caption{\textbf{Discrete Expanding Flow Maps.} Latent tokens are inserted via the expand operator $\mathcal{E}_{s,t}$ and denoised via the transport map defined with the mean denoiser $\psi_{s,t}$.}
    \label{fig:discrete-efm}
\end{wrapfigure}

Many-step sampling takes $\rho_{s,t}\ll 1$, inserting a few tokens per step against a budget of hundreds, so $L-n_s\gg\hat{\mathcal{I}}_{s,t}$ and (\ref{eq:insertion-dist}) converges to the Poisson operator $\ell_i\sim\text{Poisson}(\hat{\mathcal{I}}_{s,t}(\bx_s)[i])$, the infinitesimal jump law of the PDMP of Section~\ref{sec:expanding-flows} with insertion intensity $\lambda_t:=\tfrac{\dot\alpha_t}{1-\alpha_t}\sum_i\mathcal{I}_t(\bx_t)[i]$. 

For few-step sampling where the jump interval approaches the boundary pair $(s,t)=(0,1)$, the operator must produce the whole sequence in one jump, with $\rho_{s,t}\to 1$ and $\sum_i\hat{\mathcal{I}}_{s,t}[i]\approx L$. In this setting, the Poisson limit fails with unbounded cap and high variance equal to its mean, while the binomial concentrates to the exact count with variance $np(1-p)$ for $\text{Binomial}(n,p)$ vanishing as $p\to1$.

\paragraph{Discrete Expanding Flow Map}
Following the \textit{mean denoiser} (\ref{eq:mean-denoiser}) of fixed-length discrete flow maps, we define the \textit{expanding mean denoiser} as:
\begin{align}
    \psi_{s,t}(\mathcal{E}_{s,t}(\boldsymbol{x}_s)):=\mathcal{E}_{s,t}(\boldsymbol{x}_s)+(1-s)v_{s,t}(\mathcal{E}_{s,t}(\boldsymbol{x}_s))
\end{align}
Substituting this into the definition of the expanding flow map in (\ref{eq:expanding-fm}) yields the \textit{discrete expanding flow map} (EFM), which jointly inserts and denoises tokens in a discrete sequence toward the target one-hot distribution.

\begin{borderbox}
\begin{center}
\textbf{Discrete Expanding Flow Map}
\begin{align}
    \Phi_{s,t}(\boldsymbol{x}_s):= X_{s,t}(\mathcal{E}_{s,t}(\boldsymbol{x}_s))=\frac{1-t}{1-s}\mathcal{E}_{s,t}(\boldsymbol{x}_s)+\frac{t-s}{1-s}\psi_{s,t}(\mathcal{E}_{s,t}(\boldsymbol{x}_s))
\end{align}
where $\mathcal{E}_{s,t}(\boldsymbol{x}_s):\mathbb{R}^{d(s)\times V}\to  \mathbb{R}^{d(t)\times V}$ is the expand operator from sequence length $d(s)$ to $d(t)$ and vocabulary size $V$ and $\psi_{s,t}:\mathbb{R}^{d(t)\times V}\to (\Delta^{V-1})^{d(t)}$ is the mean denoiser that parameterizes the transport map.
\end{center}
\end{borderbox} 

Unlike the continuous-space definition of EFM, the mean denoiser that parameterizes discrete EFM is defined on the simplex at each position $\psi_{s,t}(\bx_s)[i]\in \Delta^{V-1}$ and can be optimized with cross-entropy objectives.

\paragraph{Discrete Expanding Flow Map Identities}
To parameterize discrete EFM, we extend the continuous-time consistency identities (\ref{eq:expanding-fm-identities}) to the discrete setting with respect to the mean denoiser on the expanded state $\psi_{s,t}(\mathcal{E}_{s,t}(\boldsymbol{x}))$, extending prior work on fixed-length discrete flow maps \citep{lee2026flow, potaptchik2026discrete}. 

\begin{restatable}[Discrete Expanding Flow Map Identities]{proposition}{discreteefm}\label{prop:discrete-efm}
    Let $\boldsymbol{x}^\epsilon_s:= \mathcal{E}_{s,t}(\boldsymbol{x}_s)$. Then, for $s<t$ (and $s<u<t$ in the semigroup case), the expanding flow map $\Phi_{s,t}(\boldsymbol{x}_s)$ and mean denoiser $\psi_{s,t}(\boldsymbol{x}^\epsilon_s)$ satisfy the following consistency conditions:
    \begin{subequations}\label{eq:discrete-consistency}
    \begin{align}
        \text{Lagrangian:}\quad &\psi_{s,t}(\boldsymbol{x}_s^\epsilon)=\psi_{t,t}(\Phi_{s,t}(\boldsymbol{x}_s))-\gamma_{s,t}\partial_t\psi_{s,t}(\boldsymbol{x}^\epsilon_s)\\
        \text{Eulerian:}\quad &\partial_s\psi_{s,t}(\boldsymbol{x}^\epsilon_s)+J_x\psi_{s,t}(\boldsymbol{x}^\epsilon_s)b_s(\boldsymbol{x}^\epsilon_s)=\kappa_{s,t}(\psi_{s,t}(\boldsymbol{x}^\epsilon_s)-\mathcal{E}_{s,t}(\psi_{s,s}(\boldsymbol{x}_s)))\\
        \text{Semigroup:}\quad &\psist(\xseps)=\omega_{s,u,t} \psi^{(t)}_{s,u}(\xseps)+(1-\omega_{s,u,t}) \psi_{u,t}\big(\E_{u,t}(\Phi_{s,u}(\bx_{s}))\big)
    \end{align}
    \end{subequations}
    where $J_x\psi_{s,t}(\bx^\epsilon_s)$ denotes the Jacobian of the map $\psi_{s,t}$ with respect to its $d(t)$-dimensional input, and the coefficients are defined as $\omega_{s,u,t}:=\frac{(u-s)(1-t)}{(t-s)(1-u)}$, $1-\omega_{s,u,t}=\frac{(t-u)(1-s)}{(t-s)(1-u)}$, $\gamma_{s,t}:=\frac{(1-t)(t-s)}{1-s}$, and $\kappa_{s,t}:=\frac{1-t}{(1-s)(t-s)}$. We define $\psi^{(t)}_{s,u}:\Real^{d(t)\times V}\to\Real^{d(t)\times V}$ as the mean denoiser over $[s,u]$ evaluated at the augmented dimension $d(t)$, the same parameterized map as $\psi_{s,u}$ applied to a longer input. Because the velocity mask of (\ref{eq:flow-velocity-field}) enforces \ref{ass:A3}, $\psi^{(t)}_{s,u}$ leaves the coordinates inserted over $(u,t]$ unchanged, so $\psi^{(t)}_{s,u}(\xseps)=\E_{u,t}(\psi_{s,u}(\E_{s,u}(\bx_s)))$.
\end{restatable}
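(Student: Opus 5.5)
The plan is to reduce all three identities to the continuous expanding flow map identities of Proposition~\ref{prop:expanding-fm-identities}. The link is the algebraic relation between mean velocity and mean denoiser. On the expanded state $\xseps=\E_{s,t}(\bx_s)$, the discrete EFM is
\[
\Phi_{s,t}(\bx_s)=\tfrac{1-t}{1-s}\xseps+\tfrac{t-s}{1-s}\psist(\xseps),
\]
and equivalently $\psist(\xseps)=\xseps+(1-s)v_{s,t}(\xseps)$. Since the expand operator $\E_{s,t}$ depends on $t$ only through which coordinates are inserted, for fixed dimension $d(t)$ I treat $\xseps$ as independent of $t$ when differentiating. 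By the masking assumption \ref{ass:A3}, inactive coordinates carry zero velocity, so every identity can be checked coordinatewise on a common $d(t)\times V$ canvas. On that canvas the fixed-length arguments of \citet{lee2026flow, potaptchik2026discrete} apply, and the only extra bookkeeping is the expand operator.

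\textbf{Lagrangian.} I differentiate the displayed form of $\Phi_{s,t}$ in $t$:
\[
\partial_t\Phi_{s,t}=\tfrac{1}{1-s}\big(\psist(\xseps)-\xseps\big)+\tfrac{t-s}{1-s}\partial_t\psist(\xseps).
\]
By the continuous Lagrangian identity, the left side equals $v_{t,t}(\Phi_{s,t})$. On the diagonal, $v_{t,t}(\by)=(\psi_{t,t}(\by)-\by)/(1-t)$. I substitute this, replace $\Phi_{s,t}$ by its convex-combination form, and solve for $\psist$. The $\xseps$ terms should cancel, since $\Phi_{s,t}-\xseps=\tfrac{t-s}{1-s}(\psist-\xseps)$. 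What remains is
\[
\psist=\psi_{t,t}(\Phi_{s,t})-\tfrac{(1-t)(t-s)}{1-s}\partial_t\psist,
\]
which is the claimed identity with coefficient $\gamma_{s,t}$.

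\textbf{Eulerian.} I differentiate $\Phi_{s,t}$ in $s$. This produces $\partial_s$ of the two coefficients, $\tfrac{t-s}{1-s}\partial_s\psist$, and, through the chain rule, the Jacobian terms $J_x\psist\,b_s$ and the transport of $\xseps$ itself. I then impose the continuous Eulerian condition. Assumption \ref{ass:A4} gives $v_{s,s}(\E_{s,t}\bx)=E\tilde v_{s,s}(\bx)$, so the lifted diagonal velocity is $b_s(\xseps)=(\E_{s,t}(\psi_{s,s}(\bx_s))-\xseps)/(1-s)$. This is exactly why the right-hand side contains $\E_{s,t}(\psi_{s,s}(\bx_s))$. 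Collecting terms, the $\xseps$ contributions should cancel, and dividing by $\tfrac{t-s}{1-s}$ leaves the coefficient $\kappa_{s,t}=\tfrac{1-t}{(1-s)(t-s)}$.

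\textbf{Semigroup.} I start from $\Phi_{s,t}=\Phi_{u,t}\circ\Phi_{s,u}$ and express each side through its mean denoiser. The right side is
\[
\Phi_{u,t}(\Phi_{s,u})=\tfrac{1-t}{1-u}\E_{u,t}(\Phi_{s,u})+\tfrac{t-u}{1-u}\psi_{u,t}\big(\E_{u,t}(\Phi_{s,u})\big).
\]
For the inner map I need $\E_{u,t}(\Phi_{s,u}(\bx_s))=\tfrac{1-u}{1-s}\xseps+\tfrac{u-s}{1-s}\psi^{(t)}_{s,u}(\xseps)$. This holds because coordinates inserted over $(u,t]$ sit at local time $0$, are frozen by the mask, and therefore commute with the expansion; this is the stated fact $\psi^{(t)}_{s,u}(\xseps)=\E_{u,t}(\psi_{s,u}(\E_{s,u}(\bx_s)))$. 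I equate with $\tfrac{1-t}{1-s}\xseps+\tfrac{t-s}{1-s}\psist$, check that the $\xseps$ coefficients agree, and solve for $\psist$ to obtain the weights $\omega_{s,u,t}$ and $1-\omega_{s,u,t}$.

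\textbf{Main obstacle.} The hardest step is justifying the commutation of expansion with transport. This is the claim that expanding once over $(s,t]$ and transporting equals expanding over $(s,u]$, transporting, then expanding over $(u,t]$, as equal laws or under a coupling of the noise and gap counts. I would handle it by coupling the insertion times so that the $(s,t]$ insertions split into $(s,u]$ and $(u,t]$ insertions. That the Binomial gap counts split consistently is not automatic: under a single draw, per-gap insertion counts superpose only when the counts are interpreted as conditionally independent thinnings. I would therefore argue at the level of the interpolant's conditional law, not the learned Binomial approximation. The same coupling also fixes the $t$-independence of $\xseps$ needed in the Lagrangian and Eulerian derivatives.
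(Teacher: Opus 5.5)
\textbf{Approach.} Your proposal follows the paper's proof closely.
\begin{itemize}
\item The Lagrangian and Eulerian identities are obtained as in the paper. You differentiate $X_{s,t}(\by)=\tfrac{1-t}{1-s}\by+\tfrac{t-s}{1-s}\psi_{s,t}(\by)$ at fixed $\by=\xseps$, which is legitimate by Assumption \ref{ass:A1}. You then insert the tangent relation $b_\tau(\bz)=(\psi_{\tau,\tau}(\bz)-\bz)/(1-\tau)$ into the continuous conditions.
\item Your semigroup computation is the paper's in a different order. You start from $\Phi_{s,t}=\Phi_{u,t}\circ\Phi_{s,u}$ and expand $\E_{u,t}(\Phi_{s,u}(\bx_s))$. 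The paper starts from $X_{s,t}=X_{u,t}\circ X^{(t)}_{s,u}$ on $\Real^{d(t)\times V}$ and converts $X^{(t)}_{s,u}(\xseps)=\E_{u,t}(\Phi_{s,u}(\bx_s))$ at the end via Assumptions \ref{ass:A2}--\ref{ass:A3}.
\end{itemize}

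\textbf{The gap.} One step does not follow from what you cite. In the Eulerian part you claim that Assumption \ref{ass:A4} gives $b_s(\xseps)=\big(\E_{s,t}(\psi_{s,s}(\bx_s))-\xseps\big)/(1-s)$. Assumption \ref{ass:A4} only says that the diagonal velocity at $\xseps$ vanishes on the inactive coordinates, i.e.\ equals $E\tilde v_{s,s}(\bx_s)$ for \emph{some} $\tilde v_{s,s}$. It does not identify that active block with the native-length velocity $(\psi_{s,s}(\bx_s)-\bx_s)/(1-s)$ computed on the shorter input. For a network whose active outputs attend to the newly inserted noise tokens, the two differ in general.

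Without that identification, your algebra yields $\kappa_{s,t}\big(\psi_{s,t}(\xseps)-\psi^{(t)}_{s,s}(\xseps)\big)$ on the right. Here $\psi^{(t)}_{s,s}(\xseps)=\xseps+(1-s)b_s(\xseps)$ is the diagonal denoiser at length $d(t)$. The paper closes this gap with a separate hypothesis, Assumption \ref{ass:A5}: $\psi^{(t)}_{s,s}(\E_{s,t}(\bx))=\E_{s,t}(\psi_{s,s}(\bx))$. By the length-$d(t)$ tangent condition, this is exactly your formula for $b_s(\xseps)$. You need to state it as an extra hypothesis rather than derive it from Assumption \ref{ass:A4}.

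\textbf{Smaller points.}
\begin{itemize}
\item In the semigroup step, the stated fact $\psi^{(t)}_{s,u}(\xseps)=\E_{u,t}(\psi_{s,u}(\E_{s,u}(\bx_s)))$ only handles the denoiser term. To get $\E_{u,t}(\Phi_{s,u}(\bx_s))=\tfrac{1-u}{1-s}\xseps+\tfrac{u-s}{1-s}\psi^{(t)}_{s,u}(\xseps)$ you also need two things. First, $\E_{u,t}$ is affine at fixed noise (Assumption \ref{ass:A1}), so it passes through an affine combination whose weights sum to one. Second, composability $\E_{u,t}\circ\E_{s,u}=\E_{s,t}$ (Assumption \ref{ass:A2}) handles the $\xseps$ term.
\item The coupling argument you plan for the commutation of expansion and transport is not needed. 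The paper simply assumes (Assumption \ref{ass:A2}) that noise and placements are drawn consistently from the disintegration $p_{\epsilon|s,t}=p_{\epsilon|u,t}\otimes p_{\epsilon|s,u}$. All three identities are then pathwise statements at fixed noise. Proving the commutation only in law, as you offer as one option, would give a strictly weaker distributional statement than the one claimed.
\end{itemize}
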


The proof is provided in App \ref{app-prop:discrete-efm}. These properties define consistency objectives for training $\hat{\mathcal{E}}$ and $\hat{\psi}$. Since $\hat{\psi}:\mathbb{R}^{d(t)\times V}\to (\Delta^{V-1})^{d(t)}$, we follow \citep{lee2026flow} and define the discrete EFM objective in the cross-entropy form:
\begin{small}
\begin{align}
    \mathcal{L}_{\text{DEFM}}( \hat{\psi}):=&\mathbb{E}_{s,u,t}\mathbb{E}_{\boldsymbol{x}_0, \boldsymbol{x}_1}\bigg[-\sum_{i=1}^{d(t)}(\bar{\psi}_{s,t}\cdot \log \hat{\psi}_{s,t})(\boldsymbol{x}_s^\epsilon)_i\bigg]+\mathbb{E}_t\mathbb{E}_{\boldsymbol{x}_0, \boldsymbol{x}_1}\bigg[-\sum_{i=1}^{d(t)}(D_t\cdot \log \hat{\psi}_{t,t})(\boldsymbol{x}_t^i)\bigg]\label{eq:cross-entropy-loss}
\end{align}
\end{small}
where $D_t:\mathbb{R}^{d(t)\times V}\to  (\Delta^{V-1})^{d(t)}$ is either a teacher instantaneous denoiser or the diagonal values of the student mean denoiser. Both terms in the target act on the state $\boldsymbol{x}^\epsilon_s=\mathcal{E}_{s,t}(\boldsymbol{x}_s)$ expanded once to the terminal length $d(t)$, with positions inserted after $u$ left empty:
\begin{align}
    \bar{\psi}_{s,t}(\boldsymbol{x}^\epsilon_s):=\text{sg}\left[\omega_{s,u,t} \hat{\psi}^{(t)}_{s,u}(\boldsymbol{x}^\epsilon_s)+(1-\omega_{s,u,t}) \hat{\psi}_{u,t}\big(\hat{\Phi}^{(t)}_{s,u}(\boldsymbol{x}^\epsilon_s)\big)\right]\label{eq:semigroup-target}
\end{align}
which enforces the discrete semigroup condition in Prop \ref{prop:discrete-efm} as shown in App \ref{app-prop:discrete-efm} and, together with $\mathcal{L}_{\text{insert}}$, trains the mean denoiser and expand operator for few-step generation in the discrete setting. The training procedures for discrete EFlow and EFM are provided in Algorithms \ref{alg:training-eflow} and \ref{alg:training-efm}, respectively.

\section{Experiments}
\label{sec:experiments}
To establish EFlow and EFM as a generalizable framework for generative modeling with variable canvas sizes across continuous and discrete state spaces, we evaluate our approach on three diverse tasks: coarse-to-fine molecular conformer generation on continuous 3D coordinate space (Section \ref{exp:conformer}), discrete molecular graph generation (Section \ref{exp:graph}), and language modeling (Section \ref{exp:language}).

\begin{table}[t]
\centering
\caption{\textbf{Molecular conformer generation results on GEOM-Drugs and GEOM-QM9}. Comparison against state-of-the-art multi-step diffusion and deep generative models. +R indicates sampling with 10 additional steps, using a learned coordinate-refinement network. Reported baseline values are taken from \citep{huhierarchical}. Best values are \textbf{bolded} and second-best are \underline{underlined}.}
\label{table:conformer-results}
\resizebox{\textwidth}{!}{%
\begin{tabular}{l c cccc cccc}
\toprule
& & \multicolumn{4}{c}{\textbf{GEOM-QM9} ($\delta=0.5$ \AA)} & \multicolumn{4}{c}{\textbf{GEOM-Drugs} ($\delta=1.25$ \AA)} \\
\cmidrule(lr){3-6} \cmidrule(lr){7-10}
& & COV-R(\%) $\uparrow$ & AMR-R(\AA) $\downarrow$ & COV-P(\%) $\uparrow$ & AMR-P(\AA) $\downarrow$ & COV-R(\%) $\uparrow$ & AMR-R(\AA) $\downarrow$ & COV-P(\%) $\uparrow$ & AMR-P(\AA) $\downarrow$ \\
Models & Steps $\downarrow$ & Mean/Median & Mean/Median & Mean/Median & Mean/Median & Mean/Median & Mean/Median & Mean/Median & Mean/Median \\
\midrule
GraphDG        & 500  & 73.33/84.21 & 0.4245/0.3973 & 43.90/35.33 & 0.5809/0.5823 & 8.27/0.00 & 1.9722/1.9845 & 2.08/0.00 & 2.4340/2.4100 \\
CGCF           & 500  & 78.05/82.48 & 0.4219/0.3900 & 36.49/33.57 & 0.6615/0.6427 & 53.96/57.06 & 1.2487/1.2247 & 21.68/13.72 & 1.8671/1.8066 \\
ConfVAE        & 500  & 55.84/88.20 & 0.4154/0.3739 & 38.02/34.67 & 0.6215/0.6091 & 55.20/59.43 & 1.2380/1.1417 & 22.96/14.05 & 1.8287/1.8159 \\
GeoMol         & 500  & 71.26/72.00 & 0.3731/0.3731 & -/- & -/- & 67.16/71.71 & 1.0875/1.0586 & -/- & -/- \\
ConfGF         & 500  & 88.49/94.31 & 0.2673/0.2685 & 46.43/43.41 & 0.5224/0.5124 & 62.15/70.93 & 1.1629/1.1596 & 23.42/15.52 & 1.7219/1.6863 \\
\midrule
GeoDiff        & 500  & 87.80/93.66 & 0.3179/0.3216 & 46.25/45.02 & 0.6173/0.5112 & 64.12/75.56 & 1.1444/1.1296 & 43.16/42.02 & 1.3806/1.3314 \\
SubgDiff       & 500  & \textbf{89.40}/\underline{94.39} & \textbf{0.2543}/\underline{0.2601} & 49.21/47.45 & 0.5030/0.4724 & 74.30/77.87 & 1.0003/0.9905 & -/- & -/- \\
MSGEN  & 1000 & 88.36/\textbf{95.10} & \underline{0.2606}/\textbf{0.2332} & 49.54/48.39 & 0.4934/\textbf{0.4624} & 85.06/\textbf{97.16} & 0.9538/0.9478 & 49.28/48.20 & 1.3140/1.2719 \\
\midrule
\rowcolor{mybg} EFlow & 20 & 88.51/93.01 & 0.3849/0.3585 & 53.07/52.27 & 0.5185/0.4891 & 81.78/90.91 & 0.9971/0.9826 & 55.88/61.58 & 1.2672/1.2028 \\
\rowcolor{mybg} EFlow+R  & 30 & \underline{88.87}/94.05 & 0.3212/0.2924 & \underline{54.86}/\underline{52.38} & \textbf{0.4876}/\textbf{0.4636} & \underline{85.80}/93.85 & \underline{0.9069}/\underline{0.8855} & \underline{64.05}/\underline{70.38} & \underline{1.1663}/\underline{1.0997} \\
\midrule
\rowcolor{mybg} EFM & 1 & 69.21/69.23 & 0.4884/0.4635 & 32.18/29.09 & 0.6093/0.5794 & 63.28/72.47 & 1.1921/1.1721 & 31.03/26.09 & 1.4966/1.4454 \\
\rowcolor{mybg} EFM & 2 & 72.83/74.25 & 0.4703/0.4436 & 34.09/31.33 & 0.6000/0.5654 & 77.21/88.67 & 1.0740/1.0499 & 47.14/49.24 & 1.3483/1.2966 \\
\rowcolor{mybg} EFM & 4 & 83.62/86.91 & 0.4165/0.3954 & 48.76/45.26 & 0.5412/0.5159 & 81.26/91.13 & 1.0201/1.0029 & 53.37/56.72 & 1.2883/1.2354 \\
\rowcolor{mybg} EFM+R & 14 & 87.79/92.01 & 0.3271/0.3049 & \textbf{55.56}/\textbf{53.09} & \underline{0.4889}/\underline{0.4691} & \textbf{87.97}/\underline{95.45} & \textbf{0.9029}/\textbf{0.8836} & \textbf{64.67}/\textbf{73.25} & \textbf{1.1575}/\textbf{1.0875} \\

\bottomrule
\end{tabular}%
}
\end{table}

\subsection{Coarse-to-Fine Molecular Conformer Generation}
\label{exp:conformer}
\paragraph{Setup and Training Framework}
We evaluate our framework on molecular conformer generation over the GEOM-QM9 and GEOM-Drugs datasets \citep{axelrod2022geom}, following the GeoDiff split \citep{xu2022geodiff}, and report Coverage (COV) and Average Minimum RMSD (AMR) in both recall (R) and precision (P) forms at thresholds $\delta=0.5$ \AA{} for QM9 and $\delta=1.25$ \AA{} for Drugs (Table \ref{table:conformer-results}; see App.~\ref{app:conformer} for experimental details). EFlow treats a conformer as a coarse-to-fine continuous-space expanding interpolant: the heavy-atom backbone is denoised from Gaussian noise and frozen at an insertion-end time, after which the hydrogens are inserted and resolved around the fixed backbone. Both stages live in a \emph{single} insertion flow with per-atom local times, so one network generates the all-atom conformer, and an optional low-noise refiner (EFlow+R, EFM+R) refines the result. For fair comparison, the velocity field is the GeoDiff dual-encoder \citep{xu2022geodiff} used by MSGEN \citep{huhierarchical}. Since the molecular graph fixes the atom count, the expand operator here is the deterministic case of child expansion, where heavy atoms are held fixed, and hydrogen atoms are sampled from a deterministic insertion schedule but conditioned on the positions of their heavy-atom parent.  

\paragraph{Results}
We compare against strong multi-step diffusion baselines: GeoDiff \citep{xu2022geodiff}, SubgDiff \citep{zhang2024subgdiff}, and the two-stage MSGEN \citep{huhierarchical}, which run $500$-$1000$ denoising steps, alongside the classical and generative baselines (App \ref{app:conformer}). Despite using $16$-$50\times$ fewer steps than the diffusion baselines, EFlow is competitive with or surpasses them across both datasets. On GEOM-QM9, EFlow attains COV-R $88.51\%$ and COV-P $53.07\%$ at $20$ steps, and the refiner sharpens precision to the best AMR-P of any method ($0.4876$ \AA{}) at $30$ steps. The gains are most significant on the larger GEOM-Drugs molecules (up to 181 atoms), where our methods take the best \textit{and} second-best value on all four metrics. The refiner consistently sharpens the precision metrics with only $10$ additional evaluations by refining local geometry around the frozen heavy-atom backbone. Notably, EFlow without distillation already surpasses the performance of multi-step diffusion at a fraction of the budget. After distillation, EFM pushes generation into the few-step regime, producing conformers in as few as $4$, $2$, and even $1$ step, with EFM+R attaining the single best score on all four GEOM-Drugs metrics. To evaluate the chemical validity of generated samples, we evaluate the molecular ensemble properties \citep{axelrod2022geom} for 30 molecules in GEOM-QM9 on 50 generated conformers using Psi4 \citep{smith2020psi4} and report these results in Table \ref{table:conformer-property-median}. These results establish expanding flows as a single framework that scales from multi-step to few-step conformer generation.

\subsection{Discrete Molecular Graph Generation}
\label{exp:graph}
\paragraph{Setup and Training Framework}
We evaluate our framework on drug-like small molecules from the QM9 dataset \citep{wu2018moleculenet}, reporting validity, uniqueness, and Fréchet ChemNet Distance (FCD) \citep{preuer2018frechet} on $10{,}000$ generated molecules (Table \ref{table:molecule-results}; see App \ref{app:molecule} for experimental details). EFlow treats a molecule as a discrete expanding interpolant over a variable number of nodes. The state is a pair $(\boldsymbol{x}, \boldsymbol{E})$ of node categories and a symmetric edge-category matrix, where each node has an independent insertion time, and a learned insertion head predicts the per-gap node count, enabling the graph to grow from empty to its full size. In contrast, node and edge categories are denoised jointly by a single network. For fair comparison, the denoiser is the DeFoG graph transformer \citep{qin2024defog}. In contrast to conformer generation, the graph size is unknown, and the expand operator is a learned head predicting how many nodes to insert at sampling time (see App \ref{app:graph-algo} for the discrete graph implementation details).

\begin{wraptable}{r}{0.55\textwidth}
\vspace{-\intextsep}
\caption{\textbf{Discrete molecular graph generation results.} Comparison against state-of-the-art multi-step baseline DeFoG \citep{qin2024defog} (top) and flow map baseline CFM \citep{roos2026categorical} (bottom) on 10,000 molecules with FCD computed on the test split. Best FCD values are \textbf{bolded}.}
\label{table:molecule-results}
\begin{center}
\begin{small}
\resizebox{\linewidth}{!}{
\begin{tabular}{lccc|>{\columncolor{mybg}}c>{\columncolor{mybg}}c>{\columncolor{mybg}}c}
\toprule
\textit{Multi-Step} & \multicolumn{3}{c|}{\textbf{DeFoG}} & \multicolumn{3}{c}{\textbf{EFlow} (ours)} \\
\cmidrule(lr){2-4} \cmidrule(lr){5-7}
\textbf{Steps} & \textbf{Valid} $\uparrow$ & \textbf{Unique} $\uparrow$ & \textbf{FCD} $\downarrow$
 & \textbf{Valid} $\uparrow$ & \textbf{Unique} $\uparrow$ & \textbf{FCD} $\downarrow$ \\
\midrule
100 & 99.2 & 96.2 & 0.134 & 98.9 & 95.9 & \textbf{0.116} \\
50  & 98.8 & 96.2 & 0.260 & 98.8 & 96.1 & \textbf{0.131} \\
10 & 88.6 & 92.8 & 1.630 & 97.9 & 96.1 & \textbf{0.390} \\
4  & 53.6 & 63.1 & 3.008 & 91.7 & 92.8 & \textbf{1.780} \\
\midrule
\textit{Few-Step} & \multicolumn{3}{c|}{\textbf{CFM}} & \multicolumn{3}{c}{\textbf{EFM} (ours)} \\
\cmidrule(lr){2-4} \cmidrule(lr){5-7}
\textbf{Steps} & \textbf{Valid} $\uparrow$ & \textbf{Unique} $\uparrow$ & \textbf{FCD} $\downarrow$
 & \textbf{Valid} $\uparrow$ & \textbf{Unique} $\uparrow$ & \textbf{FCD} $\downarrow$ \\
\midrule
2 & 91.1 & 97.6 & 0.49 & 90.0 & 97.9 & \textbf{0.40} \\
1 & 90.0 & 91.8 & 2.14 & 90.2 & 97.7 & \textbf{0.44} \\
\bottomrule
\end{tabular}
}
\end{small}
\end{center}
\vspace{-10pt}
\end{wraptable}
\paragraph{Results}
First, we evaluate EFlow without any distillation against the state-of-the-art discrete graph flow model DeFoG \citep{qin2024defog}. At a $100$-step budget, EFlow already surpasses DeFoG with lower FCD, with greater improvement as the step budget shrinks. From $100$ to $10$ steps, EFlow's FCD degrades by only slightly while validity and uniqueness remain above $95\%$. DeFoG's FCD and validity degrade significantly over the same range, falling to $53.6\%$ at $4$ steps, while EFlow still produces $91.7\%$ valid and $92.8\%$ unique molecules at FCD $1.780$. Notably, this is the non-distilled EFlow teacher, suggesting meaningful room for further gains in the few-step regime.

We then evaluate the distilled EFM student against categorical flow maps (CFM) \citep{roos2026categorical} in the one- to two-step regime (Table \ref{table:molecule-results}). In two steps, EFM is comparable to CFM on validity and uniqueness and achieves a lower FCD ($0.40$ vs.\ $0.49$). EFM's advantage widens at a single step, where CFM's FCD degrades to $2.14$, and uniqueness drops to $91.8$, while EFM's FCD remains strong at $0.44$ with $97.7\%$ uniqueness. At one step, EFM's FCD is nearly $5\times$ lower than CFM's ($0.44$ vs.\ $2.14$). Together, these results establish expanding flows as a single framework robust across multi-step and one-step generation of variable-sized graphs.

\subsection{Variable-Length Language Modeling}
\label{exp:language}

\paragraph{Setup and Training Framework}
Finally, we evaluate unconditional text generation on the One Billion Word benchmark (LM1B) \citep{chelba2013one}, reporting generative perplexity under GPT-2-Large \citep{radford2019language} and the entropy of the generated token distribution across both the multi-step and few-step regimes (Tables \ref{table:language-results} and \ref{table:efm-language-results}; see App \ref{app:language} for experimental details). We define a discrete expanding interpolant on sentences, where each token is assigned an independent insertion time, and a learned head predicts per-gap insertions, allowing the sequence to grow from empty to full length as a single network denoises token categories. Sampling either integrates the learned velocity via the instantaneous denoiser or mean denoiser and takes the argmax only at the final step. Due to the significantly higher vocabulary size of language data ($V=30,522$ for LM1B), we restrict the insertion times to the interval $[0,t_{\text{ins-end}}]$ to allow sufficient denoising time. The algorithmic choices specific to language are discussed in App \ref{app:discrete-efm-implementation} and experiment details are provided in App \ref{app:language}. 

\begin{table}[t]
\centering
\begin{minipage}[t]{0.42\textwidth}
\centering
\caption{\textbf{EFlow performance on LM1B.} Gen.\ PPL ($\downarrow$) and entropy across sampling budgets. FLM results are computed by running publicly released checkpoints from \citet{lee2026flow}. Best Gen. PPL values are \textbf{bolded}.}
\label{table:language-results}
\begin{small}
\resizebox{\linewidth}{!}{
\begin{tabular}{l>{\columncolor{mybg}}c>{\columncolor{mybg}}ccc}
\toprule
& \multicolumn{2}{c}{\textbf{EFlow} (ours)} & \multicolumn{2}{c}{\textbf{FLM}} \\
\cmidrule(lr){2-3} \cmidrule(lr){4-5}
\textbf{Steps} & Gen. PPL $\downarrow$ & Ent. & Gen. PPL $\downarrow$ & Ent. \\
\midrule
1024 & \textbf{103.63} & 4.16 & 111.36 & 4.30 \\
512  & \textbf{110.97} & 4.16 & 114.89 & 4.31 \\
256  & \textbf{117.69} & 4.16 & 120.19 & 4.32 \\
128  & \textbf{126.20} & 4.15 & 129.06 & 4.34 \\
64   & \textbf{133.62} & 4.14 & 143.73 & 4.36 \\
\bottomrule
\end{tabular}
}
\end{small}
\end{minipage}%
\hfill
\begin{minipage}[t]{0.55\textwidth}
\centering
\caption{\textbf{EFM performance on LM1B.} Generative PPL ($\downarrow$) and entropy ($\uparrow$) across step counts 1, 2, 4. All reported baseline results taken from \citet{lee2026flow}. Best values are \textbf{bolded} and second-best are \underline{underlined}.}
\label{table:efm-language-results}
\begin{small}
\resizebox{\linewidth}{!}{
\begin{tabular}{l cc cc cc}
\toprule
Steps & \multicolumn{2}{c}{\textbf{1}} & \multicolumn{2}{c}{\textbf{2}} & \multicolumn{2}{c}{\textbf{4}} \\
\cmidrule(lr){1-1} \cmidrule(lr){2-3} \cmidrule(lr){4-5} \cmidrule(lr){6-7}
Method & Gen. PPL $\downarrow$ & Ent. & Gen. PPL $\downarrow$ & Ent. & Gen. PPL $\downarrow$ & Ent. \\
\midrule
Duo + DCD   & 1224.52 & 4.33 & 520.08 & 4.20 & 210.88 & 4.23 \\
Duo + Di4C  & 292.94  & 3.79 & 247.69 & 3.87 & 150.67 & 4.00 \\
MDLM + SDTT & 1429.48 & 4.31 & 602.14 & 4.28 & 241.01 & 4.28 \\
MDLM + Di4C & 1217.10 & 4.38 & 621.59 & 4.37 & 247.32 & 4.00 \\
CFM         & 269.72  & 3.10 & 267.39 & 3.15 & 267.97 & 3.28 \\
FMLM        & \underline{119.34}  & 4.16 & \textbf{110.19} & 4.21 & \textbf{98.76}  & 4.21 \\
\midrule
\rowcolor{mybg} \textbf{EFM} (Ours) & \textbf{98.35} & 3.56 & \underline{125.95} & 3.95 & \underline{99.85} & 3.93 \\
\bottomrule
\end{tabular}
}
\end{small}
\end{minipage}%
\end{table}

\paragraph{Results}
As shown in Table \ref{table:language-results}, EFlow improves on the fixed-length flow language model FLM \citep{lee2026flow} at every step budget we evaluate, despite solving the strictly more expressive variable-length problem. Generative perplexity falls monotonically with the sampling budget, from $133.62$ at $64$ steps to $103.63$ at $1024$ steps. Notably, EFlow beats FLM in Gen. PPL at all step counts, indicating that variable-length conditioning does not sacrifice sample quality. Entropy is flat across the sweep at $4.14$-$4.16$, slightly below FLM's $4.30$-$4.36$, but remains coherent as shown in Figure \ref{fig:eflow_lm1b_samples}. Together, these results indicate that the expand-transport decomposition adds variable-length flexibility without sacrificing sample quality. 

We then evaluate the distilled EFM student in the few-step regime (Table \ref{table:efm-language-results}) against distilled diffusion and flow-map baselines: Duo \citep{sahoo2025diffusion} and MDLM \citep{sahoo2024simple} distilled with DCD, SDTT \citep{deschenaux2025beyond}, or Di4C \citep{hayakawa2024distillation}, categorical flow maps (CFM) \citep{roos2026categorical}, and flow map language models (FMLM) \citep{lee2026flow}. EFM generates coherent text at 2 and 4 steps, improving on all distilled diffusion baselines and on CFM by a wide margin, while remaining competitive with FMLM (example generations in Fig \ref{fig:efm_lm1b_samples}). Single-step sampling exhibits mode collapse, which we attribute to the difficulty of predicting the full-length expansion and applying the flow map over the entire interval in one shot (see App \ref{app:one-step} for further discussion). Sample entropy is also lower than the fixed-length baselines, with one likely factor being that EFM conditions on the strictly larger space of local time coordinates rather than a shared uniform schedule, and we expect performance to improve with further schedule tuning. To our knowledge, this is nonetheless the first demonstration of flow maps on variable-length discrete sequences, expanding the design space of existing flow map frameworks that operate on a fixed-length token grid with uniform time conditioning.

\section{Conclusion}

In this work, we introduce \textbf{Expanding Generative Flows} (EFlow) and \textbf{Expanding Flow Maps} (EFMs), a unified framework that recasts generative modeling as learning a transport along an expanding interpolant between marginals of increasing dimensionality. We introduce the \textit{expand operator}, which projects a state into a higher-dimensional space via conditional noise augmentation, and the \textit{transport map}, which pushes the expanded state forward toward the target distribution. By extending continuous- and discrete-space consistency identities to this setting, EFMs enable efficient few-step generation across dimensionalities within a single formulation, removing the limitation of flow-based generators to a fixed-dimensional state space. 

\section{Limitations and Future Work}
The current version of the paper validates EFlow and EFM on a diverse array of modalities, with GEOM-QM9 having a maximum of 29 atoms, GEOM-Drugs a maximum of 181 atoms, and LM1B capped at 128-length sequences. We theoretically show the stability of our Binomial insertion parameterization and transport map on variable-sized (active dimensionality) inputs. However, the complexity introduced by conditioning on local time coordinates for each dimension (over a fixed schedule) and the larger space of interpolants per clean data sample suggests that further tuning is required to scale to larger dimensionalities, which we were unable to fully explore empirically due to the large computational expense. The general construction of several components of our framework also suggests future extensions. For instance, the conditional noise distribution is not restricted to be defined as Gaussian noise and can be an additional learnable parameter, where the mean and covariance from which the inserted dimensions are sampled are parameterized. Furthermore, different interpolants that are \textit{conditioned} on the insertion time, or enforcing some structure to the ordering of insertion times, remain promising directions to be explored. Finally, while we restrict ourselves to interpolants of \textit{increasing dimensionality}, we note that decreasing dimensionality along the interpolant can be considered as an operation in the PDMP construction.

\section*{Declarations}
\paragraph{Acknowledgements} We thank Mark III Systems for providing database and hardware support that has contributed to the research reported within this manuscript. 

\paragraph{Author Contributions} S.T. devised and developed model architectures and theoretical formulations and performed experiments. S.T. drafted the manuscript and designed the figures. P.C. supervised and directed the study, and reviewed and finalized the manuscript.

\paragraph{Data and Materials Availability} The codebase is freely accessible to the academic community at \url{https://github.com/sophtang/ExpandingFlowMaps} and at \url{https://huggingface.co/ChatterjeeLab/ExpandingFlowMaps}.

\paragraph{Funding Statement} This research was supported by NIH grant R35GM155282 to the lab of P.C..

\bibliographystyle{acl_natbib.bst}
\bibliography{citation.bib}

\clearpage
\beginappendix
\startcontents[app]
\printcontents[app]{l}{1}{\setcounter{tocdepth}{2}}

\newpage

\section{Related Works}
\label{app:related-works}
\paragraph{Few-Step Generative Models}
Generation under a flow- or diffusion-based model requires numerical integration of an ODE or SDE, and inference cost scales with the number of function evaluations. A long line of work amortizes this cost by learning a solution operator that maps between two points on the generative trajectory in a single network call. Consistency models \citep{song2023consistency, song2023improved} train networks to be self-consistent along the probability-flow ODE, and flow maps generalize this by transporting samples between any two times $s$ and $t$, recovering consistency models at $s=0$ and the instantaneous velocity as $s\to t$. Recent work develops the theory and training recipes for flow maps \citep{boffi2025build, geng2025mean, frans2024one, guo2025splitmeanflow, sabour2025align}, with parallel efforts on few-step discrete language generation \citep{hu2026elf, deschenaux2025beyond, gloeckle2024better, hayakawa2024distillation, yoo2026redi, roos2026categorical, lee2026flow, potaptchik2026discrete}.

\paragraph{Continuous Flows for Discrete Generation}
Discrete diffusion \citep{shi2024simplified, sahoo2024simple, ou2024your, zheng2024masked} needs many function evaluations for accurate generation, motivating extensions of flow maps to discrete domains where standard regression losses are unstable. Continuous-space parameterizations for discrete generation operate either in simplex space \citep{stark2024dirichlet, davis2024fisher, tang2025gumbel} or in continuous embedding spaces \citep{cheng2025alpha, chen2026langflow, dieleman2022continuous, hu2026elf, deschenaux2026language, yang2026continuous}. Building on these, FMLMs \citep{lee2026flow} and DFMs \citep{potaptchik2026discrete} distill a continuous flow over simplex-embedded tokens into a few-step generator trained with simplex-aware cross-entropy, while CFM \citep{roos2026categorical} self-distills a flow-matching model constrained to the simplex. Together, these show the flow-map recipe transfers to categorical data once the loss matches the simplex geometry.

\paragraph{Stochastic Flow Maps}
Standard flow maps are deterministic, but posterior sampling, reward alignment, and inverse problems require many independent endpoints from a single intermediate state. Flow Map Matching \citep{boffi2024flow} builds flow maps for stochastic interpolants that generalize consistency models. Diamond Maps \citep{holderrieth2026diamond} amortize many SDE steps into one call while preserving the noise injection needed for value estimation, distilled from GLASS Flows \citep{holderrieth2025glass}. Meta Flow Maps \citep{potaptchik2026meta} condition on an intermediate pair $(\boldsymbol{x},t)$ to learn a family of conditional flow maps generating the posterior $p_{u|t}(\cdot\mid\boldsymbol{x})$, recovering the clean posterior $p_{1|t}(\cdot\mid\boldsymbol{x})$ at $u=1$.

\paragraph{Variable-Length Generation}
Most non-autoregressive models, including masked diffusion and standard discrete flows, operate on a fixed-length canvas that forces a length commitment upfront and wastes compute on padding. A growing body of work removes this restriction. FlexMDMs \citep{kim2025any} extend stochastic interpolants \citep{albergo2025stochastic, albergo2022building} to build sequences by inserting and then unmasking tokens, and can retrofit pretrained models like LLaDA-8B. Edit Flows \citep{havasi2025edit} model generation as a continuous-time Markov chain over insertion, deletion, and substitution edits, and OneFlow \citep{nguyen2025oneflow} extends this to interleaved multimodal text-and-image outputs. Within masked diffusion, several methods adjust length via the model's own predictions, including block diffusion \citep{arriola2025block}, DAEDAL \citep{li2025beyond}, $\rho$-EOS \citep{yang2026rho}, DreamOn \citep{wu2026dreamon}, and Deletion-Insertion Diffusion \citep{ding2026beyond}. A2D2 \citep{tang2026a2d2} formulates any-length generation as a controlled continuous-time Markov chain and jointly optimizes insertion and unmasking policies for reward-guided fine-tuning. These works establish variable-length generation but remain outside the flow-map paradigm, motivating how to combine length flexibility with few-step flow-map efficiency.

\section{Theoretical Proofs}

\subsection{Notation}
\paragraph{State spaces and time variables}
We write $s,u,t\in[0,1]$ for time coordinates and $d(t):[0,1]\to\mathbb{N}$ for a non-decreasing dimension schedule. In the continuous setting $\mathbb{R}^{d(t)}$ is the state space at time $t$ and, in the discrete setting $\mathbb{R}^{d(t)\times V}$ is the space of length-$d(t)$ sequences whose tokens are $V$-dimensional latent vectors that resolve onto the simplex $\Delta^{V-1}\subset\mathbb{R}^V$. The PDMP of Section~\ref{sec:expanding-flows} lives on the disjoint union $\mathcal{M}:=\bigsqcup_{t}\mathbb{R}^{d(t)}$. For each coordinate $i$, $t_i^{\text{ins}}$ is its insertion time and $t_i=(t-t_i^{\text{ins}})/(1-t_i^{\text{ins}})$ its local time. The vector of local times is denoted $\boldsymbol{t}_{\text{local}}$.

\paragraph{States and distributions}
$\boldsymbol{x}_t$ denotes the state at time $t$, a latent vector in $\mathbb{R}^{d(t)}$ (continuous) or a sequence of latent vectors in $\mathbb{R}^{d(t)\times V}$ (discrete). We write $\boldsymbol{x}_0\sim p_0$ for a prior sample, $\boldsymbol{x}_1\sim p_1$ for a clean data sample, and $\boldsymbol{x}_1^{(t)}$ for the data sample restricted to the coordinates present at time $t$. The marginal law is $p_t\in\mathcal{P}(\mathbb{R}^{d(t)})$ and the conditional law of the jump is $p_{t|s}(\cdot\mid\boldsymbol{x}_s)$. Augmented noise $\boldsymbol{\epsilon}\sim p_{\epsilon|s,t}$ lifts the dimension from $d(s)$ to $d(t)$, giving the augmented state $\boldsymbol{x}^\epsilon_s:=\mathcal{E}_{s,t}(\boldsymbol{x}_s)=\mathcal{E}_{s,t}(\boldsymbol{x}_s,\boldsymbol{\epsilon})$.

\paragraph{Operators}
$\mathcal{E}_{s,t}$ is the expand operator, $X_{s,t}$ the transport map in the augmented space, and $\Phi_{s,t}:=X_{s,t}\circ\mathcal{E}_{s,t}$ the full expanding flow map. Transport is parameterized by the average velocity $v_{s,t}$ in the continuous setting and by the mean denoiser $\psi_{s,t}$ (valued on the simplex) in the discrete setting. On the diagonal $t=s$, $b_t$ is the instantaneous velocity and $D_t$ the instantaneous denoiser. In the discrete setting, $\alpha_t$ is the insertion schedule with $\Pr[t_i^{\text{ins}}\le t]=\alpha_t$, $\mathcal{I}_{s,t}(\boldsymbol{x}_s)[i]$ the per-gap insertion expectation, and $\lambda_t$ the insertion intensity of the PDMP. Hats denote network-parameterized objects ($\hat{v}_{s,t}$, $\hat{\psi}_{s,t}$, $\hat{\mathcal{E}}_{s,t}$, $\hat{\mathcal{I}}_{s,t}$) and $\text{sg}(\cdot)$ the stop-gradient operator used in the consistency objectives.

\subsection{Assumptions}

\paragraph{Assumptions on the Expand Operator}
The expand operator $\mathcal{E}_{s,t}$ obeys the following structural properties:

\begin{assumption}[Fixed-noise differentiability]\label{ass:A1}
Once the augmented noise $\beps$ is sampled, $\Estt(\bx,\beps)$ depends differentiably on $\bx$, and its $\bx$-derivative is the embedding matrix $E\in\Real^{d(t)\times d(s)}$ that places existing coordinates at their target positions and fills the rest with the entries of $\beps$. In particular, $\partial_{t}\Estt(\bx,\beps)=0$ and $\partial_{s}\Estt(\bx,\beps)=0$ at fixed $\beps$ and fixed placement, and $\nabla_{\bx}\Estt(\bx,\beps)=E$.
\end{assumption}

\begin{assumption}[Composability]\label{ass:A2}
For $s\le u\le t$, the placements compose: $\Estt=\E_{u,t}\circ\E_{s,u}$ when the noise samples are drawn consistently from the disintegration $p_{\epsilon|s,t}=p_{\epsilon| u,t}\otimes p_{\epsilon| s,u}$.
\end{assumption}

\begin{assumption}[Inactive coordinates are unchanged]\label{ass:A3}
For $s\le u$, the augmented-space flow $X_{s,u}$ on $\Real^{d(t)}$ does not change the $d(t)-d(u)$ coordinates that are scheduled to be inserted only at times $\ge u$. Equivalently, $X^{(t)}_{s,u}\circ\E_{u,t}=\E_{u,t}\circ X_{s,u}$ on the image of $\E_{s,u}$.
\end{assumption}

\begin{assumption}[Active-inactive velocity definition]\label{ass:A4}
On the image of $\Estt$, the diagonal velocity has no component along inactive coordinates, such that there exists $\tilde v_{s,s}:\Real^{d(s)}\to\Real^{d(s)}$ such that:
\begin{align}
    v_{s,s}(\Estt(\bx))=E\tilde v_{s,s}(\bx),\quad E:=\nabla_{\bx}\Estt(\bx,\beps)\in\Real^{d(t)\times d(s)}
\end{align}
\end{assumption}

Intuitively, \ref{ass:A3} states that the flow does not transform coordinates before they are inserted. \ref{ass:A4} extends \ref{ass:A3} by stating that if inactive coordinates do not move under the augmented flow, then their instantaneous velocity vanishes, so $v_{s,s}(\Estt(\bx))$ is supported on the active block (i.e., in the column space of $E$). This means that it can be reduced back to a $d(s)$-dimensional velocity $\tilde v_{s,s}:\Real^{d(s)}\to \Real^{d(s)}$.

\subsection{Assumptions Hold In Practice}
\label{app:assumptions-experiments}

The assumptions on the expand operator split into two groups. \ref{ass:A1} (fixed-noise differentiability) and \ref{ass:A2} (composability) constrain only the placement map of $\mathcal{E}_{s,t}$ and hold for any learned velocity, while \ref{ass:A3} (inactive coordinates are unchanged) and \ref{ass:A4} (active-inactive split) constrain the learned dynamics and are enforced by the local-time velocity of (\ref{eq:flow-velocity-field}). We verify both groups hold by construction in each experiment.

\paragraph{\ref{ass:A1} and \ref{ass:A2} are automatically enforced by the expand operation.}
In every experiment, $\mathcal{E}_{s,t}$ only decides which slot each coordinate goes in. It copies the existing coordinates of $\bx_s$ into their slots and drops the noise $\beps$ into the new ones, without transforming any values. Once the noise and the slots are fixed, the output is just $\bx_s$ rearranged plus a constant, so it is affine in $\bx_s$ with derivative equal to the fixed $0/1$ slot-selection matrix $E$ and no dependence on time, giving \ref{ass:A1}. This holds for concatenation, gap-wise insertion, and child expansion. 

Composability (\ref{ass:A2}) then says that inserting in two steps ($s\to u\to t$) lands each coordinate in the same slot as inserting in one step ($s\to t$), which holds because concatenating slots is associative. The only condition is that the augmented noise be drawn consistently across the two paths. For a fixed schedule, each token's insertion time satisfies $\text{Pr}[t_i^{\text{ins}}\leq t]=\alpha_t$, which satisfies the disintegration $p_{\epsilon|s,t}=p_{\epsilon|u,t}\otimes p_{\epsilon|s,u}$. When the per-gap allocation is learned (language), the schedule factor in (\ref{eq:insert-expectation}) and the two-time insertion loss constrain the head to respect this disintegration, so composability is enforced by the fixed schedule.

\paragraph{\ref{ass:A3} and \ref{ass:A4} are enforced by the velocity mask.}
The velocity field sets $b_t(\bx^\epsilon_t)[i]=\boldsymbol{0}$ for every uninserted coordinate ($t^{\text{ins}}_i>t$), so uninserted coordinates never move (\ref{ass:A3}) and $v_{s,s}(\mathcal{E}_{s,t}(\bx))$ is supported on the active block spanned by $E$ (\ref{ass:A4}). This mask is applied at the parameterization level and does not rely on training. In conformer generation, the heavy-atom backbone is denoised and frozen before hydrogens are inserted, which is \ref{ass:A3} applied to resolved atoms. In discrete graphs, the atom count is fixed and uninserted node and edge coordinates stay empty until inserted. In language, a position enters the mean-denoiser loss only after being inserted, so the denoiser emits no gradient for empty positions.

\subsection{Proof of Proposition \ref{prop:pdmp}}
\label{app-prop:pdmp}

\begin{tcolorbox}[sharp corners, colback=mybg, boxrule=0pt]
\pdmp*
\end{tcolorbox}

\textit{Proof.}
A piecewise-deterministic Markov process (PDMP) on $\mathcal{M}$ is characterized by (i) a deterministic flow $\dot \bx_\tau$, (ii) a jump intensity $\lambda_t\geq 0$, and (iii) a transition kernel $Q_t$ of the jump process. The generator of a PDMP is given by:
\begin{align}
    \mathcal{L}_tf=\nabla f\cdot b_t+\lambda_t\int\left(f(\by)-f(\bx)\right)Q_t(\bx, d\by)\label{eq:app-pdmp-generator}
\end{align}
Because the jump intensity and transport depend on how far each coordinate has been denoised, we track the full state $\boldsymbol{S}_t:=(\bx_t, \boldsymbol{t}_{\text{local}})$, where $\boldsymbol{t}_{\text{local}}=\{t_i\}$ collects the local time coordinates $t_i=(t-t_i^{\text{ins}})/(1-t_i^{\text{ins}})$ of the inserted coordinates (equivalently the insertion times, by the bijection at fixed $t$). We write each component of the expanding generative flow as a component of the PDMP generator.

\textit{Step 1: Deterministic Flow. }
Given a ordered set of insertion times $\{t_k^{\text{ins}}\}$, the time interval between two consecutive insertion times $(t^{\text{ins}}_k, t^{\text{ins}}_{k+1})$ has fixed dimension $d(t)$ and following Def \ref{def:expanding-flow}, the state evolves via an augmented-space ODE $\dot \bx_\tau=b_\tau(\bx_\tau)$ on the fixed-dimensional space $\bx_\tau\in \mathbb{R}^{d(t^{\text{ins}+}_{k})}$, where the superscript $+$ denotes the right limit, i.e.\ the dimension immediately after the $k$th insertion. For any test function $f\in C^1_b$ with bounded, continuous first derivative, the chain rule yields:
\begin{align}
    \tfrac{d}{dt}f(\bx_t)=\dot \bx_t\cdot \nabla f(\bx_t)=b_t(\bx_t)\cdot \nabla f(\bx_t)
\end{align}
which is the transport term in (\ref{eq:app-pdmp-generator}). By \ref{ass:A3}, $b_\tau(\bx)[i]=0$ for every uninserted coordinate $i$ and all $\bx$, so the velocity field maps into the active subspace and its flow keeps the uninserted coordinates fixed for the whole interval. Hence $b_t\cdot \nabla f$ is supported on the inserted coordinates and the flow stays bound to $\mathbb{R}^{d(t^{\text{ins}+}_{k})}$.

\textit{Step 2: Jump Between Dimensions. }
At each insertion time $t^{\text{ins}}_k$, the state makes a discontinuous jump from $\bx\in \mathbb{R}^{d(t^{\text{ins}}_{k-})}$ to the augmented state $\bx^\epsilon=\mathcal{E}_{t^{\text{ins}}_{k}}(\bx, \boldsymbol{\epsilon})\in \mathbb{R}^{d(t^{\text{ins}+}_{k})}$ where $\boldsymbol{\epsilon}\sim p_{\eps|t^{\text{ins}}_k}$. Since the expanding operator $\mathcal{E}_{t^{\text{ins}}_{k}}(\bx, \boldsymbol{\epsilon})$ depends only on the current state $\bx$ and freshly sampled noise $\boldsymbol{\epsilon}$, the jump is Markov. 

By \ref{ass:A1}, for fixed noise $\mathcal{E}_t(\cdot,\boldsymbol{\epsilon})$ is affine in both $\bx$ and  $\boldsymbol{\epsilon}$, so as a map of the pair $(\bx,\boldsymbol{\epsilon})$ it is continuous and Borel measurable. This guarantees that the preimage $\{\boldsymbol{\epsilon}:\mathcal{E}_t(\bx, \boldsymbol{\epsilon})\in A\}$ is measurable and the transition kernel $Q_t(\bx, A)$, defined as the probability of the expanded state starting from $\bx$ landing in a measurable set $A$, is well-defined. Given $\bx$, the stochasticity is dependent on $\boldsymbol{\epsilon}$, so the probability of landing in $A$ is the probability that $\boldsymbol{\epsilon}$ falls in the preimage set, equal to the pushforward measure:
\begin{align}
    Q_t(\bx, A)=p_{\epsilon|t}(\{\boldsymbol{\epsilon}:\mathcal{E}_t(\bx, \boldsymbol{\epsilon})\in A\})=\left(\left(\mathcal{E}_t(\bx, \cdot)\right)_{\#}p_{\epsilon|t}\right)(A)
\end{align}
so $Q_t(\bx,\cdot)$ is the law of $\mathcal{E}_t(\bx,\boldsymbol{\epsilon})$, a Markov kernel. The generator of a Markov jump process is the expected instantaneous change in the test function with jump intensity $\lambda_t$: 
\begin{align}
    \lambda_t\int\left(f(\by)-f(\bx)\right)Q_t(\bx, d\by)=\lambda_t\int\left(f(\mathcal{E}_t(\bx, \boldsymbol{\epsilon})\right)-f(\bx))p_{\epsilon|t}(d\boldsymbol{\epsilon})
\end{align}
where we substitute the post-jump state with the expanded state $\mathcal{E}_t(\bx, \boldsymbol{\epsilon})$ and integrate over $\boldsymbol{\epsilon}\sim p_{\epsilon|t}$.

\textit{Step 3: Markov Property. }
The post-jump state depends only on $(\bx, \boldsymbol{\epsilon})$ with $\boldsymbol{\epsilon}$ drawn independently of the past. Between jumps the local times advance deterministically by $\dot t_i=(1-t_i^{\text{ins}})^{-1}$ and a newly inserted coordinate enters at $t_i=0$, so $\boldsymbol{t}_{\text{local}}$ is determined by the state. 

Since between jumps $\boldsymbol{S}_t$ evolves deterministically and at a jump depends only on $(\bx,\boldsymbol{\epsilon})$, it depends on the past only through $\boldsymbol{S}_{t^-}$, so $(\boldsymbol{S}_t)_{t\in [0,1]}$ is Markov. The state process $(\bx_t)_{t\in [0,1]}$ is not Markov on its own, as its intensity and transport require $\boldsymbol{t}_{\text{local}}$. By Prop \ref{prop:pushforward}, the conditional law is $p_{t|s}(\cdot\mid\bx_s)=(\mathcal{E}_{s,t}(\bx_s,\cdot))_{\#}p_{\epsilon|s,t}$, and taking $s\uparrow t$ and comparing with the kernel above gives $Q_t(\boldsymbol{x}_s,\cdot)=p_{t|s}(\cdot\mid\boldsymbol{x}_s)$, where the disintegration $p_{\epsilon|s,t}=p_{\epsilon|u,t}\otimes p_{\epsilon|s,u}$ is \ref{ass:A2}.

\textit{Step 4: Jump Intensity in Discrete Case. }
A coordinate inserted with schedule $\Pr[t^{\text{ins}}_i\le t]=\alpha_t$ has survival $1-\alpha_t$ and hazard $-\tfrac{d}{dt}\log(1-\alpha_t)=\dot\alpha_t/(1-\alpha_t)$ (the insertion rate for each expected uninserted token) so summing over gaps $i$ gives $\lambda_t(\bx)=\tfrac{\dot\alpha_t}{1-\alpha_t}\sum_i\mathcal{I}_t(\bx)[i]$. Since each jump raises the dimension by at least one and $d(1)<\infty$, there are finitely many jumps on $[0,1]$ and the process does not explode. Combining the generator components from the deterministic flow, insertion jump, and jump intensity yields the generator for the PDMP of the expanding flow in (\ref{eq:pdmp-generator}). \hfill $\square$

\subsection{Proof of Proposition \ref{prop:pushforward}}
\label{app-prop:pushforward}

\begin{tcolorbox}[sharp corners, colback=mybg, boxrule=0pt]
\pushforward*
\end{tcolorbox}

\textit{Proof.}
Fix $\boldsymbol{x}_s\in\mathbb{R}^{d(s)}$ and hold it constant throughout, so that $\mathcal{E}_{s,t}(\boldsymbol{x}_s,\cdot):\mathbb{R}^{d(t)-d(s)}\to\mathbb{R}^{d(t)}$ is a measurable map in the augmented noise alone (measurability holds for each instantiation, since concatenation, positional insertion, and coordinate refinement are all compositions of coordinate embeddings and permutations, which are Borel measurable). The augmented state is the random variable:
\begin{align}
    \boldsymbol{x}_t := \mathcal{E}_{s,t}(\boldsymbol{x}_s,\boldsymbol{\epsilon}), \quad \boldsymbol{\epsilon}\sim p_{\epsilon\mid s,t}(\cdot\mid\boldsymbol{x}_s),
\end{align}
and by definition the conditional law $p_{t\mid s}(\cdot\mid\boldsymbol{x}_s)$ is the distribution of $\boldsymbol{x}_t$ given $\boldsymbol{x}_s$, i.e. the law of $\mathcal{E}_{s,t}(\boldsymbol{x}_s,\boldsymbol{\epsilon})$ under the noise law.

By the definition of the pushforward measure, for every Borel set $A\subseteq\mathbb{R}^{d(t)}$,
\begin{align}
    p_{t\mid s}(A\mid\boldsymbol{x}_s)
    &= \Pr\big[\mathcal{E}_{s,t}(\boldsymbol{x}_s,\boldsymbol{\epsilon})\in A \;\big|\; \boldsymbol{x}_s\big] \\
    &= \Pr\big[\boldsymbol{\epsilon}\in \mathcal{E}_{s,t}(\boldsymbol{x}_s,\cdot)^{-1}(A) \;\big|\; \boldsymbol{x}_s\big] \\
    &= p_{\epsilon\mid s,t}\big(\mathcal{E}_{s,t}(\boldsymbol{x}_s,\cdot)^{-1}(A)\;\big|\;\boldsymbol{x}_s\big) \\
    &= \big(\mathcal{E}_{s,t}(\boldsymbol{x}_s,\cdot)\big)_{\#} p_{\epsilon\mid s,t}(\cdot\mid\boldsymbol{x}_s) (A),
\end{align}
where the third line uses that the preimage $\mathcal{E}_{s,t}(\boldsymbol{x}_s,\cdot)^{-1}(A)$ is measurable, and the last line is exactly the definition of the pushforward of $p_{\epsilon\mid s,t}(\cdot\mid\boldsymbol{x}_s)$ under the map $\mathcal{E}_{s,t}(\boldsymbol{x}_s,\cdot)$. Since this holds for all Borel $A$, the two measures agree, giving (\ref{eq:expanding-cond-law}).\hfill $\square$

\subsection{Proof of Proposition \ref{prop:expanding-fm-identities}}
\label{app-prop:expanding-fm-identities}
\paragraph{Setup}
Let $\boldsymbol{x}^\epsilon_s=\mathcal{E}_{s,t}(\bx)\in\mathbb{R}^{d(t)}$ and $\Xst:\mathbb{R}^{d(t)}\to\mathbb{R}^{d(t)}$ be the time-$t$ flow map of the augmented-space ODE $\dot{\boldsymbol{z}}_{\tau}=b_{\tau}(\boldsymbol{z}_{\tau})$ on $\mathbb{R}^{d(t)}$ with $\boldsymbol{z}_{s}=\boldsymbol{x}_s^\epsilon$. By the proofs in \citet{boffi2025build}, $\Xst$ satisfies the standard fixed-dimensional flow map identities. Here, we lift these through $\mathcal{E}_{s,t}$ to $\Phi_{s,t}=X_{s,t}\circ\mathcal{E}_{s,t}$.

\begin{tcolorbox}[sharp corners, colback=mybg, boxrule=0pt]
\expandingfm*
\end{tcolorbox}

\textit{Proof.} Throughout this proof, we fix the augmented noise $\beps$ and simplify notation by writing $\Estt(\bx)\equiv\Estt(\bx,\beps)$. By \ref{ass:A1} the placement is held fixed, so the Lagrangian and Eulerian identities are local statements valid on any interval of times over which the dimension schedule $d(\cdot)$ is constant, i.e., between consecutive insertion times. The semigroup identity requires no differentiability and holds for all $0\le s\le u\le t\le 1$. We prove each identity one by one.

\paragraph{Lagrangian Identity}
By the chain rule applied to $\Phisst(\bx)=\Xst(\Estt(\bx))$, we have:
\begin{align}
    \partial_{t}\Phisst(\bx)
    =\partial_{t}\Xst(\by)\big|_{\by=\Estt(\bx)}
    +\nabla_{\by}\Xst(\by)\big|_{\by=\Estt(\bx)}\cdot\underbrace{\partial_{t}\Estt(\bx,\beps)}_{=0\;\text{by \ref{ass:A1}}}
    =\partial_{t}\Xst(\by)\big|_{\by=\Estt(\bx)}
\end{align}
The standard Lagrangian condition (\ref{eq:consistency-constraint}) in the fixed-dimensional augmented  space $\Real^{d(t)}$, $\partial_{t}\Xst(\by)=v_{t,t}(\Xst(\by))$, evaluated at $\by=\Estt(\bx)$, gives:
\begin{align}
    \partial_{t}\Phisst(\bx)=v_{t,t}\big(\Xst(\Estt(\bx))\big)=v_{t,t}(\Phisst(\bx)),
\end{align}
which is the Lagrangian identity in (\ref{eq:expanding-fm-identities}).

\paragraph{Eulerian Identity}
By \ref{ass:A1}, $\partial_{s}\Estt(\bx,\beps)=0$ at fixed $\beps$, so the chain rule in $s$ gives:
\begin{align}\label{eq:chain-s}
    \partial_{s}\Phisst(\bx)&=\partial_{s}\Xst(\by)\big|_{\by=\Estt(\bx)}
    +\nabla_{\by}\Xst(\by)\big|_{\by=\Estt(\bx)}\cdot\underbrace{\partial_{s}\Estt(\bx,\beps)}_{=0}\nonumber\\
    &=\partial_{s}\Xst(\by)\big|_{\by=\Estt(\bx)}
\end{align}
Similarly, since $\nabla_{\bx}\Estt(\bx,\beps)=E\in\Real^{d(t)\times d(s)}$ by \ref{ass:A1}, the chain rule in $\bx$ gives:
\begin{align}\label{eq:chain-x}
    \nabla_{\bx}\Phisst(\bx)=\nabla_{\by}\Xst(\by)\big|_{\by=\Estt(\bx)} E\;\in\;\Real^{d(t)\times d(s)}
\end{align}
The standard Eulerian condition (\ref{eq:consistency-constraint}) on the augmented space $\Real^{d(t)}$, evaluated at $\by=\Estt(\bx)$, is given by:
\begin{align}\label{eq:euler-aug}
    \partial_{s}\Xst(\by)+\nabla_{\by}\Xst(\by) v_{s,s}(\by)=0
\end{align}
By \ref{ass:A4}, on the image of $\Estt$ the diagonal velocity satisfies $v_{s,s}(\Estt(\bx))=E\tilde v_{s,s}(\bx)$ with $\tilde v_{s,s}(\bx)\in\Real^{d(s)}$. Substituting into the second term of (\ref{eq:euler-aug}) and using (\ref{eq:chain-x}), we get:
\begin{align}
    \nabla_{\by}\Xst(\by)\big|_{\by=\Estt(\bx)}v_{s,s}(\Estt(\bx))
    =\nabla_{\by}\Xst(\by)\big|_{\by=\Estt(\bx)}E \tilde v_{s,s}(\bx)
    \stackrel{(\ref{eq:chain-x})}{=}\nabla_{\bx}\Phisst(\bx) \tilde v_{s,s}(\bx)
\end{align}
Combining with (\ref{eq:chain-s}) yields:
\begin{align}
    \partial_{s}\Phisst(\bx)+\nabla_{\bx}\Phisst(\bx) \tilde v_{s,s}(\bx)=0,
\end{align}
which is the Eulerian identity in (\ref{eq:expanding-fm-identities}). Both terms are vectors in $\Real^{d(t)}$, since $\nabla_{\bx}\Phisst(\bx)\in\Real^{d(t)\times d(s)}$ multiplies the vector $\tilde v_{s,s}(\bx)\in\Real^{d(s)}$ to give a vector in $\Real^{d(t)}$, matching $\partial \Phisst(\bx)$.

\paragraph{Semigroup Identity}
Fix $0\le s\le u\le t\le 1$. The maps involved act between spaces of different dimension:
\begin{align}
    \Phi_{s,t}:\Real^{d(s)}\to \Real^{d(t)},\quad 
    \Phi_{s,u}:\Real^{d(s)}\to \Real^{d(u)},\quad 
    \Phi_{u,t}:\Real^{d(u)}\to \Real^{d(t)},
\end{align}
so the claim is that expanding and transporting from $d(s)$ to $d(t)$ in one step agrees with doing so in two steps through $d(u)$. 

Let $X^{(t)}_{s,u}:\Real^{d(t)}\to\Real^{d(t)}$ denote the transport map on the augmented space $\Real^{d(t)}$ over $[s,u]$. By (\ref{eq:flow-velocity-field}) its velocity vanishes on coordinates scheduled for insertion at times $\ge u$, so $X^{(t)}_{s,u}$ acts as $X_{s,u}$ on the coordinates active by time $u$ and as the identity on the remaining $d(t)-d(u)$ coordinates. Since $X^{(t)}_{s,u}$, $X_{u,t}$, and $X_{s,t}$ are all flow maps of the same ODE on the fixed space $\Real^{d(t)}$, the standard semigroup condition (\ref{eq:consistency-constraint}) gives
\begin{align}\label{eq:semi-2}
    \Phi_{s,t}(\bx)=X_{s,t}(\Estt(\bx))=X_{u,t}\big(X^{(t)}_{s,u}(\Estt(\bx))\big).
\end{align}
By \ref{ass:A2}, the placements compose, $\Estt=\E_{u,t}\circ\E_{s,u}$, and by \ref{ass:A3} the augmented-space flow leaves the coordinates inserted at times $\ge u$ unchanged, so on the image of $\E_{s,u}$, we have: 
\begin{align}\label{eq:semi-1}
    X^{(t)}_{s,u}\circ\E_{u,t}=\E_{u,t}\circ X_{s,u}
\end{align}
Substituting \ref{ass:A2} and then (\ref{eq:semi-1}) into (\ref{eq:semi-2}), we get:
\begin{align}
    \Phi_{s,t}(\bx)
    &=X_{u,t}\Big(X^{(t)}_{s,u}\big(\E_{u,t}(\E_{s,u}(\bx))\big)\Big)
    =X_{u,t}\Big(\E_{u,t}\big(X_{s,u}(\E_{s,u}(\bx))\big)\Big)\nonumber\\
    &=X_{u,t}\big(\E_{u,t}(\Phi_{s,u}(\bx))\big)
    =\Phi_{u,t}(\Phi_{s,u}(\bx))
\end{align}
which is the semigroup identity in (\ref{eq:expanding-fm-identities}).\hfill $\square$

\subsection{Proof of Proposition \ref{prop:stochasticfm}}
\label{app-prop:stochasticfm}
\begin{tcolorbox}[sharp corners, colback=mybg, boxrule=0pt]
\stochasticfm*
\end{tcolorbox}

\textit{Proof.} By Definition \ref{def:expanding-flow}, the flow map on the augmented space $X_{s,t}:\mathbb{R}^{d(s)}\times \mathbb{R}^{d(t)-d(s)}\to \mathbb{R}^{d(t)}$ satisfies the pushfoward identity:
\begin{align}
    (X_{s,t})_{\#}p^\epsilon_{s,t}=(X_{s,t})_{\#}(p_{s}\otimes p_{\epsilon|s,t})=p_t
\end{align}
Using the disintegration theorem \citep{kallenberg1997foundations}, we can disintegrate both sides by the marginal $p_s$. The LHS disintegrates by the property that, for fixed $\boldsymbol{x}_s$, the map $X_{s,t}(\boldsymbol{x}_s,\cdot):\Real^{d(t)-d(s)}\to \mathbb{R}^{d(t)}$ is a function of $\beps$ alone, so the conditional measure given $\boldsymbol{x}_s$ is the pushforward of $\peps$ along $\Xst(\bx_{s},\cdot)$:
\begin{align}
    [(X_{s,t})_{\#}p^\eps_{s,t}](\cdot|\bx_s)=(\Xst(\bx_s,\cdot))_\#p_{\eps|s,t}
\end{align}
The RHS of the disintegration written through the joint $p_{s,t}(\bx_s,\bx_t )$, is given by the conditional distribution $p_{t|s}(\cdot| \bx_{s})$:
\begin{align}
    p_{t}(\cdot)=\int p_{t| s}(\cdot\mid x_{s})p_{s}(dx_{s})
\end{align}
By the uniqueness of disintegration, we have:
\begin{align}
    p_{t|s}(\cdot| \bx_{s})=\big(\Xst(\bx_{s},\cdot)\big)_{\#}\peps
\end{align}
The clean posterior is recovered when $t=1$:
\begin{align}
    p_{1| s}(\cdot| \bx_{s})=\big(X_{s,1}(\bx_{s},\cdot)\big)_{\#}p_{\eps| s,1}
\end{align}
which concludes the proof. \hfill $\square$

\subsection{Proof of Proposition \ref{prop:discrete-efm}}
\label{app-prop:discrete-efm}
In the discrete expanding flow map setting, we define the parameterization:
\begin{align}
    \Phisst(\bx_{s})=\Xst(\Estt(\bx_{s})), \quad \Xst(\by)=\tfrac{1-t}{1-s}\by+\tfrac{t-s}{1-s}\psist(\by)
\end{align}
which satisfies the tangent condition $\psi_{s,s}(\by)=\by+(1-s)b_{s}(\by)$. We use (\ref{ass:A1}-\ref{ass:A4}) from Proposition \ref{app-prop:expanding-fm-identities} and the additional assumption for the mean denoiser:

\begin{assumption}[Diagonal-expand consistency]\label{ass:A5}
$\psi^{(t)}_{s,s}(\Estt(\bx))=\Estt(\psi_{s,s}(\bx))$ for the diagonal denoiser, where $\psi^{(t)}_{s,s}:\Real^{d(t)\times V}\to(\Delta^{V-1})^{d(t)}$ is the diagonal denoiser at the augmented dimension $d(t)$ and $\psi_{s,s}:\Real^{d(s)\times V}\to(\Delta^{V-1})^{d(s)}$ at the native dimension $d(s)$, so both sides lie in $(\Delta^{V-1})^{d(t)}$. Equivalently, the model's clean prediction at the \textit{diagonal} on a newly lifted state equals the lift of its clean prediction on the unlifted state. 
\end{assumption}

\begin{tcolorbox}[sharp corners, colback=mybg, boxrule=0pt]
\discreteefm*
\end{tcolorbox}
\textit{Proof.} Throughout this proof, we denote $\by:=\xseps=\Estt(\bx_{s})$. 

\paragraph{Lagrangian Identity} 
Differentiating $\Xst(\by)=\tfrac{1-t}{1-s}\by+\tfrac{t-s}{1-s}\psist(\by)$ in $t$ at fixed $s,\by$, we get:
\begin{align}
    \partial_{t}\Xst(\by)=\frac{\psist(\by)-\by}{1-s}+\frac{t-s}{1-s}\partial_{t}\psist(\by)\label{eq:dXdt}
\end{align}
The continuous Lagrangian on the augmented space gives $\partial_{t}\Xst(\by)=b_{t}(\Xst(\by))$, and the boundary condition $\psi_{t,t}(\bz)=\bz+(1-t)b_{t}(\bz)$ implies:
\begin{align}
    b_{t}(\bz)=\frac{\psi_{t,t}(\bz)-\bz}{1-t}
\end{align}
Setting $\bz=\Xst(\by)$ in $b_{t}$ and setting the expression equal to (\ref{eq:dXdt}) multiplied by $(1-t)$, we get:
\begin{align}
    \frac{1-t}{1-s}\big(\psist(\by)-\by\big)+\frac{(1-t)(t-s)}{1-s}\partial_{t}\psist(\by)=\psi_{t,t}(\Xst(\by))-\Xst(\by)
\end{align}
Substituting $\Xst(\by)=\tfrac{1-t}{1-s}\by+\tfrac{t-s}{1-s}\psist(\by)$ on the right-hand side, the $y$-terms cancel and we get:
\begin{align}
    \Big(\tfrac{1-t}{1-s}+\tfrac{t-s}{1-s}\Big)\psist(\by)+\tfrac{(1-t)(t-s)}{1-s}\partial_{t}\psist(\by)=\psi_{t,t}(\Xst(\by))
\end{align}
The terms inside the bracket sum to $1$, so we have:
\begin{align}
    \psist(\by)=\psi_{t,t}(\Xst(\by))-\frac{(1-t)(t-s)}{1-s}\partial_{t}\psist(\by)
\end{align}
Substituting $\by=\Estt(\bx_{s})=\xseps$ and $\Xst(\by)=\Phisst(\bx_{s})$, we get the final identity: 
\begin{align}
    \psist(\xseps)=\psi_{t,t}(\Phisst(\bx_{s}))-\gamma_{s,t} \partial_{t}\psist(\xseps),\quad \gamma_{s,t}=\tfrac{(1-t)(t-s)}{1-s}
\end{align}
which is the Lagrangian identity of discrete expanding flow maps in (\ref{eq:discrete-consistency}).

\paragraph{Eulerian Identity}
Using the partial derivatives:
\begin{align}
    \partial_{s}\left(\tfrac{1-t}{1-s}\right)=\tfrac{1-t}{(1-s)^{2}},\quad \partial_{s}\left(\tfrac{t-s}{1-s}\right)=-\tfrac{1-t}{(1-s)^{2}}
\end{align}
and differentiating $\Xst(\by)=\tfrac{1-t}{1-s}\by+\tfrac{t-s}{1-s}\psist(\by)$ in $s$ at fixed $\by,t$, we have:
\begin{align}
    \partial_{s}\Xst(\by)=-\tfrac{1-t}{(1-s)^{2}}\big(\psist(\by)-\by\big)+\tfrac{t-s}{1-s}\partial_{s}\psist(\by)\label{eq:euler-1}
\end{align}
The continuous Eulerian identity on $\Real^{d(t)}$ states:
\begin{align}
    \partial_{s}\Xst(\by)+J_{\by}\Xst(\by)b_{s}(\by)=0,\quad J_{\by}\Xst(\by)=\tfrac{1-t}{1-s}\boldsymbol{I}+\tfrac{t-s}{1-s}J_{\by}\psist(\by)\label{eq:euler-2}
\end{align}
Combining (\ref{eq:euler-1}) and (\ref{eq:euler-2}), we get:
\begin{align}
    0=-\tfrac{1-t}{(1-s)^{2}}\big(\psist(\by)-\by\big)+\tfrac{t-s}{1-s}\partial_{s}\psist(\by)+\tfrac{1-t}{1-s}b_{s}(\by)+\tfrac{t-s}{1-s}J_{\by}\psist(\by)b_{s}(\by)
\end{align}
Using $b_{s}(\by)=(\psi_{s,s}(\by)-\by)/(1-s)$ in the third term, we get:
\begin{align}
    0=-\tfrac{1-t}{(1-s)^{2}}\big(\psist(\by)-\psi_{s,s}(\by)\big)+\tfrac{t-s}{1-s}\Big[\partial_{s}\psist(\by)+J_{\by}\psist(\by)b_{s}(\by)\Big]
\end{align}
Rearranging and defining $\kappa_{s,t}:=\tfrac{1-t}{(1-s)(t-s)}$, we get:
\begin{align}
    \partial_{s}\psist(\by)+J_{\by}\psist(\by)b_{s}(\by)&=\frac{1-t}{(1-s)(t-s)}\big(\psist(\by)-\psi_{s,s}(\by)\big)\nonumber\\
    &=\kappa_{s,t}\big(\psist(\by)-\psi_{s,s}(\by)\big)
\end{align}
Substituting $\by=\xseps$ and using \ref{ass:A5} to rewrite $\psi^{(t)}_{s,s}(\xseps)=\Estt(\psi_{s,s}(\bx_{s}))$, we have:
\begin{align}
    \partial_{s}\psist(\xseps)+J_{x}\psist(\xseps)b_{s}(\xseps)=\kappa_{s,t}\left(\psist(\xseps)-\Estt(\psi_{s,s}(\bx_{s}))\right)
\end{align}
which recovers the Eulerian identity of discrete expanding flow maps in (\ref{eq:discrete-consistency}).

\paragraph{Semigroup Identity}
For $s\le u\le t$, we introduce $\psi^{(t)}_{s,u}:\Real^{d(t)\times V}\to (\Delta^{V-1})^{d(t)}$ as the two-time mean denoiser over $[s,u]$ applied at the augmented dimension $d(t)$ that leaves the dimensions inserted over $(u,t]$ unchanged, and $\psi_{s,u}:\Real^{d(u)\times V}\to (\Delta^{V-1})^{d(u)}$ for the same denoiser at its native dimension $d(u)$. Since the network is applied to whatever sequence it receives, these are the same parameterized map evaluated on inputs of different length. Given this, we denote the transport map defined by $\psi^{(t)}_{s,u}$ as $X_{s,u}^{(t)}:\Real^{d(t)\times V}\to\Real^{d(t)\times V}$ which satisfies the semigroup condition in augmented space $X_{s,t}=X_{u,t}\circ X_{s,u}^{(t)}$.

For any $s\le u\le t$, we can expand $\psist$ from the definition $\Xst(\by)=\tfrac{1-t}{1-s}\by+\tfrac{t-s}{1-s}\psist(\by)$:
\begin{align}
    \psist(\by)=\frac{1-s}{t-s}\Xst(\by)-\frac{1-t}{t-s}\by\label{eq:disc-semi-2}
\end{align}
Applying the augmented-space semigroup $\Xst=X_{u,t}\circ X^{(t)}_{s,u}$ and the definition of $X_{u,t}$, we get:
\begin{align}
    \Xst(\by)=X_{u,t}(X^{(t)}_{s,u}(\by))=\frac{1-t}{1-u}X^{(t)}_{s,u}(\by)+\frac{t-u}{1-u}\psi_{u,t}(X^{(t)}_{s,u}(\by))\label{eq:disc-semi-1}
\end{align}
Substituting (\ref{eq:disc-semi-1}) into (\ref{eq:disc-semi-2}), we get:
\begin{align}
    \psist(\by)=\frac{(1-s)(1-t)}{(t-s)(1-u)}X^{(t)}_{s,u}(\by)+\frac{(1-s)(t-u)}{(t-s)(1-u)}\psi_{u,t}(X^{(t)}_{s,u}(\by))-\frac{1-t}{t-s}\by
\end{align}
Now expanding $X^{(t)}_{s,u}(\by)=\tfrac{1-u}{1-s}\by+\tfrac{u-s}{1-s}\psi^{(t)}_{s,u}(\by)$ in the first term, we have: 
\begin{align}
    \frac{(1-s)(1-t)}{(t-s)(1-u)}X^{(t)}_{s,u}(\by)=\frac{1-t}{t-s}\by+\frac{(1-t)(u-s)}{(t-s)(1-u)}\psi^{(t)}_{s,u}(\by)
\end{align}
Subtracting out $\tfrac{1-t}{t-s}\by$ cancels the $\by$-term, and we get:
\begin{align}
    \psist(\by)=\frac{(u-s)(1-t)}{(t-s)(1-u)}\psi^{(t)}_{s,u}(\by)+\frac{(t-u)(1-s)}{(t-s)(1-u)}\psi_{u,t}(X^{(t)}_{s,u}(\by))\label{eq:semigroup-prelift}
\end{align}
Defining $\omega_{s,u,t}:=\tfrac{(u-s)(1-t)}{(t-s)(1-u)}$ and verifying:
\begin{align}
    1-\omega_{s,u,t}=\frac{(t-s)(1-u)-(u-s)(1-t)}{(t-s)(1-u)}=\frac{(t-u)(1-s)}{(t-s)(1-u)}
\end{align}
given $(t-s)(1-u)-(u-s)(1-t)=t-tu-s+su-u+ut+s-st=(t-u)(1-s)$, which means that (\ref{eq:semigroup-prelift}) is a convex combination. Substituting $\by=\xseps=\mathcal{E}_{s,t}(\bx_s)$, the second term simplifies by \ref{ass:A2} and \ref{ass:A3}:
\begin{align}
    X^{(t)}_{s,u}(\xseps)=X^{(t)}_{s,u}(\E_{u,t}(\E_{s,u}(\bx_{s})))=\E_{u,t}(X_{s,u}(\E_{s,u}(\bx_{s})))=\E_{u,t}(\Phi_{s,u}(\bx_{s}))
\end{align}
so that
\begin{align}
    \psist(\xseps)=\omega_{s,u,t}\psi^{(t)}_{s,u}(\xseps)+(1-\omega_{s,u,t})\psi_{u,t}\big(\E_{u,t}(\Phi_{s,u}(\bx_{s}))\big)
\end{align}
which recovers the semigroup identity for the discrete expanding flow map in (\ref{eq:discrete-consistency}). Equivalently, keeping the pre-substitution term of (\ref{eq:semigroup-prelift}) at $\by=\xseps$ and writing $\Phi^{(t)}_{s,u}(\xseps):=\tfrac{1-u}{1-s}\xseps+\tfrac{u-s}{1-s}\psi^{(t)}_{s,u}(\xseps)=X_{s,u}(\xseps)$ gives the \textit{expand-once} form:
\begin{align}
    \psist(\xseps)=\omega_{s,u,t}\psi^{(t)}_{s,u}(\xseps)+(1-\omega_{s,u,t})\psi_{u,t}\big(\Phi^{(t)}_{s,u}(\xseps)\big),\label{eq:semigroup-expandonce}
\end{align}
in which both terms act on the single $d(t)$ state $\xseps$. This is the form implemented in Algorithm \ref{alg:training-efm} and used as the semigroup training target (\ref{eq:semigroup-target}). It equals the native form above by \ref{ass:A2}-\ref{ass:A3}.\hfill $\square$

\section{Discrete Sequence Implementation Details}
\label{app:discrete-efm-implementation}

\subsection{Local Time Insertions and Denoising}
To integrate insertion and denoising into a single flow map framework, we must define insertion times for each position along the sequence. These insertions should be determined by the surrounding context and the sequence's denoising state. 

Each position $i$ has insertion time $t^{\text{ins}}_i$ drawn from the insertion schedule $t^{\text{ins}}_i\sim \dot{\alpha}_sds$ and the \textit{local time} starts and integrates over $t_i\in [0,1]$ starting at global time $t=t^{\text{ins}}_i$ where the position is pure noise initialized as $\boldsymbol{x}^i_{t^{\text{ins}}_i}=\boldsymbol{z}$. At global time $t> t^{\text{ins}}_i$, the local time is defined as:
\begin{align}
    t_i=\frac{t-t^{\text{ins}}_i}{1-t^{\text{ins}}_i}\in [0,1]
\end{align}
such that each position is denoised for $t_i=t-t^{\text{ins}}_i$ at time $t$. Concretely, we have:
\begin{align}
    \boldsymbol{x}^i_{t_i}=(1-t_i)\boldsymbol{x}^i_0+t_i\boldsymbol{x}_1^i, \quad \boldsymbol{x}_0^i=\boldsymbol{z}\sim p_0
\end{align}
Therefore, at time $t$, the input to the denoising model is a padded sequence of length $d(t)$, where each active position has a different noise level $\boldsymbol{t}_{\text{local}}=(t_1, \dots, t_{d(t)}, 0, \dots, 0)$, providing rich context for the denoiser.

In addition to the linear per-position interpolant, one can define a per-position coefficient $\beta:[0,1]\to[0,1]$ (with $\beta(0)=0$, $\beta(1)=1$) that remaps the interpolant, independently of the $t^{\text{ins}}\to t$ time reparameterization, to:
\begin{align}
    \boldsymbol{x}^i_{t_i}=(1-\beta(t_i))\boldsymbol{z}+\beta(t_i)\boldsymbol{x}_1^i, \quad \sigma_i=1-\beta(t_i),
\end{align}
recovering the linear case at $\beta(t_i)=t_i$. An \textit{insertion cutoff time} $t_{\text{ins-end}}\in(0,1]$ confines all insertions to the interval $[0,t_{\text{ins-end}}]$ by reparameterizing the schedule as $\alpha_{t_{\text{ins-end}}}(t)=\alpha(\min(t/t_{\text{ins-end}},1))$, so no position is inserted after $t=t_{\text{ins-end}}$ and the interval $[t_{\text{ins-end}},1]$ becomes a pure-denoising phase. Setting $t_{\text{ins-end}}=1$ allows insertions over the full interval. 

\subsection{Time Reparameterization}
We adopt the time-reparameterization scheme of~\citet{lee2026flow}, which shows that for large alphabets the decoding error $P_e(t)$ remains near its initial value across most of $[0,1]$ and collapses only in a thin band near $t=1$. Uniform training times and evenly spaced inference grids therefore spend nearly all their budget where the model has little to learn, worsening as $|V|$ grows. The fix is to compose the time axis with a smooth, strictly increasing $\tau:[0,1]\to[0,1]$ with $\tau(0)=0$, $\tau(1)=1$, drawing $\tau\sim\mathcal{U}[0,1]$ and setting $t=t(\tau)$ during training, and sampling on the warped grid $t_n=t(n/N)$. Requiring equal increments of $\tau$ to correspond to equal reductions in decoding error gives
\begin{align}
    \tau(t) = \frac{P_e(0)-P_e(t)}{P_e(0)} = 1 - \frac{|V|}{|V|-1}P_e(t)
\label{eq:tau-app}
\end{align}
concentrating both the training distribution and the inference grid on the window where token identities are committed. We evaluate \eqref{eq:tau-app} once by Gauss-Hermite quadrature on a uniform grid of $10^4$ points and fit cubic splines in both directions for constant-time evaluation of $\tau(t)$, $t(\tau)$, and $d\tau/dt$. The warp is applied unconditionally in training and sampling.

\subsection{Time Conditioning}
Since the expanding flow map $\Phi_{s,t}$ transports a sequence between two times rather than denoising at a single one, the network must condition on the source and target times $(s,t)$. We do this through two separate sinusoidal time embedders, whose outputs are summed and injected into every transformer block via adaptive layer normalization (adaLN). The final projection of the target embedder is zero-initialized so that at the start of training the target time has no effect and the model behaves as a standard single-time denoiser conditioned on $s$, which we found stabilizes the early distillation phase before the two-time flow-map behavior is learned.

Since positions are inserted at different global times $t^{\text{ins}}_i$ and have different local noise levels $\boldsymbol{t}_{\text{local}}=(t_1, \dots, t_{d(t)}, 0, \dots, 0)$ at any global time $t$, we apply \textit{source-time} adaLN conditioning per position. Each active position's local time $t_i$ is passed through the same sinusoidal source-time embedder used for the global source time $s$, producing a per-position conditioning vector $c_i$ that modulates every transformer block and the output layer via adaLN, while the target-time embedding stays global and is broadcast across positions. Conditioning each position on its own local time makes every position behave similarly to the teacher's single-time denoiser evaluated at noise level $t_i$, which keeps the flow-map student consistent with the fixed-length denoiser it distills from. Since the state is mixed with the (possibly non-linear) coefficient $\beta$, the true local time fed to this embedder is recovered by inverting the mixing schedule, $t_i=\beta^{-1}(1-\sigma_i)$, so the conditioning is exact even under non-linear mixing rather than reading $t_i$ off the noise level directly.

\subsection{Adaptive Loss Weighting}
To stabilize training, we scale the distillation losses by a weight that lowers the contribution of large mismatches between the student and teacher models \citep{potaptchik2026discrete}. Given the student mean denoiser $\psi_{s,t}(\bx)\in \Delta^{V-1}$ and the target $\bar \psi_{s,t}(\bx)\in \Delta^{V-1}$, the weight is defined as:
\begin{align}
    w_{s,t}(\bx):=\texttt{sg}\left[\left(\|\Delta_{s,t}(\bx)\|^2+c\right)^{-r}\right], \quad \Delta_{s,t}(\bx):=\psi_{s,t}(\bx)-\bar \psi_{s,t}(\bx)
\end{align}
where $\Delta_{s,t}$ is the mismatch between the categorical distributions and $c$ and $r$ are tunable hyperparameters. This is used to scale the cross-entropy loss defined in (\ref{eq:cross-entropy-loss}), where we set $c = 1e-6$ and $r = 0.5$.

\subsection{One-Step Sampling}
\label{app:one-step}
Under the few-step sampling algorithm for discrete EFM, one-step sampling results in degenerate behavior since in a single step, it determines the number of tokens to insert by sampling from the Binomial distribution in (\ref{eq:insertion-dist}), and then takes a jump with the flow map $\Phi_{0,1}(\cdot)$ over time $t\in [0,1]$ for all inserted tokens given the fully noisy sequence. However, since the probability of the insertion time of a token being 1 is near zero, the model is never trained on the case where all tokens are newly inserted noise samples at time $t=0$ and therefore defaults to the mode of the distribution when sampling from $\Phi_{0,1}(\bx_0)$.

To overcome this, we define the denoising interval as $[t_{\text{denoise-min}}, 1]$ and the one-step map as $\Phi_{0,1}(\bx_0)=\psi_{t_{\text{denoise-min}},1}(\mathcal{E}_{0,1}(\bx_0))$. This allows the model to denoise from a noisy sequence at a time step $t_{\text{denoise-min}}$, which resembles the time at which the model has seen sequences of this form during training. This ensures that the denoiser is never conditioned on the near-pure-noise full-length input at time $t=0$ it rarely encounters in training, while the insertion schedule still uses the true time.

\section{Discrete Graph Implementation Details}
\label{app:graph-algo}

Here we describe the graph implementation of EFlow and EFM. The construction mirrors the sequence variant. Each \emph{node} has an independent insertion time, the clean target is interpolated with a Gaussian-latent prior, a learned head fires per-gap Poisson insertions, and the denoiser is rolled out with an Euler scheme on the resulting flow ODE. The one difference is that the state is a pair $(\boldsymbol{x}, \boldsymbol{E})$ with a square edge matrix $\boldsymbol{E}$. Every position update is applied consistently on the node axis and both edge axes, and the edge matrix stays symmetric with zero diagonal at all times.

\subsection{Notation}
A clean graph with $d(1)$ nodes lives on the discrete space of \textit{nodes} $\mathcal{X}_x$ and edges $\mathcal{X}_e$,
\begin{align}
\mathcal{X}_x = \{0,\dots,V_x-1\}^{d(1)}, \quad \mathcal{X}_e \in \{0,\dots,V_e-1\}^{d(1)\times d(1)},
\end{align}
where $V_x$ counts node categories and $V_e$ counts edge categories (with $0$ reserved for the absence of an edge), the graph analogs of sequence vocabularies. Note that $\mathcal{X}_x$ carries no ``no-node'' class. The node count is a property of the active set, controlled entirely by the insertion process below and never by an absorbing node category. We represent a clean graph as one-hot vectors $\bx^i_1 \in \Delta^{V_x-1}$ at each node $i$ and $\be_1 \in \Delta^{V_e-1}$ at each ordered pair $(i,j)$. The full graph is the tensor of nodes $\boldsymbol{x}_1\in \mathbb{R}^{d(1)\times V_x}$ and edges $\boldsymbol{E}_1\in \mathbb{R}^{d(1)\times d(1)\times V_e}$, with the edge matrix symmetric and zero on the diagonal, $\boldsymbol{E}_1=\boldsymbol{E}^\top_1$ and $\mathrm{diag}(\boldsymbol{E}_1)=0$.

\subsection{Per-node Insertion Times and Edge Activation}
Each node $i$ is assigned an insertion time $t^{\text{ins}}_i \in [0,1]$ drawn from the schedule's inverse CDF
\begin{equation}
t^{\text{ins}}_i = t(\tau_i), \quad \tau_i \sim \mathrm{Uniform}(0,1)
\label{eq:tau}
\end{equation}
For padded slots, we set $t^{\text{ins}}_i = 2$ so they never activate. At global time $t$, a node is \emph{active} if its insertion time has elapsed,
\begin{equation}
a_i(t) = \mathbf{1}\{t^{\text{ins}}_i \le t\}\in\begin{cases}
    1&\text{(active)}\\
    0&\text{(inactive)}
\end{cases}
\label{eq:active-node}
\end{equation}
and its per-node local time and noise level are
\begin{equation}
t_i = \tfrac{t-t^{\text{ins}}_i}{1-t^{\text{ins}}_i}\cdot a_i(t),
\quad
\sigma_i =1 - t_i.
\label{eq:local-time}
\end{equation}
An edge inherits its noise level from the nodes it connects. An ordered pair $(i,j)$ with $i\ne j$ is active when both endpoints are active and inherits its local time as the \emph{minimum} of the endpoint local times,
\begin{equation}
a_{ij}(t) = a_i(t)a_j(t)(1-\delta_{ij}), \quad t_{ij} = \min\left(t_i,t_j\right)
\label{eq:edge-active}
\end{equation}
In the sequence EFM, the only activation variable is per-position
$a_i$. In the graph EFM, edges are \emph{coupled} to nodes through~\eqref{eq:edge-active}, so insertions of nodes implicitly grow the active edge set, and an edge can never be denoised ahead of either endpoint. 

\subsection{Node and Edge Interpolants}
Both nodes and edges are corrupted by the same flow-matching interpolant as in the sequence case, defined as a Gaussian-latent path between a one-hot target and a Gaussian prior $\mathcal{N}(\boldsymbol{0},\sigma^2 \boldsymbol{I})$. For nodes we draw $\bx_0 \sim \mathcal{N}(\boldsymbol{0}, \sigma^2\boldsymbol{I}_{V_x})$ and use the linear interpolant
\begin{align}
    \bx_t^i=t_i\bx_1^i+(1-t_i)\bx_0^i, \quad i \in \{1, \dots, d(1)\}
\end{align}
where inactive nodes have $t_i=0$ and stay fixed at pure noise. For edges we draw $\boldsymbol{E}_0\sim \mathcal{N}(\boldsymbol{0}, \sigma^2\boldsymbol{I}_{V_e})$, symmetrize it as $\tilde{\boldsymbol{E}}_0=(\boldsymbol{E}_0+\boldsymbol{E}_0^\top) /2$, and use the interpolant
\begin{align}
    \boldsymbol{E}_t^{ij}=t_{ij}\boldsymbol{E}_1^{ij}+(1-t_{ij})\boldsymbol{E}_0^{ij}, \quad\tilde{\boldsymbol{E}}_t:= \frac{1}{2}(\boldsymbol{E}_t+\boldsymbol{E}^\top_t), \quad \text{diag}(\tilde{\boldsymbol{E}}_t)=0
\end{align}
where the edge local time $t_{ij}=\min(t_i,t_j)$ is inherited from its endpoints as in~\eqref{eq:edge-active}, which symmetrizes the edge matrix and sets the diagonal (indicating self-edges) to zero at every step.

\subsection{Compacting the Partially Active Graph}
To form the partial graph with active positions compacted to be contiguous, we \texttt{gather} on the position vector and on both axes of the edge matrix. A position $j$ in the full graph falls in the gap $\gamma(j)$ given by
\begin{align}
    \gamma(j)=\sum_{k < j}a_{k}(t)
\end{align}
and the number of uninserted nodes at each gap $i$ at time $t$ is
\begin{align}
    g_i(t)=\sum_{j:a_j(t)=0}\boldsymbol{1}\{\gamma(j)=i\}, \quad i\in \{0,\dots, d(t)\}\label{eq:gap-counts}
\end{align}
which defines the prediction target for the insertion head during training.

\subsection{Node and Edge Insertions}
Given the compacted graph $(\bx_s,\boldsymbol{E}_s)$ with $d(s)$ active nodes and the per-gap counts $g_i(t)$ of (\ref{eq:gap-counts}), a shared insertion head reads the node hidden states and predicts the two-time insertion expectation $\hat{\mathcal{I}}_{s,t}(\bx_s)[i]$ for each gap $i\in\{0,\dots,d(s)\}$, trained exactly as in the sequence case (\ref{eq:insert-loss}) against $g_i(t)$. The \textit{graph expand operator} $\mathcal{E}_{s,t}$ jumps from $d(s)$ to $d(t)$ nodes by drawing a per-gap count from the binomial law (\ref{eq:insertion-dist}),
\begin{align}
\ell_i \sim \text{Binomial}\Big(d_{\max}-d(s),\ \tfrac{\hat{\mathcal{I}}_{s,t}(\bx_s)[i]}{d_{\max}-d(s)}\Big),
\end{align}
capped left-to-right at the node budget so $\sum_i\ell_i\le d_{\max}-d(s)$, and converging to $\ell_i\sim\text{Poisson}(\hat{\mathcal{I}}_{s,t}(\bx_s)[i])$ in the many-step limit.

Unlike sequences, an inserted node carries \emph{both} a node coordinate and a full row and column of edges. Writing $A$ for the $d(s)$ existing nodes and $N$ for the $\sum_i\ell_i$ freshly inserted nodes, the expand operator scatters the old state into the enlarged $d(t)$-node layout and fills every new coordinate with a Gaussian latent at local time $t_i=0$:
\begin{align}
\bx^\epsilon_s[i]=\begin{cases}\bx_s[i] & i\in A\\[2pt] \boldsymbol{z}\sim\mathcal{N}(\boldsymbol{0},\sigma^2\boldsymbol{I}_{V_x}) & i\in N\end{cases},
\quad
\boldsymbol{E}^\epsilon_s[i,j]=\begin{cases}\boldsymbol{E}_s[i,j] & i,j\in A\\[2pt] \tilde{\boldsymbol{e}}\sim\mathcal{N}(\boldsymbol{0},\sigma^2\boldsymbol{I}_{V_e}) & i\in N \text{ or } j\in N\end{cases}
\end{align}
so the existing $A\times A$ edge block is copied while every pair touching a new node (the $A\times N$, $N\times A$, and $N\times N$ blocks) is seeded with noise. The result is symmetrized and its diagonal zeroed, $\boldsymbol{E}^\epsilon_s\gets\tfrac12(\boldsymbol{E}^\epsilon_s+\boldsymbol{E}^{\epsilon\top}_s)$ with $\mathrm{diag}(\boldsymbol{E}^\epsilon_s)=0$, so the augmented state stays a valid symmetric graph. Because a graph has \emph{no inherent node ordering}, the per-gap counts are needed only to define the training target against the node order of (\ref{eq:gap-counts}), and at sampling any placement of the $\sum_i\ell_i$ new nodes yields the same graph up to relabeling. This is the graph instance of positional insertion collapsing to concatenation under permutation invariance.

\subsection{Denoising Nodes and Edges}
The transport map $X_{s,t}$ denoises the augmented graph $(\bx^\epsilon_s,\boldsymbol{E}^\epsilon_s)$ toward the clean one-hot graph. A single graph transformer $\hat\psi_{s,t}$ takes the augmented state, the target time $t$, and the per-node source local times, emitting a mean denoiser on the simplex at every node and edge:
\begin{align}
\hat\psi^X_{s,t}(\bx^\epsilon_s)[i]\in\Delta^{V_x-1}, \quad \hat\psi^E_{s,t}(\boldsymbol{E}^\epsilon_s)[i,j]\in\Delta^{V_e-1}.
\end{align}
Node $i$ is conditioned on its own local time $s_i$ and edge $(i,j)$ on the inherited $s_{ij}=\min(s_i,s_j)$ of (\ref{eq:edge-active}), so a more recently inserted coordinate is denoised from a higher noise level. The discrete expanding flow map is then applied identically on the node axis and on both edge axes,
\begin{align}
\Phi^X_{s,t}(\bx^\epsilon_s)=\tfrac{1-t}{1-s}\bx^\epsilon_s+\tfrac{t-s}{1-s}\hat\psi^X_{s,t}(\bx^\epsilon_s), \quad \Phi^E_{s,t}(\boldsymbol{E}^\epsilon_s)=\tfrac{1-t}{1-s}\boldsymbol{E}^\epsilon_s+\tfrac{t-s}{1-s}\hat\psi^E_{s,t}(\boldsymbol{E}^\epsilon_s),
\end{align}
with the same global $(s,t)$ coefficients as the discrete sequence implementation. After each jump, the edge tensor is re-symmetrized and its diagonal zeroed, $\boldsymbol{E}_t\gets\tfrac12(\boldsymbol{E}_t+\boldsymbol{E}_t^\top)$ with $\mathrm{diag}(\boldsymbol{E}_t)=0$, preserving symmetry and the absence of self-edges. Inactive nodes and edges are masked and stay at their prior noise until inserted.

Training uses the cross-entropy discrete EFM objective (\ref{eq:cross-entropy-loss}) summed over the active node set $a_i=1$ and active edge set $a_{ij}=1$ of (\ref{eq:active-node})-(\ref{eq:edge-active}), with the two channels balanced by an edge weight $\lambda_E$. 
\begin{align}
\mathcal{L}_{\text{diag}}(\hat{\psi})&=\mathbb{E}_t\mathbb{E}_{\bx_0,\bx_1}\bigg[-\sum_{i:a_i=1}\big(D^X_{t}\cdot\log\hat\psi^X_{t,t}\big)(\bx^\epsilon_t)[i]-\lambda_E\!\!\sum_{(i,j):a_{ij}=1}\!\!\big(D^E_{t}\cdot\log\hat\psi^E_{t,t}\big)(\boldsymbol{E}^\epsilon_t)[i,j]\bigg]\label{eq:graph-diag-loss}\\
\mathcal{L}_{\text{cons}}(\hat{\psi})&=\mathbb{E}_{s,u,t}\mathbb{E}_{\bx_0,\bx_1}\bigg[-\sum_{i:a_i=1}\big(\bar\psi^X_{s,t}\cdot\log\hat\psi^X_{s,t}\big)(\bx^\epsilon_s)[i]-\lambda_E\!\!\sum_{(i,j):a_{ij}=1}\!\!\big(\bar\psi^E_{s,t}\cdot\log\hat\psi^E_{s,t}\big)(\boldsymbol{E}^\epsilon_s)[i,j]\bigg]\label{eq:graph-cons-loss}
\end{align}
where $D^{\{X,E\}}_{t}$ are the instantaneous denoising target for $s=t$ and $\bar\psi^{\{X,E\}}_{s,t}$ is the semigroup consistency target defined in (\ref{eq:semigroup-target}) for the off-diagonal ($s<t$) pairs. Because the denoiser is permutation-equivariant and every update acts identically on the node and both edge axes, the full map is invariant to node relabeling, matching the unordered structure of the target graph.

\section{Experiment Details}

\subsection{Coarse-to-Fine Molecular Conformer Generation}
\label{app:conformer}
\paragraph{Datasets}
We conduct our experiments on the GEOM dataset \citep{axelrod2022geom}, which comprises GEOM-QM9 and GEOM-Drugs. Following the split of GeoDiff \citep{xu2022geodiff}, each dataset provides $40{,}000$ molecules for training and $5{,}000$ for validation, each with $5$ conformers. For testing we use $200$ molecules per dataset, yielding $22{,}409$ conformers for QM9 and $14{,}324$ for Drugs. Atoms are encoded by atomic number and bonds by their $4$ RDKit types (single, double, triple, aromatic). For each test molecule, we generate twice its number of reference conformers, matching the GeoDiff/MSGEN evaluation protocol \citep{huhierarchical}.

\paragraph{Framework Setting}
Our main experiments adopt a coarse-to-fine two-stage decomposition of each molecule into a heavy-atom backbone and its hydrogen atoms. This reflects the fact that heavy atoms form the structural scaffold while hydrogens govern local stereochemistry and fine geometry, so the backbone is generated first to serve as a global spatial anchor that guides hydrogen placement. EFlow naturally implements this hierarchical structure within a single expanding flow: the heavy backbone is denoised from Gaussian noise and frozen by $t_{\text{ins-end}}$, and then the hydrogens are inserted on a schedule and resolved around the fixed backbone. An optional low-noise refiner forms a second stage that polishes the generated all-atom structure.

\paragraph{Model Architecture}
For fair comparison, the velocity field is the GeoDiff dual-encoder \citep{xu2022geodiff} used by MSGEN \citep{huhierarchical}, which is a global SchNet over the extended-order and radius graph ($6$ interactions), a local GIN over the bond graph ($4$ convolutions), higher-order extended edges (\texttt{edge\_order}$=3$), and a radius cutoff of $10$ \AA, all with hidden dimension $128$. Each encoder emits a per-edge invariant scalar that \texttt{eq\_transform} maps to an equivariant per-node velocity, and the two are summed ($v = v_{\text{global}} + v_{\text{local}}$). The network is conditioned on the global time pair $(s,t)$ and on the per-atom local (insertion) time through Fourier embeddings ($\dim=64$) added to the edge features. The refiner uses the identical architecture.

\paragraph{Training Details}
The interpolant is the per-position Gaussian-latent linear interpolant $\bx_s^i = (1-t_i) \bx^i_0 + t_i \bx_1^i$, where each active atom has local time $t_i\in[0,1]$. Heavy atoms draw $\bx^i_0\sim\mathcal{N}(0,\sigma^2_z\boldsymbol{I})$ with $\sigma_z=1.0$. Each hydrogen is anchored on its bonded heavy parent as $\bx^i_0= \bx^{\text{parent}(i)}_0 + \sigma_h \boldsymbol{\epsilon}$ with $\sigma_h=0.1$, and positions are made center-of-mass free at every step. Heavy atoms are present from $t=0$ with denoise window $[0, t_{\text{ins-end}}]$, and each hydrogen is inserted at a per-atom time $t^{\text{ins}}_i$ drawn from a cosine schedule on $[0, t_{\text{ins-end}}]$ with window $[t^{\text{ins}}_i, 1]$. With $t_{\text{ins-end}}=0.6$ the heavy backbone forms and freezes by $t=0.6$, leaving a hydrogen-resolving tail on $(0.6, 1]$. The model predicts the mean velocity $v_{s,t}$ and is trained with the per-graph mean-squared error over inserted atoms, with $5{,}000$ diagonal-only flow-matching warmup steps and the off-diagonal range annealed to full by $20{,}000$ steps.

EFlow is trained with the AdamW optimizer, learning rate $5\times10^{-4}$, a cosine schedule to $10^{-6}$, gradient clipping at $1.0$, an EMA of the weights with decay $0.999$ (used for sampling), batch size $256$ for QM9 and $128$ for Drugs, for $300$ epochs on a single NVIDIA A100 GPU. The refiner is a separate single-scale low-noise all-atom denoiser trained with source $\bx_0 = \bx_1 + \sigma_r \boldsymbol{\epsilon}$ (all atoms present), where $\sigma_r$ is approximated by the error of the initial insertion-denoising process ($\sigma_r=0.4$ for QM9, $1.0$ for Drugs). 

\paragraph{Sampling}
EFlow (no refinement) generates conformers by Euler-integrating the per-position flow ODE on a uniform grid of $20$ steps. Insertion follows the deterministic per-atom schedule (heavy atoms present from $t=0$, each hydrogen inserted at its $t^{\text{ins}}_i$) and costs no additional network evaluation; heavy atoms are frozen once $t\geq t_{\text{ins-end}}$. Because the molecular graph fixes the atom count, the expand operator is the deterministic case of the general framework, and the per-atom times $t^{\text{ins}}_i$ are drawn once from the insertion schedule before integration, so no learned insertion head is required and no count has to be inferred at sampling time. EFlow+R (with refinement) appends a $10$-step refiner pass that takes the output of the EFlow or EFM model as its initialization and integrates all atoms over $[0,1]$.

EFM samples with the same insertion schedule but replaces the Euler steps with two-time flow-map jumps $\Phi_{s,t}$ on a grid of $1$, $2$, or $4$ steps, so the number of network evaluations drops by an order of magnitude while the interpolant is unchanged. EFM+R (with refinement) applies the $4$-step flow map followed by $10$-step refinement.

\paragraph{Baselines}
We build EFlow on top of GeoDiff \citep{xu2022geodiff}, used as the backbone network, and denote the hierarchical two-stage variant GeoDiff+MSGEN \citep{huhierarchical}. In the few-step generation setting, we evaluate GeoDiff and SubgDiff \citep{zhang2024subgdiff} at $500$ denoising steps and GeoDiff+MSGEN \citep{huhierarchical} at two $500$-step stages. For broader comparison, we include deep generative models GraphDG \citep{simm2019generative}, CGCF \citep{xu2021learning}, ConfVAE \citep{xu2021end}, GeoMol \citep{ganea2021geomol}, and ConfGF \citep{shi2021learning}. Reported baseline values are taken from \citet{xu2022geodiff} and \citet{zhang2024subgdiff}.

\paragraph{Evaluation}
We adopt the widely used Coverage (COV) and Average Minimum RMSD (AMR) metrics proposed by \citet{ganea2021geomol}, in both recall (R) and precision (P) forms. Both are computed from the root-mean-square deviation (RMSD) between conformers aligned by the Kabsch algorithm \citep{kabsch1976solution}, following the GeoDiff/MSGEN protocol \citep{xu2022geodiff, huhierarchical}. Given a threshold $\delta$ ($0.5$ \AA{} for QM9 and $1.25$ \AA{} for Drugs), COV-R is the fraction of reference conformers matched within $\delta$ by some generated conformer, and AMR-R is the average minimum RMSD to the nearest generated conformer; the precision variants swap the roles of the generated and reference sets. Each metric is reported as the mean and median over test molecules.

\paragraph{Property Prediction Metrics}
This benchmark aims to evaluate whether a generated conformer set reproduces a molecule's \emph{ensemble-averaged} quantum-chemical behavior \citep{axelrod2022geom} rather than the geometry of any single conformer. Following \citet{shi2021learning}, we generate $50$ conformers each for $30$ test molecules from GEOM-QM9 and obtain the electronic energy and its HOMO and LUMO levels using the quantum chemistry calculation package, Psi4 \citep{smith2020psi4}. Using these values, we compute five properties for each molecule: the mean and minimum conformer energies $\bar{E}$ and $E_{\min}$, mean $\overline{\Delta\varepsilon}$, minimum $\Delta\varepsilon_{\min}$, and maximum $\Delta\varepsilon_{\max}$ HOMO-LUMO gap $\Delta\varepsilon = |\varepsilon_{\text{LUMO}} - \varepsilon_{\text{HOMO}}|$. Computing the same descriptors from each molecule's reference ensemble, we score a model by the absolute gap between its predicted and reference values, and take the median over the $30$ molecules. Results are reported in Table \ref{table:conformer-property-median} in eV, where lower is better.

\begin{table}[h!]
\centering
\caption{Ensemble property prediction on GEOM-QM9, reported as the \emph{median} of absolute prediction errors (eV; lower is better). Baseline medians (RDKit, ConfGF) are from
\citet{shi2021learning}.}
\label{table:conformer-property-median}
\begin{tabular}{l c ccccc}
\toprule
Model & Steps $\downarrow$ & $E$ $\downarrow$ & $E_{\min}$ $\downarrow$ & $\Delta\varepsilon_{\text{avg}}$ $\downarrow$ & $\Delta\varepsilon_{\min}$ $\downarrow$ & $\Delta\varepsilon_{\max}$ $\downarrow$ \\
\midrule
RDKit  & 500 & 0.8914 & 0.6629 & 0.2947 & \textbf{0.5196} & 0.1617 \\
ConfGF & 500 & 0.5328 & 0.1145 & 0.3207 & 0.7365 & 0.1337 \\
\midrule
\rowcolor{mybg} EFlow+R & 30 & \textbf{0.4506} & \textbf{0.0731} & \textbf{0.2204} & 0.8717 & \textbf{0.1156} \\
\rowcolor{mybg} EFM+R & 14 & 2.0521 & 0.1232 & 0.3208 & 2.6383 & 0.1672 \\
\bottomrule
\end{tabular}
\end{table}

\subsection{Discrete Molecular Graph Generation}
\label{app:molecule}
\paragraph{Dataset}
We train EFlow and EFM on the QM9 dataset \citep{wu2018moleculenet} containing small molecules with up to $9$ heavy atoms. Instantiating the state space of App \ref{app:graph-algo}, the node categories are the $V_x=4$ heavy-atom types (C, N, O, F) and the edge categories are the $V_e=5$ bond types (no-bond, single, double, triple, aromatic). The node buffer is $d_{\max}=9$. Following \citet{roos2026categorical}, we split the data into 100K molecules for training, 20K for validation, and 13K for testing.

\paragraph{Model Architecture}
For fair comparison, we use the same graph Transformer architecture as DeFoG \citep{qin2024defog}, with $9$ layers with node/edge/global hidden dimensions $(d_X, d_E, d_y) = (256, 64, 64)$, $8$ attention heads, and FFN hidden dimensions $(256, 128, 128)$. The input MLPs project to dimensions $(256, 128, 128)$. The RRWP structural features are computed with $k=12$ random-walk steps.

\paragraph{Training Details}
Both the EFlow teacher and the EFM flow-map student are trained with batch size $256$, AdamW (weight decay $10^{-12}$), learning rate $3\times10^{-4}$, constant schedule with $1,000$ warmup steps, and a maximum of $2{\times}10^6$ training steps on a single NVIDIA A100 GPU.

The Gaussian-latent interpolant of App \ref{app:graph-algo} is instantiated with prior scale $\sigma=1.0$ on both the node and edge axes. Training uses an antithetic time sampler with sampling-time floor $\varepsilon_t=10^{-3}$ and edge-loss weight $\lambda_E=5.0$. The student is initialized from the teacher checkpoint and, on each step, we sample $s=t$ with probability $0.75$, training on the diagonal loss (\ref{eq:graph-diag-loss}) against the frozen EFlow teacher's endpoint prediction, and $s<t$ with probability $0.25$, training on the consistency loss (\ref{eq:graph-cons-loss}) against the target $\hat\psi_{s,t}$. 

Since the size of QM9 is small, we train the learned per-gap insertion head with the diagonal loss and drop the off-diagonal term of (\ref{eq:insert-loss}), so the two-time count is reconstructed at sampling time rather than learned. On a jump $s\to t$ we scale the diagonal head by the schedule fraction $\rho_{s,t}=\tfrac{\alpha_{t}-\alpha_s}{1-\alpha_s}$. We use a polynomial insertion schedule $\rho(t) = (r t^{r-1})/(1-t^{r})$ with exponent $r=0.5$ for $t\in [10^{-4}, 1.0]$ without an explicit insertion cutoff ($t_{\text{ins-end}}=1$) since the polynomial schedule with $r=0.5$ is concave with median insertion time $t^{\text{ins}}_i=0.25$.

\paragraph{Sampling}
EFlow generates by Euler-integrating the per-position flow ODE on a uniform grid of $N_{\text{steps}}$ steps, applying the learned insertion head before each denoiser step to grow the graph. Per-gap insertion counts are drawn from the Poisson law $\ell_i\sim\text{Poisson}(\hat{\mathcal{I}}_{s,t}(\bx_s,\boldsymbol{E}_s)[i])$. EFM uses the same grid with the flow map in place of the denoiser, and at $N_{\text{steps}}=1$ reduces to a single call $\psi_{0,1}$ preceded by one insertion-head call at $t=0$. Reported numbers are $10{,}000$ molecules per step count.

\paragraph{Baselines}
We compare against DeFoG \citep{qin2024defog} by running their publicly released checkpoints using the same model architecture and sampling steps on $10,000$ molecules per step count and categorical flow maps (CFM) \citep{roos2026categorical} via their reported results on the same model architecture and sampling steps. 

\paragraph{Evaluation}
To assess sample quality, we compute the proportion of $10{,}000$ generated graphs per method and step count that correspond to RDKit-sanitizable molecules with valid SMILES (Validity), the proportion of these that are distinct under largest-fragment SMILES canonicalization (Uniqueness), and the Fréchet ChemNet Distance (FCD) between generated and reference molecules in ChemNet's final-layer embedding space against the test dataset. 

\subsection{Language Modeling}
\label{app:language}
\paragraph{Dataset}
We evaluate on the One Billion Word Benchmark (LM1B) \citep{chelba2013one}, a standard corpus for large-scale language modeling. We use the \texttt{bert-base-uncased} tokenizer with vocab size $V=30,522$. 

\paragraph{Model Architecture}
The architecture follows the DDiT backbone used in \citet{sahoo2025diffusion}, consisting of $12$ blocks, hidden size $768$, $12$ attention heads, conditioning dimension $128$, and dropout $0.1$. The insertion head described in App \ref{app:discrete-efm-implementation} reads the same $768$-dimensional pre-output hidden states and is conditioned by the same $128$-dimensional time vector, adding $0.79$M parameters on top of the backbone.

\paragraph{Training Details}
The EFlow denoising backbone is warm-started from the publicly released LM1B FLM checkpoint of \citet{lee2026flow}, but our framework also allows training from scratch. We emphasize that while the warm start accelerates convergence, the variable-length, per-position denoising and insertion objectives diverge from those used to train FLM. 

The insertion head is randomly initialized and, for the first $2{,}000$ steps, its gradients are detached from the backbone so that the untrained head does not corrupt the warm-started weights. Training then proceeds for $2{\times}10^5$ steps on $4$ NVIDIA B200 GPUs with global batch size $512$ (per-GPU batch size of $128$, no gradient accumulation), AdamW with $\beta_1=0.9$, $\beta_2=0.999$, $\varepsilon=10^{-8}$, no weight decay, learning rate $3{\times}10^{-4}$ with a constant schedule after $2{,}500$ warmup steps, gradient clipping $1.0$, and EMA decay $0.9999$. 

The per-position insertion times follow a cosine schedule, whose cumulative insertion fraction $\alpha(t)=1-\cos\!\big(\tfrac{\pi}{2}t/t_{\text{ins-end}}\big)$ is confined to the window $[0,t_{\text{ins-end}}]$, so all positions are inserted by the cutoff (median insertion time $\tfrac{2}{3}t_{\text{ins-end}}$) and $[t_{\text{ins-end}},1]$ is left for pure denoising. The insertion head is trained with insertion cutoff time $t_{\text{ins-end}}=0.5$, input noise scale $1.25$, and per-position adaLN time conditioning.

For EFM, we distill from the same frozen FLM teacher \citep{lee2026flow}, but our framework also allows training from self-distillation or from an EFlow teacher. We use a diagonal fraction $0.75$, boundary probability $1/32$, and uniform-difference off-diagonal $(s,t)$ sampling. The insertion head learns the binomial interval insertion law, with a $2000$-step detached warmup. The two loss terms are merged by gradient surgery following \citet{potaptchik2026discrete}. 

\paragraph{Baselines}
We compare against both multi-step and few-step baselines. For multi-step baselines, we compare EFlow against the fixed-length flow language model (FLM) \citep{lee2026flow} at budgets from $64$ to $1024$ steps, evaluated by running their publicly released checkpoints. For the few-step baselines, we compare EFM at $1$, $2$, and $4$ steps against distilled diffusion and flow-map baselines, including Duo distilled with DCD \citep{sahoo2025diffusion}, MDLM with SDTT \citep{deschenaux2025beyond}, both Duo and MDLM distilled with Di4C, categorical flow maps (CFM) \citep{roos2026categorical}, and flow map language models (FMLM) \citep{lee2026flow}, with the numbers reported in \citet{lee2026flow}.

\paragraph{Evaluation}
For each step count, we generate $1024$ sequences under the EMA weights using Euler integration of the insertion-denoising process, take the argmax of the final vectors, decode with the \texttt{bert-base-uncased} tokenizer, and re-encode with the GPT-2 tokenizer. Then, we compute generative perplexity under GPT-2 Large \citep{radford2019language}. We additionally compute the sample entropy as the empirical token-frequency entropy per sample.

\clearpage
\section{Algorithms}
\label{app:algorithms}

\begin{algorithm}[h!]
\caption{Training \textbf{Discrete Expanding Generative Flows} (EFlows)}\label{alg:training-eflow}
    \begin{algorithmic}[1]
        \State \textbf{Input:} Dataset $\mathcal{D}$, insertion schedule $\alpha$, time schedule $t(\cdot)$, initialized networks $\hat{D}, \hat{\mathcal{I}}$, insertion loss weight $\lambda_{\text{insert}}$, maximum length $L$
        \While{not converged}
        \State $\boldsymbol{x}_1\sim \mathcal{D}, \boldsymbol{x}_0\sim \mathcal{N}(\boldsymbol{0}, \boldsymbol{I}_{d(0)})$
        \State $\tau\sim\mathcal{U}(0,1)$, $t\gets t(\tau)$\Comment{sample global time via time schedule}
        \For{$i$ in $1, \dots, L$}
        \State $t^{\text{ins}}_i\sim\dot{\alpha}_sds$\Comment{sample per-token insertion time}
        \State $t_i\gets \max\!\left(0, \tfrac{t-t^{\text{ins}}_i}{1-t^{\text{ins}}_i}\right)$\Comment{local time induced by global $t$}
        \State $\boldsymbol{x}^i_{t_i}\gets (1-t_i)\boldsymbol{x}^i_0+t_i\boldsymbol{x}^i_1$\Comment{sample expanding interpolant}
        \EndFor
        \State $\boldsymbol{x}_t\gets(\boldsymbol{x}^i_{t_i}: t^{\text{ins}}_i\leq t)$\Comment{active subsequence, $|\boldsymbol{x}_t|=d(t)$}
        \State $\hat{\bx}_1\gets \hat D_t(\bx_t)$
        \State $\mathcal{L}_{\text{CE}}\gets-\sum_{i: t^{\text{ins}}_i\leq t}\bx_1^i\cdot\log  \hat{\bx}_1^i$
        \State $\mathcal{L}_{\text{insert}}\gets\frac{\dot{\alpha}_t}{1-\alpha_t}\sum_{i=0}^{d(t)}\phi(\text{ind}_i (t) -\text{ind}_{i- 1}(t)-1, \hat{\mathcal{I}}_t(\boldsymbol{x}_t))$
        \State $\mathcal{L}_{\text{total}}\gets\mathcal{L}_{\text{CE}}+\lambda_{\text{insert}}\mathcal{L}_{\text{insert}}$
        \State Update $\hat{\mathcal{I}}, \hat{D}$ using $\nabla\mathcal{L}_{\text{total}}$
        \EndWhile
        \State \textbf{return} $\hat{\mathcal{I}}$, $\hat{D}$
    \end{algorithmic}
\end{algorithm}

\begin{algorithm}[h!]
\caption{Training \textbf{Discrete Expanding Flow Maps} (EFMs)}\label{alg:training-efm}
    \begin{algorithmic}[1]
        \State \textbf{Input:} Dataset $\mathcal{D}$, insertion schedule $\alpha$, time schedule $t(\cdot)$, initialized networks $\hat{\psi}, \hat{\mathcal{I}}$, insertion loss weight $\lambda_{\text{insert}}$, maximum length $L$
        \While{not converged}
        \State $\boldsymbol{x}_1\sim \mathcal{D}, \boldsymbol{x}_0\sim \mathcal{N}(\boldsymbol{0}, \boldsymbol{I}_{d(0)})$
        \State $\tau_s,\tau_t \sim \mathcal{U}(0,1)$ with $\tau_s\leq\tau_t$, \; $\tau_u\gets\tfrac{1}{2}(\tau_s+\tau_t)$\Comment{ordered global times}
        \State $s,u,t\gets t(\tau_s), t(\tau_u), t(\tau_t)$\Comment{map through the time schedule}
        \For{$i$ in $1, \dots, L$}
        \State $t^{\text{ins}}_i\sim\dot{\alpha}_sds$\Comment{sample per-token insertion time}
        \State $s_i\gets \max\!\left(0, \tfrac{s-t^{\text{ins}}_i}{1-t^{\text{ins}}_i}\right)$\Comment{local time induced by global $s$}
        \State $\boldsymbol{x}^i_{s_i}\gets (1-s_i)\boldsymbol{x}^i_0+s_i\boldsymbol{x}^i_1$\Comment{sample expanding interpolant}
        \EndFor
        \State $\boldsymbol{x}_s\gets(\boldsymbol{x}^i_{s_i}: t^{\text{ins}}_i\leq s)$\Comment{active subsequence, $|\boldsymbol{x}_s|=d(s)$}
        \State $\boldsymbol{x}_s^\epsilon\gets(\boldsymbol{x}^i_{s_i}: t^{\text{ins}}_i\leq t)$\Comment{expand once to $d(t)$ with gaps over $(s,t]$ as noise ($s_i{=}0$)}
        \State $\omega_{s,u,t}\gets\tfrac{(u-s)(1-t)}{(t-s)(1-u)}$
        \State $\hat{\Phi}^{(t)}_{s,u}(\boldsymbol{x}_s^\epsilon)\gets\tfrac{1-u}{1-s}\boldsymbol{x}_s^\epsilon+\tfrac{u-s}{1-s}\hat{\psi}^{(t)}_{s,u}(\boldsymbol{x}_s^\epsilon)$\Comment{flow map over $[s,u]$ at dimension $d(t)$}
        \State $\bar{\psi}_{s,t}(\boldsymbol{x}_s^\epsilon)\gets\text{sg}\left[\omega_{s,u,t}\hat{\psi}^{(t)}_{s,u}(\boldsymbol{x}_s^\epsilon)+(1-\omega_{s,u,t})\hat{\psi}_{u,t}\left(\hat{\Phi}^{(t)}_{s,u}(\boldsymbol{x}_s^\epsilon)\right)\right]$
        \For{$i$ in $0, \dots, d(s)$}
        \State $\mathcal{I}_{s}(\boldsymbol{x}_s)[i]\gets \mathbb{E}[\text{ind}_i (s) -\text{ind}_{i- 1}(s)-1\mid\boldsymbol{x}_s]$\Comment{per-gap missing count}
        \State $\mathcal{I}_{s,t}(\boldsymbol{x}_s)[i]\gets \tfrac{\alpha_t-\alpha_s}{1-\alpha_s} \mathcal{I}_s(\boldsymbol{x}_s)[i]$\Comment{expected insertions over $(s,t]$}
        \EndFor
        \State $\mathcal{L}_{\text{insert}}\gets\sum_{i=0}^{d(s)}\phi\!\left(\mathcal{I}_{s,t}(\boldsymbol{x}_s)[i],  \hat{\mathcal{I}}_{s,t}(\boldsymbol{x}_s)[i]\right)$
        \State $\mathcal{L}_{\text{DEFM}}(\hat{\mathcal{E}}, \hat{\psi})\gets\sum_{i=1}^{d(t)}\bigg[-(\bar{\psi}_{s,t}\cdot \log \hat{\psi}_{s,t})(\boldsymbol{x}_s^\epsilon)_i-(D_t\cdot \log \hat{\psi}_{t,t})(\boldsymbol{x}_t^i)\bigg]$
        \State $\mathcal{L}_{\text{total}}\gets \mathcal{L}_{\text{DEFM}}+\lambda_{\text{insert}}\mathcal{L}_{\text{insert}}$
        \State Update $\hat\psi, \hat{\mathcal{I}}$ using $\nabla \mathcal{L}_{\text{total}}$
        \EndWhile
        \State \textbf{return} $\hat \psi$, $\hat{\mathcal{I}}$
    \end{algorithmic}
\end{algorithm}

\begin{algorithm}[h!]
\caption{\texttt{ExpandOperator}: Insert tokens between tokens at parameterized rate}\label{alg:expand-operator}
    \begin{algorithmic}[1]
        \State \textbf{Input:} Intermediate subsequence $\bx_s$, global initial time $s$, global jump time $t$, maximum length $L$
        \State $n_s\gets \texttt{length}(\bx_s)$\Comment{get number of inserted tokens}
        \State $\hat{\mathcal{I}}_{s}(\bx_{s})\gets\text{InsertHead}(\bx_{s})$
        \State $\ell_i \sim \text{Binomial}\left(L - n_s,\frac{\hat{\mathcal I}_s(\bx_s)[i]}{L - n_s}\right)$\Comment{sample number of tokens to insert at each gap}
        \State $\boldsymbol{x}_0^{\ell_i}\sim \mathcal{N}(\boldsymbol{0}, \sigma^2\boldsymbol{I}_{\ell_i})$\Comment{sample noisy tokens}
        \State $\bx_{s}^\epsilon\gets \left[\bigoplus_{i=1}^{d(s)}\big(\bx_0^{\ell_i}\oplus\bx_{s_i}^i\big)\right]\oplus\bx_0^{\ell_{d(s)+1}}$\Comment{insert noisy tokens into sequence}
        \State \textbf{return} $\bx_{s}^\epsilon$
    \end{algorithmic}
\end{algorithm}

\begin{algorithm}[h!]
\caption{Sampling \textbf{Discrete Expanding Flow Maps} (EFMs)}\label{alg:sampling-efm}
    \begin{algorithmic}[1]
        \State \textbf{Input:} Trained expanding operator $\hat{\mathcal{E}}$, mean denoiser $\hat\psi$, time grid $\{t_k\}_{k=1}^K$, maximum length $L$
        \State $\bx_0\gets \text{(empty)}$
        \State $\bx_0\gets\hat{\mathcal{E}}_{0,t_{\text{denoise-min}}}(\bx_0)$
        \State $\bx_{t_1}\gets \hat \psi_{t_{\text{denoise-min}},t_1}(\bx_0)$\Comment{insert from minimum time}
        \For{$k$ in $1, \dots, K-1$}
        \State $\bx_{t_{k}}^\epsilon\gets \texttt{ExpandOperator}(\bx_{t_k},t_k,t_{k+1}, L)$\Comment{insert noisy tokens into sequence}
        \State $\bx_{t_{k+1}}\gets \hat\psi_{t_k,t_{k+1}}(\bx_{t_{k}}^\epsilon)$
        \EndFor
        \State $\bx_1\gets \arg\max(\bx_{t_K})$\Comment{sample one-hot argmax tokens}
        \State \textbf{return} $\bx_1$
    \end{algorithmic}
\end{algorithm}

\clearpage

\section{Example Generations}
\label{sec:example_generations}
\begin{figure}[h!]
\centering
\begin{figurepanel}

\noindent
\begin{minipage}[t]{0.485\linewidth}
\nfeheader{steps = 64}

\vspace{4pt}

\begin{samplebox}
\noindent
\begin{minipage}[t]{0.56\linewidth}
\methodtitle{EFlow (Ours)}
\end{minipage}\hfill
\begin{minipage}[t]{0.40\linewidth}
\raggedleft \metrics{54.92}{4.04}
\end{minipage}

\vspace{4pt}
\noindent
[CLS] the side. [SEP] i'm guessing that it is are based on a and point. [SEP] he called the old life,'bo the, of andal'to mr. [SEP] and his father. [SEP] the hon in given patients to an does not or a test. [SEP] i was not to for her. [SEP] by way, it is not to be a, but available to be funny as well - - just as a party. [SEP] i am waiting for it to happen for each firm. [SEP] craig ward made 30 saves for the defending st louisny and antoine vermo on its 13out, including six - [SEP]
\end{samplebox}

\vspace{4pt}

\nfeheader{steps = 264}

\vspace{4pt}

\begin{samplebox}
\noindent
\begin{minipage}[t]{0.56\linewidth}
\methodtitle{EFlow (Ours)}
\end{minipage}\hfill
\begin{minipage}[t]{0.40\linewidth}
\raggedleft \metrics{41.29}{4.09}
\end{minipage}

\vspace{4pt}
\noindent
[CLS] spent 15 years in prison at a \$ 1 million military camp in afghanistan. [SEP] there's a face on it, we can't look like our else. [SEP] he never lost the great of the before. [SEP] court's bemourned her on tuesday. [SEP] two teenagers have died from a serious wound in hospital. [SEP] the military alone was not enough to force the points. [SEP] adjusted for inflation, m. a. n. m. was up 4. 2 percent, but is up slightly. [SEP] " it is a job we have agreed for as well as with the new - community policing group. [SEP] [SEP]
\end{samplebox}

\end{minipage}
\hfill
\begin{minipage}[t]{0.485\linewidth}

\nfeheader{steps = 128}

\vspace{4pt}

\begin{samplebox}
\noindent
\begin{minipage}[t]{0.56\linewidth}
\methodtitle{EFlow}
\end{minipage}\hfill
\begin{minipage}[t]{0.40\linewidth}
\raggedleft \metrics{54.19}{4.21}
\end{minipage}

\vspace{4pt}
\noindent
[CLS] is scheduled to be monday. [SEP] they are appealing against two key yesterday. [SEP] i'm never sure how much i mean. [SEP] he said, " a friend has come a little down and has been making the most into again. " [SEP] by the time, last week, some 5, 000 tamil there were living in the, which on have year, an even the s at found. [SEP] the price of gasoline was 2. 7 \% compared with the previous year. [SEP] the next morning mr cameron said that he would not be able to give london to for the expense of defence. [SEP] they study the verys so that [SEP]
\end{samplebox}

\vspace{4pt}

\nfeheader{steps = 512}

\vspace{4pt}

\begin{samplebox}
\noindent
\begin{minipage}[t]{0.56\linewidth}
\methodtitle{EFlow (Ours)}
\end{minipage}\hfill
\begin{minipage}[t]{0.40\linewidth}
\raggedleft \metrics{49.89}{4.19}
\end{minipage}

\vspace{4pt}
\noindent
[CLS] - hitter, will win against the wall. [SEP] no wonder it was right? [SEP] the wall street agency said the biggest threat to the financial system is from individuals and the, caused of over \$ 3 million in assets. [SEP] the public library at the university is having a lowdown on film. [SEP] japan's olympic committee will meet again on monday. [SEP] it is a difficult ( and sometimes expensive ) decision to make with us, but it also takes time and effort to do with the usual for ofs. [SEP] of the world's three countries with low - on economies face up to 22pc of gdp, most has [SEP]
\end{samplebox}
\end{minipage}

\end{figurepanel}
\caption{Example LM1B generations from the EFlow teacher at $64$, $128$, $264$, and $512$ sampling steps. Generative perplexity (GPT-2-Large) and entropy are reported for the individual sample shown.}
\label{fig:eflow_lm1b_samples}
\end{figure}

\begin{figure}[t]
\centering
\begin{figurepanel}

\noindent
\begin{minipage}[t]{0.485\linewidth}
\nfeheader{steps = 1}

\vspace{8pt}

\begin{samplebox}
\noindent
\begin{minipage}[t]{0.56\linewidth}
\methodtitle{EFM (Ours)}
\end{minipage}\hfill
\begin{minipage}[t]{0.40\linewidth}
\raggedleft \metrics{47.8}{3.72}
\end{minipage}

\vspace{4pt}
\noindent
[CLS] the into being us. [CLS] it [CLS] for a the state. [CLS] more than a state. [CLS] after, we are. [CLS] the first time off the president and the who would - at least [CLS] : and after days - percent of, when would be their an, of, go to to off about in two about to to a about more than their second into [CLS]t they to more, when it to an into the deal. [CLS] from was the, [CLS].'over the are of you two. on make two being., who now have to to another,, one of the of its with from if alls [CLS]
\end{samplebox}

\vspace{8pt}

\nfeheader{steps = 4}

\vspace{8pt}

\begin{samplebox}
\noindent
\begin{minipage}[t]{0.56\linewidth}
\methodtitle{EFM (Ours)}
\end{minipage}\hfill
\begin{minipage}[t]{0.40\linewidth}
\raggedleft \metrics{55.2}{4.12}
\end{minipage}

\vspace{4pt}
\noindent
[CLS] second just to [CLS] his own city, while it's on to three - world countries such as the united states, too left for this year. [CLS] more than i do later has to me been asp. [CLS] i don't want three call he 24 to be [CLS], " she said. [CLS] he then can at least her once again again for two days. [CLS] but. i go at the university. [CLS] she was not [CLS] next week or there's already ever later [CLS]. [CLS], [CLS] 6 ( upi ) - - - they are another lead to the - john in an a through last - like [CLS]
\end{samplebox}

\end{minipage}
\hfill
\begin{minipage}[t]{0.485\linewidth}
\nfeheader{steps = 2}

\vspace{8pt}

\begin{samplebox}
\noindent
\begin{minipage}[t]{0.56\linewidth}
\methodtitle{EFM (Ours)}
\end{minipage}\hfill
\begin{minipage}[t]{0.40\linewidth}
\raggedleft \metrics{60.8}{4.01}
\end{minipage}

\vspace{4pt}
\noindent
[CLS] country. [CLS] but well it will not see after all in the high school of, tenn. [CLS] the most important of his first second in three years which has left over more on u. s. house. [CLS] it's do, and that's a good country in city high points. [CLS] well, one week, as well as he's trying to be up by half two points. [CLS] the 23 - year - old's house, they are more important is the country's own win, and when they are once in the game house because he is to work for the most next, but an [CLS]
\end{samplebox}

\vspace{8pt}

\nfeheader{steps = 8}

\vspace{8pt}

\begin{samplebox}
\noindent
\begin{minipage}[t]{0.56\linewidth}
\methodtitle{EFM (Ours)}
\end{minipage}\hfill
\begin{minipage}[t]{0.40\linewidth}
\raggedleft \metrics{44.2}{4.03}
\end{minipage}

\vspace{4pt}
\noindent
[CLS] every year. [CLS] he was one after his week's only, has been on top of the world. [CLS] he had his team on the close, at least in this week. [CLS] " i have people, who do not think it's close to keep up to be it in party, " the said. [CLS] a last week's. man of years last week, first four of the children - were made on tuesday with me. [CLS] that's not all. [CLS] but there have been four al - qaeda, which don't the right four two go to the country. they million ifs of these [CLS]
\end{samplebox}
\end{minipage}

\end{figurepanel}
\caption{Example LM1B generations from the EFM teacher at $1$, $2$, $4$, and $8$ sampling steps. Generative perplexity (GPT-2-Large) and entropy are reported for the individual sample shown.}
\label{fig:efm_lm1b_samples}
\end{figure}

\begin{figure}
    \centering
    \includegraphics[width=\linewidth]{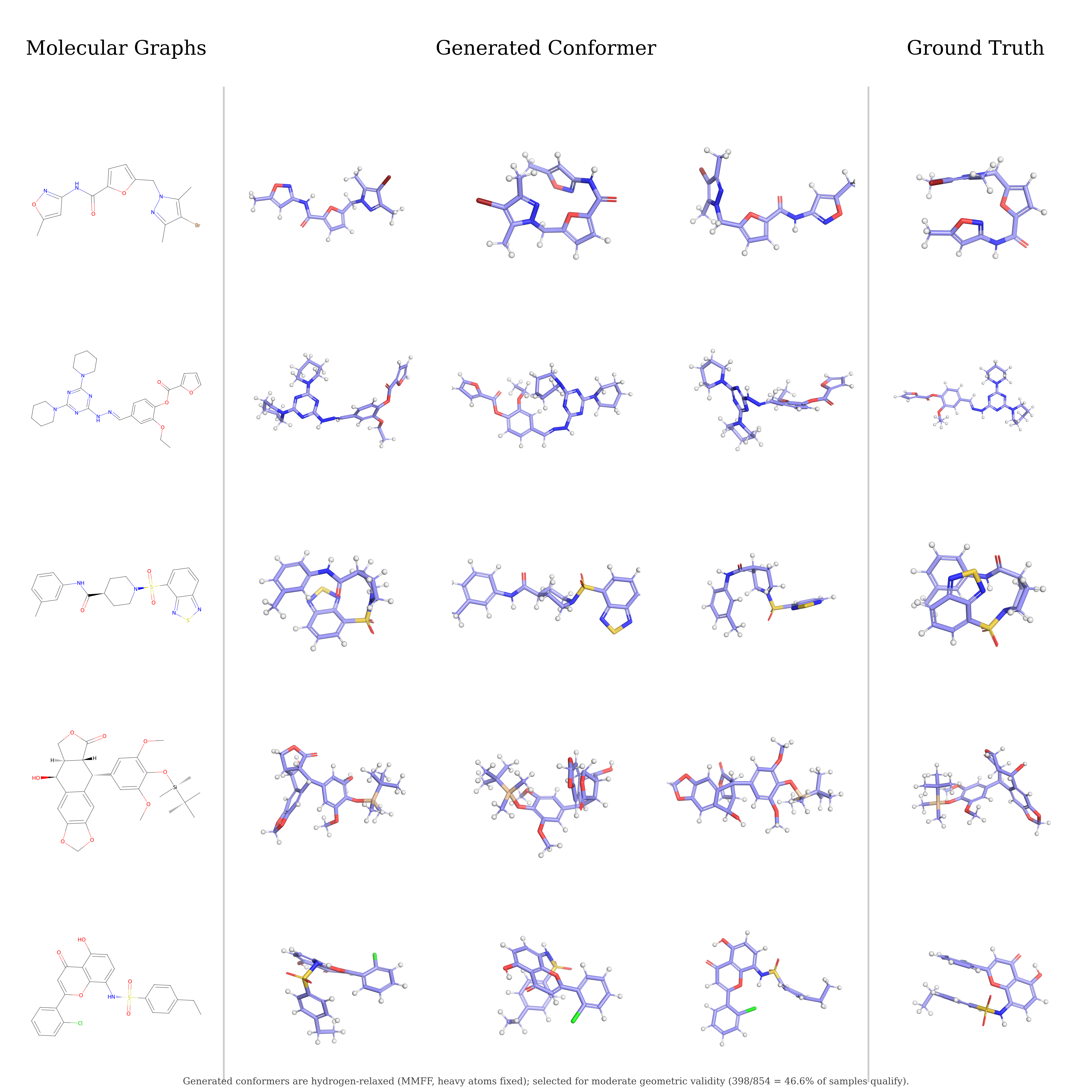}
    \caption{Molecular graph, three generated conformers with EFlow+R (30 steps), and ground-truth sample for 5 test molecules in GEOM-Drugs, visualized with PyMol \citep{delano2002pymol}.}
    \label{fig:eflow_drugs_grid}
\end{figure}

\begin{figure}
    \centering
    \includegraphics[width=\linewidth]{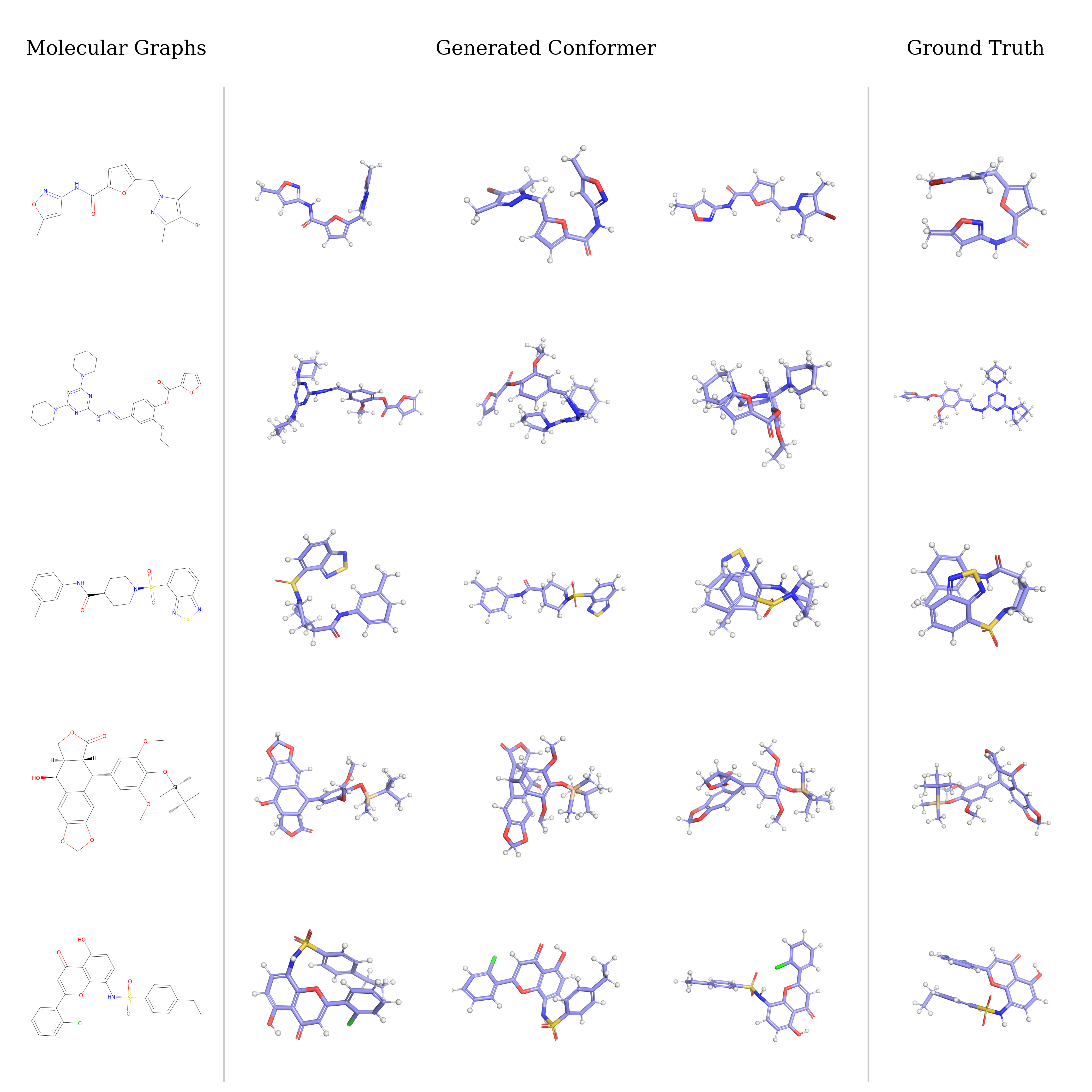}
    \caption{Molecular graph, three generated conformers with EFM+R (6 steps), and ground-truth sample for 5 test molecules in GEOM-Drugs, visualized with PyMol \citep{delano2002pymol}.}
    \label{fig:efm_drugs_grid}
\end{figure}

\end{document}